\newtheorem{lemma}{Lemma}%[section] %%    with section number.
\newtheorem{definition}{Definition}%[section]
\newtheorem{theorem}{Theorem}
\theoremstyle{remark}
\newcommand{\e}{\begin{equation}}
\newcommand{\ee}{\end{equation}}
\newcommand{\en}{\begin{equation*}}
\newcommand{\een}{\end{equation*}}
\newcommand{\eqn}{\begin{eqnarray}}
\newcommand{\eeqn}{\end{eqnarray}}
\newcommand{\bmat}{\begin{bmatrix}}
\newcommand{\emat}{\end{bmatrix}}
\DeclareMathAlphabet\mathbfcal{OMS}{cmsy}{b}{n}
\newcommand{ \Brac }[1]{\left\lbrace #1 \right\rbrace}
\newcommand{ \paren }[1]{ \left( #1 \right) }
\newcommand{\mb}{\bm}
\newcommand{\mc}{\mathcal}
\newcommand{\bb}{\mathbb}
\newcommand{\ol}{\overline}
\newcommand{\norm}[2]{\left\| #1 \right\|_{#2}}
\newcommand{\calL}{\mathcal{L}}
\newcommand{\NC}{$\mc {NC}$}
\newlength{\imgwidth}
\newcommand{\twoCol}[2]{\ifthenelse{\boolean{twoColVersion}} {#1} {#2} }
\long\def\comment#1{}
\newcommand{\bbb}{\boldsymbol{b}}
\newcommand{\bh}{\boldsymbol{h}}
\newcommand{\bu}{\boldsymbol{u}}
\newcommand{\bv}{\boldsymbol{v}}
\newcommand{\bw}{\boldsymbol{w}}
\newcommand{\by}{\boldsymbol{y}}
\newcommand{\bz}{\boldsymbol{z}}
\newcommand{\bD}{\boldsymbol{D}}
\newcommand{\bH}{\boldsymbol{H}}
\newcommand{\bM}{\boldsymbol{M}}
\newcommand{\bP}{\boldsymbol{P}}
\newcommand{\bU}{\boldsymbol{U}}
\newcommand{\bW}{\boldsymbol{W}}
\newcommand{\bY}{\boldsymbol{Y}}
\newcommand{\bZ}{\boldsymbol{Z}}
\long\def\red#1{\bgroup\color{red}#1\egroup}
\definecolor{mich-blue}{HTML}{0027CC}
\definecolor{mich-blue-high}{HTML}{0027CC}
\definecolor{red-high}{HTML}{CA2020}
\definecolor{green-high}{HTML}{20A520}
\definecolor{mich-maize}{HTML}{FFCB05}
\definecolor{law-stone}{HTML}{655A52}
\definecolor{burton-beige}{HTML}{9B9A9D}
\definecolor{arch-ivy}{HTML}{7E732F}
 \colorlet{color1}{gray!15}
\newcommand{\MLab}{\texttt{M-lab}}
\newcommand{\MClf}{\texttt{M-clf}}
\newcommand\numberthis{\addtocounter{equation}{1}\tag{\theequation}}
\title{Neural Collapse in Multi-label Learning with \\  Pick-all-label Loss}
\author{Pengyu Li\thanks{The first two authors contributed equally to the work.}${}^{\hphantom{*},1}$, Xiao Li${}^{*, 1}$, Yutong Wang${}^{1,2}$, Qing Qu${}^{1,2}$
\\ 
\({}^{1}\)Department of Electrical Engineering \& Computer Science\\
\({}^{2}\)Michigan Institute for Data Science\\ University of Michigan}
\begin{document}

\maketitle

\begin{abstract}
We study deep neural networks for the multi-label classification (\MLab) task through the lens of neural collapse (NC).  Previous works have been restricted to the multi-class classification setting and discovered a prevalent NC phenomenon comprising of the following properties for the last-layer features:  (i) the variability of features within every class collapses to zero, (ii) the set of feature means form an equi-angular tight frame (ETF), and (iii) the last layer classifiers collapse to the feature mean upon some scaling. We generalize the study to multi-label learning, and prove for the first time that a generalized NC phenomenon holds with the ``pick-all-label'' formulation, which we term as \MLab~NC. While the ETF geometry remains consistent for features with a single label, multi-label scenarios introduce a unique combinatorial aspect we term the "tag-wise average" property, where the means of features with multiple labels are the scaled averages of means for single-label instances. Theoretically, under proper assumptions on the features, we establish that the only global optimizer of the pick-all-label cross-entropy loss satisfy the multi-label NC.
In practice, we demonstrate that our findings can lead to better test performance with more efficient training techniques for \MLab~learning.

\end{abstract}

\tableofcontents

\section{Introduction}\label{sec:intro}

In recent years, deep learning showed tremendous success in classifying problems \cite{lecun2015deep}, thanks in part to its ability to extract salient features from data \cite{bengio2013representation}.
% The phenomenon of neural collapse has proven to be a useful theoretical framework in gaining a better understanding of the features extracted by deep learning models and improving their performances under challenging settings such as highly imbalanced data \cite{thrampoulidis2022imbalance}.
While the success extends to \textit{multi-label} (\MLab) \emph{classification}, the structures of the learned features in the \MLab~regime is less well-understood.
This work aims to fill this gap by understanding the geometric structures of features for \MLab~learned via deep neural networks and utilize the structure for better training and prediction.

%\qq{yutong can add more here} 
Recently, an intriguing phenomenon has been observed in the terminal phase of training overparameterized deep networks for the task of \textit{multi-class} (\MClf) \emph{classification} in which the last-layer features and classifiers collapse to simple but elegant mathematical structures: all training inputs are mapped to class-specific points in feature space, and the last-layer classifier converges to the dual of class means of the features while attaining the maximum possible margin with a simplex equiangular tight frame (Simplex ETF) structure  \cite{papyan2020prevalence}.  See the top row of \Cref{fig:multi-label-illustration} for an illustration. This phenomenon, termed \emph{Neural Collapse} (NC), persists across a variety of different network architectures, datasets, and even the choices of losses \cite{han2022neural,zhou2022all,zhou2022optimization,yaras2022neural}. %\YW{what does problem formulation mean? not clear. the choice of loss?}. 
The NC phenomenon has been widely observed and analyzed theoretically \cite{papyan2020prevalence,fang2021exploring,zhu2021geometric} in the context of \MClf~learning problems. It is applied to understand transfer learning \cite{galanti2022role,li2022principled}, and robustness \cite{papyan2020prevalence,ji2022unconstrained}, where the line of study has significantly advanced our understanding of representation structures for \MClf~using deep networks.

\begin{figure*}
    \centering
    \includegraphics[width=0.7\textwidth]{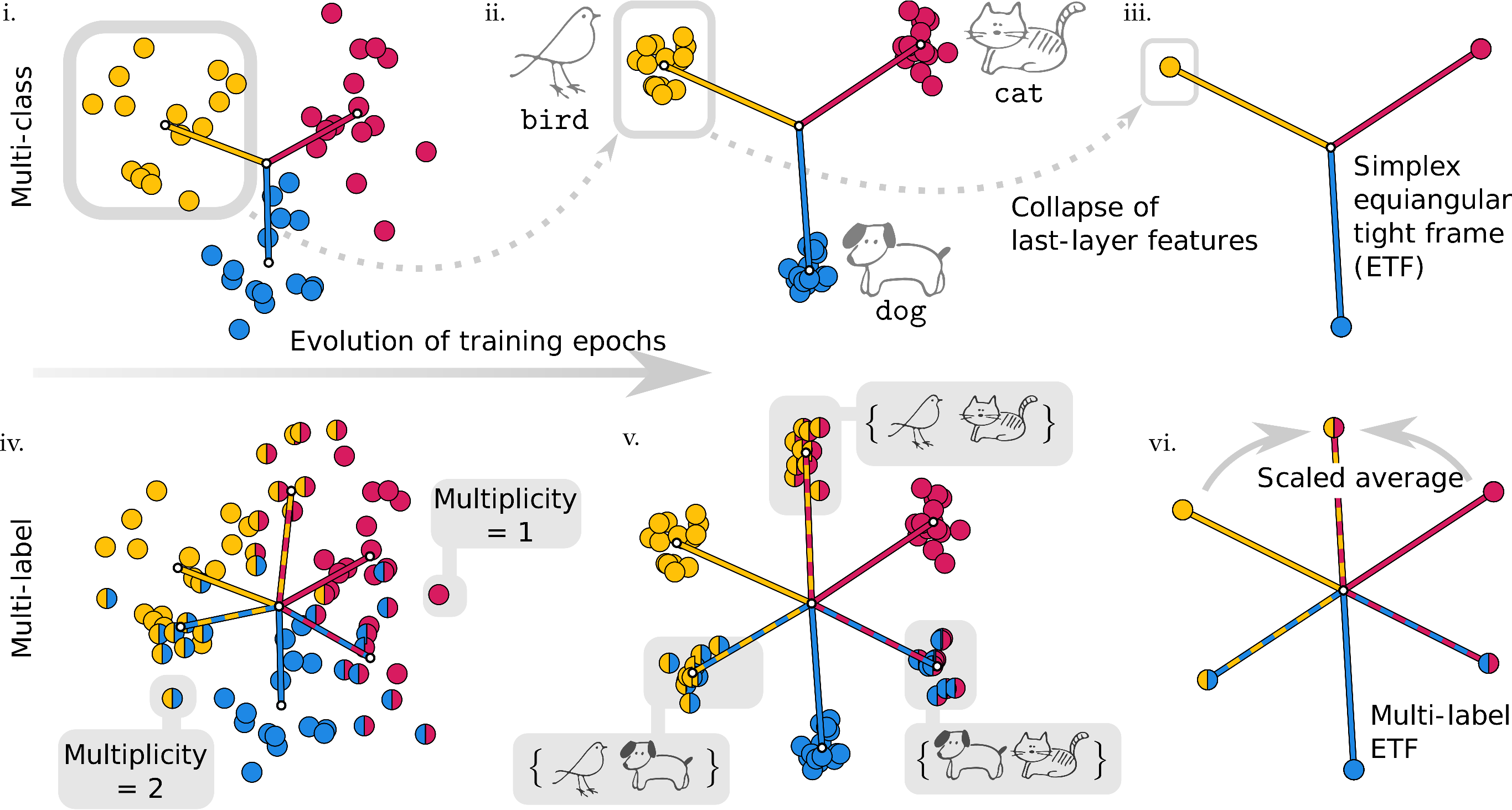}
    \caption{\small \textbf{An illustration of neural collapse for \MClf~(top row) vs. \MLab~(bottom row) learning}. 
    % \qq{Here, as the first figure, maybe we should show something that compares NC between \MLab vs MlClf}
    For illustrative purposes, we consider a simple setting with the number of classes \(K = 3\). The individual panels are scatterplots showing the top two singular vectors of the last-layer features \(\mb{H}\) at the beginning (left) and end (right) stages of training. The solid (resp.\ dashed) line segments represent the mean of the multiplicity \(=1\) (resp.\ \(=2\)) features with the same labels. 
    \emph{Panel i-iii}. As the training progresses, the last-layer features of samples corresponding a single label, e.g., $\mathtt{bird}$, collapse tightly around its mean.
    \emph{Panel iv-vi}. The analogous phenomenon holds in the multi-label setting.
\emph{Panel iv}.\ A training sample has multiplicity \(=1\) (resp.\ \(=2\)) if it has one tag (resp. two tags). 
\emph{Panel vi}.\ At the end stage of training, the feature mean of Multiplicity-2  
\(\{\mathtt{bird},\, \mathtt{cat}\}\)
is a scaled tag-wise average of feature means of its associated multiplicity-$1$ samples, i.e., 
\(\{\mathtt{bird}\}\) and \(\{\mathtt{cat}\}\).}
    \label{fig:multi-label-illustration}
 \vspace{-0.4cm}
\end{figure*}

\paragraph{Our contributions.} 

We demonstrate a general version of the NC phenomenon in \MLab, and our study provides new insights into prediction and training for the \MLab~problem. Specifically, our contributions can be highlighted as follows.

\begin{itemize}[leftmargin=*]
    % \item \textbf{Multi-label neural collapse in real world multilabel learning.}  
     \item \textbf{Multi-label neural collapse phenomenon.}  We show that the last-layer features and classifier learned via overparameterized deep networks exhibit a more general version of NC which we term it as \emph{multi-label neural collapse} (\MLab~NC). 
     Specifically, while features linked to single-label instances retain a Simplex ETF configuration and undergo collapse, the more complex features with higher label counts intriguingly represent a scaled "tag-wise average" of their single-label counterparts; see the bottom row of \Cref{fig:multi-label-illustration} for an illustration. This new pattern, referred to as \emph{multi-label ETF}, is consistently observed in the training of practical neural networks for \MLab~tasks.
    \vspace{-0.1in}
    \item \textbf{Global optimality of \MLab~NC.} Theoretically, we study the global optimality of \MLab~NC based upon a commonly used pick-all-label loss for \MLab~learning. By treating the last-layer feature as free optimization variables \cite{fang2021exploring,zhu2021geometric}, we show that all global solutions exhibit the properties of \MLab~NC with benign global landscape. Moreover, we show that multi-label ETF only requires balanced training samples in each class within \emph{the same} multiplicity, and \emph{allows class imbalanced-ness} across different multiplicities.\footnote{We also empirically demonstrate that \MLab~NC  occurs even when only multiplicity-1 data are balanced (see \Cref{fig:NC_High_imba_c10_Mnist} for an illustration).}
\vspace{-0.1in}
    \item \textbf{\MLab~NC guided prediction and training.} In practice, we show that our findings lead to improved prediction and training for \MLab~learning. For prediction, we propose a new one-nearest-neighbor (ONN) approach: for the given test sample, we assign its label to the tag of the nearest neighboring multi-label ETF in the feature space. Compared to the classical one-vs-all (OvA) approach for \MLab~learning, ONN is much more efficient with higher test accuracy. For training, by fixing the last later to be ETF and reducing the feature dimension, our experimental results demonstrate that we can sufficiently reduce parameters without compromising overall performance.
\end{itemize}

%The features vector from the penultimate layer in a \MLab problem exhibits an equi-angular tight frame (ETF) structure as well as convex combination of the ETF.
%This geometry is an generalization of the ETF from the analysis of NC in the \MClf literature.
%We also see phenomenon that is unique to the \MLab setting that does not show up in the \MClf setting.

%\subsection{Relationship to Prior Arts}

%\qq{needs to condense below, we could only discuss the works most related to ours, and leave all other less related works to the appendices}
%\yw{done}

\paragraph{Related works on multi-label learning.}

In contrast to \MClf, where each sample has a single label, in \MLab~the samples are tagged with multiple labels. As such, the final model output must be set-valued. This presents theoretical and practical challenges unique to the regime of \MLab~learning, especially when the class number is large \cite{liu2021emerging}. For \MLab~learning, a commonly employed strategy is to decompose a given multi-label problem into several binary classification tasks \cite{menon2019multilabel}.  The ``one-versus-all'' (OvA) method, also known as \emph{binary relevance} (BR), involves splitting the task into several binary classification ``subtasks''. Each of these subtasks requires  training an separate binary classifier,  one for each label \cite{moyano2018review, brinker2006unified}. During testing, thresholding is used to convert the (real-valued) outputs of the model (i.e., the logits) into tags.

Although BR is simple to implement and performs well, a notable limitation is the lack of handling of dependency between labels \cite{cheng2010bayes}. The Pick-All-Label (PAL) approach  \cite{reddi2019stochastic, menon2019multilabel} can formulates the multi-label classification task in an ``all-in-one'' manner. Compared to OvA, the PAL approach has the advantage of not needing to train separate classifiers. However, its disadvantage is that during prediction, there is no natural thresholding rule for converting the logits to tags. In this work, we deal with this challenge by proposing the ONN technique as a principled approach, guided by our geometric analysis of \MLab, for performing prediction. 

To the best of our knowledge, no work has previously analyzed the geometric structure in multi-label deep learning. Our work closes this gap by providing a generalization of the NC phenomenon to \MLab~learning, and furthermore develops efficient prediction and training technique via our theoretical understandings.

\paragraph{Related works on neural collapse.} 

The phenomenon known as NC was initially identified in recent groundbreaking research \cite{papyan2020prevalence,han2022neural} conducted on \MClf. These studies provided empirical evidence demonstrating the prevalence of NC across various network architectures and datasets. The significance of NC lies in its elegant mathematical characterization of learned representations or features in deep learning models for \MClf. Notably, this characterization is independent of network architectures, dataset properties, and optimization algorithms, as also highlighted in a recent review paper \cite{kothapalli2023neural}. Subsequent investigations, building upon the "unconstrained feature model" \cite{mixon2020neural} or the "layer-peeled model" \cite{fang2021exploring}, have contributed theoretical evidence supporting the existence of NC. This evidence pertains to the utilization of a range of loss functions, including cross-entropy (CE) loss \cite{lu2020neural,zhu2021geometric,fang2021exploring,yaras2022neural}, mean-square-error (MSE) loss \cite{mixon2020neural,zhou2022optimization,tirer2022extended,rangamani2022neural,wang2022linear,dang2023neural}, and CE variants \cite{graf2021dissecting,zhou2022all}. More recent studies have explored other theoretical aspects of NC, such as its relationship with generalization \cite{hui2022limitations,galanti2022role,galanti2022generalization,galanti2022note,chen2022perfectly}, its applicability to large classes \cite{liu2023generalizing,gao2023study,jiang2023generalized}, and the progressive collapse of feature variability across intermediate network layers \cite{hui2022limitations,papyan2020traces,he2022law,yaras2023law,rangamani2023feature}. Theoretical findings related to NC have also inspired the development of new techniques to improve practical performance in various scenarios, including the design of loss functions and architectures \cite{yu2020learning,zhu2021geometric,chan2022redunet}, transfer learning \cite{li2022principled, xie2022hidden, wang2023far} where a model trained on one task or dataset is adapted or fine-tuned to perform a different but related task, imbalanced learning \cite{fang2021exploring,xie2023neural,lu2022importance,yang2022inducing,thrampoulidis2022imbalance,behnia2023implicit,zhong2023understanding,sharma2023learning} which is a characteristic of dataset where one or more classes have significantly fewer instances compared to other classes, and continual learning \cite{yu2023continual,yang2023neural,zhai2023investigating} in which a model is designed to learn and adapt to new data continuously over time, rather than being trained on a fixed dataset.

\paragraph{Basic notations.}
Throughout the paper, we use bold lowercase and upper letters, such as $\mb a$ and $\mb A$, to denote vectors and matrices, respectively. Non-bold letters are reserved for scalars.  For any matrix $\mb A \in \bb R^{n_1 \times n_2}$, we write $\mb A = \begin{bmatrix} \mb a_1 & \dots & \mb a_{n_2} \end{bmatrix}$, so that $\mb a_i$ ($i \in \{1,\dots, n_2\}$) denotes the \(i\)-th column of $\mb A$.
Analogously,  we use the superscript notation to denote rows, i.e., \((\mb a^j)^\top\) is the \(j\)-th row of \(\mb A\) for each \(j \in \{1,\dots, n_1\}\) with \(\mb A^\top = \begin{bmatrix}
    \mb a^1 &\dots & \mb a^{n_1}
\end{bmatrix}\).
For an integer \(K > 0\), we use $\mb I_K$ to denote an identity matrix of size $K\times K$, and we use $\mb 1_K$ to denote an all-ones vector of length $K$.

\paragraph{Paper organization.} The rest of the paper is organized as follows. In \Cref{sec:problem}, we lay out the basic problem formulations. In \Cref{sec:result}, we present our main results and discuss the implications. In \Cref{sec:exp}, we verify our theoretical findings and demonstrate the practical implications of our result. Finally, we conclude in \Cref{sec:conclusion}. All the technical details are postponed to the Appendices.
For reproducible research, the code for this project can be found at 
\begin{center}
    \url{https://github.com/Heimine/NC_MLab}
\end{center}

%\qq{adding a paragraph on paper organization here}

%See \Cref{table:notations} in \Cref{sec:table-of-notations} for a list of notations used throughout.

\section{Problem Formulation}\label{sec:problem}
We start by reviewing the basic setup for training deep neural networks and later specialize to the problem of \MLab~ with $K$ number of classes.
Given a labelled training instance $(\mb x , \mb y)$, the goal is to learn the network parameter \(\mb \Theta\) to fit the input $\mb x$ to the corresponding training label $\mb y$ such that 
\begin{align*}\label{eq:func-NN}
   \mb y \; &\approx \; \psi_{\mb \Theta}(\mb x) \;=\;  \underset{ \text{\bf linear classifier}\;\mb W }{ \mb W_L} \;  \cdot \; \underset{\text{ \bf feature}\;\; \mb h\;=\; \phi_{\mb \theta}(\mb x)}{\sigma\paren{ \mb W_{L-1} \cdots \sigma \paren{\mb W_1 \mb x \mb  + \mb b_1} + \mb b_{L-1} }}  + \mb b_L, \numberthis
\end{align*}

where $\mb W = \mb W_L$ represents the last-layer linear classifier and $\mb h(\mb x)=\phi_{\mb \theta}(\mb x_{k,i})$ is a deep hierarchical representation (or feature) of the input $\mb x$. For a $L$-layer deep network $\psi_{\mb \Theta}(\mb x)$, each layer is composed of an affine transformation, followed by a nonlinear activation $\sigma(\cdot)$ (e.g., ReLU) and 
normalization (e.g., BatchNorm \cite{ioffe2015batch}).

\paragraph{Notations for multi-label dataset.} %\label{sec:data-notation}
Let $[K]:=\Brac{1,2,\dots,K}$ denote the set of labels. 
For each \(m \in [K]\), let \(\binom{[K]}{m} := \{ S \subseteq [K]: |S| = m\}\) denote the set of all subsets of \([K]\) with size \(m\). Throughout this work, we consider a fixed multi-label training dataset of the form $\Brac{\mb x_i,\mb y_{S_i}}_{i=1}^N$, where $N$ is the size of the training set and \(S_i\) is a nonempty proper subset of the labels. For instance,  \(S_i = \{\mathtt{cat}\}\) and \(S_{i'} = \{\mathtt{dog}, \mathtt{bird}\}\).
Each label $\mb y_{S_i} \in \bb R^K$ is a \emph{multi-hot-encoding} vector:
\begin{equation}
    \mbox{\(j\)-th entry of }\mb y_{S_i} 
    =
    \begin{cases}
        1 &: j \in S_i \\
        0 &: \mbox{otherwise}.
    \end{cases}
\end{equation}

The \emph{Multiplicity} of a training sample \((\mb x_i, \mb y_{S_i})\) is defined as the cardinality of \(S_i\), i.e., the number of labels or tags that is related to \(\mb x_i\). We refer to a feature learned for the sample \((\mb x_i, \mb y_{S_i})\) as the Multiplicity-m feature if \(|S_i| = m\). As such, with the abuse of notation, we also refer to such $\mb x_i$ as a Multiplicity-m sample and such $\mb y_{S_i}$ as a Multiplicity-m label, respectively. Note that a multiplicity-m label is a multi-hot label that can be decomposed as a summation of one-hot multiplicity-1 labels. For example 
\(S_{i'} = \{\mathtt{dog}, \mathtt{bird}\}\) has two tags $S_{j'} = \{\mathtt{dog}\}$ and $S_{k'} = \{\mathtt{bird}\}$, and then the corresponding $2$-hot label can be decomposed into the associated $1$-hot labels of $S_{j'} = \{\mathtt{dog}\}$ and $S_{k'} = \{\mathtt{bird}\}$. In this work, we show the relationship of labels can be generalized to study the relationship of the associated features trained via deep networks through \MLab~NC.

%consist of $m$ multiplicity-1 label component tags, i.e. \(S_{i'} = \{\mathtt{dog}, \mathtt{bird}\}\) has two tags $S_{j'} = \{\mathtt{dog}\}$ and $S_{k'} = \{\mathtt{bird}\}$. The component tag relationship of labels extends to learned features. One question we aim to answer with  would be how to mathematically describe such relationship between higher multiplicity features and its tagged component multiplicity-1 features.
 
 The Multiplicity-m feature matrix $\bH_m$ is column-wise comprised of a collection of Multiplicity-m feature vectors. Moreover, we use  \(M := \max_{i \in [N]} |S_i|\) to denote the largest multiplicity in the training set. To distinguish imbalanced class samples between Multiplicities, for each \(m \in [M]\), we use \(n_m := | \{i \in [N]: |S_i| = m\}|\) to denote the number of samples in each class of a multiplicity order \(m\) (or Multiplicity $m$). Note that \(M \in \{1,\dots, K-1\}\) in general, and a \MLab~problem reduces to \MClf~when \(M=1\).

% For convenience, we let \(\kappa \in \binom{K}{m}\) be 

% \paragraph{Training with the pick-all-labels loss under UFM.}
\paragraph{The ``pick-all-labels'' loss.}
Since \MLab~is a generalization of \MClf, recent work \cite{menon2019multilabel} studied various ways of converting a \MClf~loss into a \MLab~loss, a process referred to as \emph{reduction}.\footnote{``Reduction'' refers to reformulating \MLab~problems in the simpler framework of \MClf~problems.}
% \qq{discuss the benefits of such an approach here: talk about other approaches very briefly cite papers, but this is simple and default, effective perform well}
In this work, we analyze the \emph{pick-all-labels} (PAL) method of reducing the cross-entropy (CE) loss to a \MLab~loss, which is the \emph{default} option implemented by \texttt{torch.nn.CrossEntropyLoss} from the deep learning library PyTorch \cite{paszke2019pytorch}. 
The benefit of PAL approach is that the difficult \MLab~problem can be approached using insights from \MClf~learning using well-understood losses such as the CE loss, which is one of the most commonly used loss functions in classification:
\begin{align*}
    \mc L_{\mathrm{CE}}(\mb z, \mb y_k) \;:=\; - \log \paren{ \exp(z_k) / \textstyle\sum_{\ell=1}^K \exp(z_{\ell})  }.
\end{align*}
where $\mb z = \mb W \mb h$ is called the logits, and $\mb y_k$ is the one-hot encoding for the $k$-th class. To convert the CE loss into a \MClf~loss via the PAL method, for any given label set $S$, consider decomposing a multi-hot label $\mb y_S$ as a summation of one-hot labels: $\mb y_S = \sum_{k \in S} \mb y_k$. Thus, we can define the \emph{pick-all-labels cross-entropy} (PAL-CE) loss as 
\begin{align*}
    \underline{\calL}_{\mathtt{PAL}-\mathrm{CE}}(\mb z,\mb y_S) \;:=\; \textstyle \sum_{ k \in S } \mc L_{\mathrm{CE}}(\mb z, \mb y_k).
\end{align*}
In this work, we focus exclusively on the CE loss under the PAL framework, we simply write
\(\underline{\calL}_{\mathtt{PAL}}\) to denote \(\underline{\calL}_{\mathtt{PAL}-\mathrm{CE}}\). However, by drawing inspiration from recent research \cite{zhou2022all}, it should be noted that 
under the PAL framework, the phenomenon of \MLab~NC can be generalized beyond cross-entropy to encompass a variety of other loss functions used for \MClf~learning, such as mean squared error (MSE), label smoothing (LS),\footnote{The loss replaces hard targets in CE with smoothed ones to achieve better calibration and generalization \cite{szegedy2016rethinking}.} focal loss (FL),\footnote{The loss adjusts its focus to less on the well-classified samples, enhances calibration, and establishes a curriculum learning framework \cite{lin2017focal, mukhoti2020calibrating, smith2022cyclical}.} and potentially a class of Fenchel-Young Losses that unifies many well-known losses \cite{blondel2020learning}. 

Putting it all together, training deep neural networks for \MLab~learning can be stated as follows:
\begin{align}\label{eq:DL-loss}
    \min_{ \mb \Theta} \tfrac{1}{N} \textstyle \sum_{i=1}^N
    \underline{\calL}_{\mathtt{PAL}}( \mb W \phi_{\mb \theta}(\mb x_i) + \mb b,\mb y_{S_i}) + \lambda \norm{\mb \Theta}{F}^2,
\end{align}
where $\mb \Theta = \Brac{ \mb W,\mb b,\mb \theta }$ denote all parameters and $\lambda>0$ controls the strength of weight decay. 
%\qq{expand the discussion on pick-all-label loss, generalize to other losses} 
Here, weight decay prevents the norm of the linear classifier and the feature matrix goes to infinity or $0$.

\paragraph{Optimization under the unconstrained feature model (UFM).} Analyzing the nonconvex loss \eqref{eq:DL-loss} can be notoriously difficult due to the highly non-linear characteristic of the deep network $\phi_{\mb \theta}(\mb x_i)$. In this work, we simplify the study by treating the feature $\mb h_i =  \phi_{\mb \theta}(\mb x_i)$ of each input $\mb x_i$ as a \emph{free} optimization variable. Analysis of NC under UFM has been extensively studied in recent works \cite{zhu2021geometric,fang2021exploring,ji2022unconstrained,yaras2022neural,mixon2020neural,zhou2022optimization,tirer2022extended}, the motivation behind the UFM is the fact that modern networks are highly overparameterized and they are 
 universal approximators \cite{cybenko1989approximation,zhang2021understanding}. More specifically, we study the following problem. 
\begin{definition}[Nonconvex Training Loss under UFM]\label{definition:UFM}
Let \(\mb{Y} = [\mb y_{S_1} \cdots \mb y_{S_N}] \in \mathbb{R}^{K \times N}\) be the multi-hot encoding matrix whose \(i\)-th column is given by the multi-hot vector \(\mb y_{S_i} \in \mathbb{R}^K\). We consider
\begin{align*}\label{eq:DL-loss-ufm}
    \min_{ \mb W , \mb H,\mb b} f(\mb W,\mb H,\mb b) \;:=\; g( \mb W \mb H + \mb b,\mb Y) + \lambda_W \norm{\mb W}{F}^2 + \lambda_H \norm{\mb H}{F}^2 + \lambda_b \norm{\mb b}{2}^2 \numberthis
\end{align*}
with the penalty $\lambda_W,\lambda_H,\lambda_b>0$.

Here, the linear classifier \(\mb W \in \mathbb{R}^{K \times d}\), the features \(\mb H = [\mb h_1, \cdots, \mb h_N] \in \mathbb{R}^{d \times N}\), and the bias \(\mb b \in \mathbb{R}^K\) are all unconstrained optimization variables, and we refer to the columns of \(\mb H\), denoted \(\mb h_i\), as the \emph{unconstrained last layer features} of the input samples \(\mb x_i\). Additionally, the function $g(\cdot)$ is the PAL loss, denoted by
\begin{align*}
g(\mb{W}\mb{H}+\mb{b},\mb Y) &:= 
\tfrac{1}{N}\underline{\calL}_{\mathtt{PAL}}(\mb{W}\mb{H} + \mb{b},\mb{Y}) \; :=\; \tfrac{1}{N} \textstyle \sum_{i=1}^N \underline{\calL}_{\mathtt{PAL}}(\mb{W}\mb{h}_i + \mb{b},\mb{y}_{S_i}).   \end{align*}
\end{definition}

 Although the objective function is seemingly a simple extension of \MClf~case, our work shows that the global optimizers of Problem \eqref{eq:DL-loss-ufm} for \MLab~learning substantially differs from that of the \MClf~ that we present in the following.

\section{Main Results}\label{sec:result}

In this section, we show that the global minimizers of Problem \eqref{eq:DL-loss-ufm} exhibit a more generic structure than the vanilla NC in \MClf~(see \Cref{fig:multi-label-illustration}), where higher multiplicity features are formed by a scaled tag-wise average of associated Multiplicity-$1$ features that we introduce in detail below. Theoretically, we rigorously analyze the global geometry of the optimizer of Problem \eqref{eq:DL-loss-ufm} and its nonconvex optimization landscape, and present our main results in \Cref{thm:GO_thm}. 

% \textcolor{red}{High light the guided multi-label learning}

%In this section, we present our main results in Theorem~\ref{thm:GO_thm} and, which characterizes the global optimizers \((\bW^\star, \bH^\star, \mb{b}^\star)\) of \eqref{eq:DL-loss} as a \emph{multi-label equiangular tight frame}, or \MLab~ETF.
% Let \(\mb h_i^\star\) be the columns of \(\bH\), where \(i\) indexes the training sample. 
% Similar to the \MClf~case, landscape of the non-convex optimization in (\ref{eq:DL-loss}) is benign, a result which we prove in Theorem~\ref{thm:optim_landscape}.
% However unlike \MClf, in the \MLab~regime the columns of $\bH^\star$ no longer form an ETF. 
% Rather, only the subset of columns of \(\bH^\star\) corresponding to samples of multiplicity 
% \(=1\) forms an ETF.
% Morever, the subset of columns of \(\bH^\star\) corresponding to samples of higher multiplicity 
% are scaled averages of the multiplicity 
% \(=1\) columns. 

\subsection{Multi-label Neural Collapse (\MLab~NC)}\label{subsec:MLab-NC}
%Motivated by the above, we investigate whether the optimizers of \eqref{eq:DL-loss} truly reflects our intuition under the UFM assumption.
% We now formally define the  \textbf{multi-label equiangular tight frame} (\MLab~ETF).
%\vspace{-0.1in}
We assume that the training data is balanced with respect to Multiplicity-1 while high-order multiplicity is imbalanced or even has missing classes. Through empirical investigation, we discover that when a deep network is trained up to the terminal phase using the objective function \eqref{eq:DL-loss}, it exhibits the following characteristics, which we collectively term as "multi-label neural collapse" (\MLab~NC):
%Empirically, we find that an overparameterized neural network trained on a Multiplicity-1 balanced data using the objective \eqref{eq:DL-loss} to the terminal phase satisfies the properties below which we collectively refer to as \textbf{multi-label neural collapse} (\MLab~NC): %The properties that differs from the previously known \MClf~case \cite{papyan2020prevalence,zhu2021geometric} are marked by an ``\((\ast)\)'':
% \py{only degree 1 balance here, but later themoren assume all balance}
% \py{define what is Multiplicity-1}
%\vspace{-0.1in}
\begin{enumerate}[leftmargin=*]
    \item{\textbf{{Variability collapse}:}} The within-class variability of last-layer features across different multiplicities and different classes all collapses to zero. In other words, the individual features of each class of each multiplicity concentrate to their respective class means.
   \item{(\(\ast\)) \textbf{{Convergence to self-duality of multiplicity-$1$ features $\mb H_1$ }:}} The rows of the last-layer linear classifier $\mb W$ and the class means of Multiplicity-$1$ feature $\mb H$ are collinear, i.e., \(\bh^\star_i \propto \bw^{\star k} \) when the label set  \( S_i = \{k\}\) is a singleton set. %\LP{describe self-duality, Hessian, etc. in appendix}
    \item{(\(\ast\)) \textbf{{Convergence to the \MLab~ETF}:}} Multiplicity-1 features $\mb H_1 := \big\{\mb h_i^\star | {i : |S_i| = 1}\big\}$ form a Simplex Equiangular Tight Frame, similar to the \MClf~setting \cite{papyan2020prevalence,fang2021exploring,zhu2021geometric}. 
    %Specifically, the class-means for Multiplicity-$1$ features $\mb H_1$ are (\emph{i}) centered at the origin, (\emph{ii}) maximally distant from each other, and (\emph{iii}) linearly separably. 
    Moreover, for any higher multiplicity $m>1$, \emph{the average feature means for classes with label count $m$ are a scaled, tag-wise aggregation of the corresponding single-label ($Multiplicity-1$) feature means across the relevant label set.} In other words, \(\bh^\star_i \propto \sum_{k \in S_i} \bw^{\star k}\) (see the bottom line of \Cref{fig:multi-label-illustration}). This is true regardless of class imbalance between multiplicities. %\qq{imbalanced-ness}
\end{enumerate}

\paragraph{Remarks.} The \MLab~NC can be viewed as a more general version of the vanilla NC in \MClf~\cite{papyan2020prevalence}, where we mark the difference from the vanilla NC above by a ``\((\ast)\)''. The \MLab~ETF implies that, in the pick-all-labels approach to multi-label classification, deep networks learn discriminant and informative features for Multiplicity-$1$ subset of the training data, and use them to construct higher multiplicity features as the tag-wise average of associated Multiplicity-$1$ features. To quantify the collapse of high multiplicity NC, we introduce a new measure \(\mathcal{NC}_m\) in \Cref{sec:exp} and demonstrate that it collapses for practical neural networks during the terminal phase of training. This result is intuitive: since the multi-hot label vector can be decomposed into the sum of its tag-wise one-hot vectors, the corresponding learned features may exhibit a similar scaled tag-average phenomenon.

% Such a result is quite intuitive. For example, consider a sample \(i \in [N]\) whose training label \(\mb y_{S_i}\) has Multiplicity-$2$, e.g., \(S_i = \{\mbox{\texttt{cat}}, \mbox{\texttt{dog}}\}\).
% The  multi-hot  vector label \(\mb y_{S_i}\)  decomposes as a sum of one-hot labels of Multiplicity-$1$, namely, $\mb{y}_{S_i} = \sum_{k \in S_i} \mb{y}_k$. Ideally, the learned representation \(\mb h_i^\star\)  should satisfy such a property as well: that \(\mb h_i^\star\) is a scaled tag-wise average of several \(\mb h_{i'}^\star\) components where each \(i' \in [N]\) corresponds to an training instance of Multiplicity-\(1\) label. The learned representation of an image containing or tagged with both \texttt{cat} and \texttt{dog} should be a scaled average of the learned representation of images containing or tagged with only a \texttt{cat} or a \texttt{dog}. 

Moreover, in the case of data imbalancend-ness, we find that the \MLab~NC~holds as long as the training samples within the same multiplicity are required to be class balanced, and the number of samples between multiplicities does \emph{not} need to be balanced. This can be later confirmed by our theory in \Cref{subsec:thm}. For example, the \MLab~NC still holds if there are more or less training samples for the category $\mathtt{(ant,bee)}$(Multiplicity-2) than that of $\mathtt{(cat,dog,elk)}$ (Multiplicity-3).

\subsection{Global Optimality \& Benign Landscape Under UFM}

In this subsection, we first present our major result by showing that M-lab NC achieves the global optimality to the nonconvex training loss in \eqref{eq:DL-loss-ufm} and discuss its implications. Second, we show that the nonconvex landscape is also benign \cite{zhang2020symmetry}.

\subsubsection{Global Optimality of \MLab~NC}\label{subsec:thm}
%For deriving our main result Theorem~\ref{thm:GO_thm}, we need the following
 %   \emph{data balanced-ness condition}: In the notations of Section~\ref{sec:data-notation}, recall that
 %     \(M := \max_{i \in [N]} |S_i|\) is the largest label multiplicity. 
 %    
% For each \(m \in [M]\), we assume that there exists an integer \(n_m >0\) so that for all possible label set \(T\) of size \(m\), i.e., \(T \in \binom{[K]}{m}\), we have that
%      \(
%n_m = |\{ i : S_i = T\}|
%      \).

%In this subsection, we first present our major result by showing that \MLab~NC achieves the global optimality to the nonconvex training loss in Problem \eqref{eq:DL-loss-ufm} with benign landscape, and discuss its implications. %Second, we show that the nonconvex landscape is also benign \cite{zhang2020symmetry}.

%\subsubsection{Global optimality for \MLab~NC}
For \MLab, we show that  the \MLab~NC is the only global solution to the nonconvex problem in \Cref{definition:UFM}. %\st{For the ease of analysis, we consider the setting that the training data is balanced across all multiplicities and classes.} 
We consider the setting that the training data may exhibit imbalanced-ness between different multiplicities while maintaining class-balancedness within each multiplicity. 

%For instance, there could be $1000$ samples for each class in Multiplicity-1 labels, but only $500$ samples for each class within Multiplicity-2 labels, and so forth.

\begin{theorem}[\textbf{Global Optimality of \MLab~NC}] \label{thm:GO_thm}
In the setting of Definition~\ref{definition:UFM},
assume the feature dimension is no smaller than the number of classes, i.e., $d \ge K-1$, and assume the training are balanced within each multiplicity as we discussed above. Then any global optimizer \(\bW^\star, \bH^\star , \bbb^\star\) of the optimization Problem  \eqref{eq:DL-loss-ufm} satisfies:
\begin{align}
\textstyle
    w^\star := \|\bw^{\star1}\|_2 = \cdots = \|\bw^{\star K}\|_2, \quad \mbox{and} \quad \bbb^\star = b^\star \mb{1}, \label{eq:w_collapse}
\end{align}
where either $b^\star = 0$ or $\lambda_{\bbb} = 0$. Moreover, the global minimizer \(\bW^\star, \bH^\star , \bbb^\star\) satisfies the \MLab~NC properties introduced in \Cref{subsec:MLab-NC}, in the sense that 
\begin{itemize}[leftmargin=*]
    \item The linear classifier matrix $\bW^{\star \top} \in \mathbb{R}^{d \times K}$ forms a K-simplex ETF up to scaling and rotation, i.e., for any $\bU \in \mathbb{R}^{d \times d}$ s.t. $\bU^\top \bU = \mb{I}_d$, the rotated and normalized matrix $\bM := \frac{1}{w^\star}\bU \bW^{\star \top}$ satisfies
\begin{equation}
\textstyle
    \bM^\top\bM = \frac{K}{K-1}\left(\mb{I}_K - \frac{1}{K}\mb{1}_K \mb{1}_K^\top \right). \label{eq:etf_def}
\end{equation}
 \emph{Tag-wise average property}. For each feature $\bh^\star_i$ (i.e., the \(i\)-th column \(\bh^\star_i\) of $\bH^\star$) with $i\in[N]$, there exist unique positive real numbers \(C_1,C_2,\dots, C_M>0\) such that the following holds:  %\qq{does all need to be positive? or at least one or two?} \py{Based on our theorem, I think all $c_m$ are positive, that's the interesting part.} 
    %\qq{for the following, the position of the coefficient is not right?} \py{the coefficient is right, we could adjust our proof to adapt the following equations}
%    There exists   positive real numbers such that the following holds:
%The \(i\)-th column \(\bh^\star_i\) of $\bH^\star$ satisfies  
%\begin{align*}
%   \bh^\star_i \;=\; \begin{cases}
%       c_1 \bw^{\star k} & \text{if }S_i = \{k\}, k \in [K]\\
%       c_m \sum_{k \in S_i} \bw^{\star k} & \text{if }|S_i| > 1
%   \end{cases} 
%\end{align*}
\begin{align*} 
    &\bh^\star_i = C_1 \bw^{\star k} 
     \quad\qquad \mbox{when} \,\, S_i = \{k\},\;k\in [K], \ \ \qquad \mbox{(Multiplicity \(=1 \) Case)} \numberthis \label{eq:h1_collapse}  \\
    &\bh^\star_i = C_m  \sum_{k \in S_i} \bw^{\star k} 
    \ \  \mbox{when} \,\, |S_i| =m,\;1<m\leq M. \qquad \mbox{(Multiplicity \(>1\) Case)} \numberthis \label{eq:hm_collapse}
\end{align*}

\end{itemize}

Moreover, the function $f(\bW, \bH, \bbb)$ in Problem \eqref{eq:DL-loss-ufm} is a strict saddle function \cite{ge2015escaping,sun2015nonconvex,zhang2020symmetry} with no spurious local minimum.
\end{theorem}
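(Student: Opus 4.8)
The plan is to certify global optimality through a \emph{convex relaxation} of Problem~\eqref{eq:DL-loss-ufm} obtained from the standard variational bound on the two factors. Writing $\mb Z_0 = \mb W\mb H$ for the bias-free logit matrix and using $\lambda_W\norm{\mb W}{F}^2 + \lambda_H\norm{\mb H}{F}^2 \ge 2\sqrt{\lambda_W\lambda_H}\,\norm{\mb Z_0}{*}$ (AM--GM followed by $\norm{\mb W}{F}\norm{\mb H}{F}\ge\norm{\mb W\mb H}{*}$), every feasible point obeys
\begin{align*}
  f(\mb W,\mb H,\mb b) \;\ge\; g(\mb Z_0 + \mb b\mb 1^\top,\mb Y) + 2\sqrt{\lambda_W\lambda_H}\,\norm{\mb Z_0}{*} + \lambda_b\norm{\mb b}{2}^2 \;=:\; \tilde f(\mb Z_0,\mb b).
\end{align*}
The right-hand side is \emph{jointly convex} in $(\mb Z_0,\mb b)$, since $g$ is a sum of (shift-invariant) cross-entropy terms, $\norm{\cdot}{*}$ is convex, and composition with the linear map $(\mb Z_0,\mb b)\mapsto\mb Z_0+\mb b\mb 1^\top$ preserves convexity. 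Because $d\ge K-1$ and the optimal $\mb Z_0^\star$ will turn out to have rank at most $K-1$, the bound is \emph{attainable}: $\mb Z_0^\star$ admits a balanced factorization $\mb W^\star\mb H^\star = \mb Z_0^\star$ with $\sqrt{\lambda_W}\norm{\mb W^\star}{F} = \sqrt{\lambda_H}\norm{\mb H^\star}{F}$. Hence minimizing $f$ is equivalent to solving $\min_{\mb Z_0,\mb b}\tilde f$, and every global optimizer of $f$ is a balanced factorization of an optimal $\mb Z_0^\star$. Variability collapse is then automatic: at a global optimum $\mb h_i^\star\in\Range(\mb W^{\star\top})$ (any orthogonal component only inflates $\norm{\mb H}{F}$), so $\mb h_i^\star$ is the \emph{unique} preimage under $\mb W^\star$ of the $i$-th column of $\mb Z_0^\star$, depending on $i$ only through $S_i$.

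Next I would solve the convex program by \emph{symmetry reduction}. Under the balancedness hypothesis (all subsets of each present size occur with equal multiplicity), $\tilde f$ is invariant under the simultaneous $S_K$-action permuting the $K$ logit coordinates and relabelling samples. Averaging any minimizer over the group orbit produces, by convexity, an $S_K$-invariant minimizer (upgrading this to \emph{every} minimizer is addressed below). Invariance pins $\mb b^\star = b^\star\mb 1$ and forces each column of $\mb Z_0^\star$ indexed by a size-$m$ set $S$ to take only two values: $z^{\mathrm{in}}_m$ on the coordinates in $S$ and $z^{\mathrm{out}}_m$ off $S$. Substituting this ansatz collapses $\tilde f$ to a finite-dimensional convex problem in the scalars $\{z^{\mathrm{in}}_m,z^{\mathrm{out}}_m,b\}$, whose minimizer exhibits a strictly positive gap $z^{\mathrm{in}}_m > z^{\mathrm{out}}_m$ and hence the constants $C_m>0$. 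Since each distinct column then equals $a_m\bigl(\mb y_S-\tfrac mK\mb 1\bigr)$ up to a loss-irrelevant per-column shift, and the multiplicity-$1$ columns already span the full zero-sum subspace, $\mb Z_0^\star$ has rank exactly $K-1$, confirming attainability.

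Finally I would translate the convex solution back to the geometric claims via the \emph{equality conditions}. The AM--GM step forces $\lambda_W\norm{\mb W^\star}{F}^2 = \lambda_H\norm{\mb H^\star}{F}^2$, and equality in $\norm{\mb W\mb H}{*}\le\norm{\mb W}{F}\norm{\mb H}{F}$ forces the balanced SVD alignment of $\mb W^\star$ and $\mb H^\star$; combined with the $S_K$-invariant, rank-$(K-1)$ structure of $\mb Z_0^\star$ this yields the equal classifier norms $\norm{\bw^{\star k}}{2}=w^\star$ and the Simplex ETF identity~\eqref{eq:etf_def} for $\mb W^{\star\top}$. Recovering $\mb h_i^\star$ as the unique element of $\Range(\mb W^{\star\top})$ mapped by $\mb W^\star$ to the $i$-th column of $\mb Z_0^\star$, and using the ETF Gram matrix $\mb W^\star\mb W^{\star\top}\propto\mb I_K-\tfrac1K\mb 1\mb 1^\top$, gives exactly $\mb h_i^\star = C_1\bw^{\star k}$ for $S_i=\{k\}$ and $\mb h_i^\star = C_m\sum_{k\in S_i}\bw^{\star k}$ for $|S_i|=m$, establishing \eqref{eq:h1_collapse}--\eqref{eq:hm_collapse}. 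The bias claim follows from shift-invariance of $g$: once $\mb b^\star = b^\star\mb 1$, the term $\lambda_b\norm{\mb b}{2}^2 = \lambda_b K (b^\star)^2$ is the only place $b^\star$ appears, so $b^\star = 0$ whenever $\lambda_b>0$.

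The main obstacle I anticipate is the second paragraph: explicitly evaluating $\norm{\mb Z_0}{*}$ for the symmetric two-value ansatz, because the nuclear norm \emph{couples all multiplicities} through the shared rank-$(K-1)$ range spanned by $\{\bw^{\star k}\}$, so the reduced scalar problem is not separable across $m$. Establishing that its minimizer is unique with a strictly positive margin — which upgrades the statement from ``there exists a global optimum of this form'' to ``every global optimum has this form,'' and which yields uniqueness and positivity of the $C_m$ — is the delicate quantitative step. The genuinely new phenomenon relative to the multi-class case \cite{zhu2021geometric} is precisely that higher-multiplicity features must live inside the $(K-1)$-dimensional classifier span, which is what drives the tag-wise average rather than allocating a fresh ETF direction for each multiplicity.
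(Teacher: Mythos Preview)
Your approach via the nuclear-norm convex relaxation is genuinely different from the paper's. The paper proceeds by constructing explicit lower bounds on each multiplicity piece $g_m$ through a first-order Taylor expansion of $\underline{\mathcal L}_{\mathtt{PAL}}$ around a symmetric point (\Cref{lemma:key-lower-bound-tightness}), followed by AM--GM on the resulting inner products (\Cref{lemma:gm-lower-bound}); the equality conditions then force the ETF and tag-wise average structure directly (\Cref{lemma:ETF_and_sacled_ave}). The paper does invoke the nuclear-norm relaxation, but only for the landscape result (\Cref{thm:landscape}), not for global optimality. Your route is more conceptual and makes the $S_K$-symmetry explicit; the paper's is more constructive and yields the nonlinear system for the constants $C_m$ without ever computing a nuclear norm of the coupled block matrix.

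There is, however, a real gap in your argument for ``rank exactly $K-1$''. You say each column of $\mb Z_0^\star$ equals $a_m(\mb y_S - \tfrac{m}{K}\mb 1)$ \emph{up to a loss-irrelevant per-column shift}, and that the multiplicity-$1$ columns span the zero-sum hyperplane, hence rank $K-1$. But the shift is loss-irrelevant, not nuclear-norm-irrelevant: under your symmetric ansatz the column with label set $S$, $|S|=m$, is $z_m^{\mathrm{out}}\mb 1 + (z_m^{\mathrm{in}}-z_m^{\mathrm{out}})\mathbb I_S$, and unless you establish $m z_m^{\mathrm{in}} + (K-m)z_m^{\mathrm{out}}=0$ at the optimum for \emph{every} $m$, the column space of $\mb Z_0^\star$ contains $\mb 1$ and the rank is $K$, not $K-1$. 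This matters twice. First, attainability at $d=K-1$ needs rank $\le K-1$. Second, and more critically, the balanced-SVD equality condition gives $\mb W^\star\mb W^{\star\top}$ proportional to $(\mb Z_0^\star\mb Z_0^{\star\top})^{1/2}$, which by $S_K$-invariance has the form $\alpha\mb I_K + \beta\mb 1\mb 1^\top$; this is the ETF Gram matrix only when the $\mb 1$-eigenvalue vanishes. So the zero-column-sum property is precisely what delivers~\eqref{eq:etf_def}, and it must be extracted from the nuclear-norm first-order conditions (the coupling across multiplicities you flag), not from the shift-invariant loss. The paper sidesteps this entirely because its AM--GM equality condition \eqref{eq:AMGM_equal} together with $\overline{\mb h}_{m,\bullet,i}=\mb 0$ forces $\sum_k \mb w^{\star k}=\mb 0$ and the column-sum condition simultaneously. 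Your plan is sound, but as written the rank claim --- and with it the ETF conclusion --- is asserted rather than proved.
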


%\qq{First, describe how the mathematical equation coincide with MLab-NC in the previous subsection. Second, describe the difference and the challenges in the proof} \py{below are discussion of how mathemetical equation coincide with \MLab~NC, challenges are discusses later in this section, feel free to move it}

% \py{Merge remark and technical contributions}
% \paragraph{Remarks.}

% Under the scaled-averaged property, the coefficient $C_m$ relates features across different multiplicities with a single classifier. Numerically, $C_m$ could be expressed assatisfies a set of non-linear equations (please refer to the Appendix), and it always exists due to the network's optimizability. This existence can be verified through numerical methods.

% While it may appear intuitive and straightforward to extend the analysis of vanilla NC in \MClf~to \MLab~NC, the combinatorial nature of high multiplicity features and the interplay between the linear classifier $\mb W$ and these class-imbalanced high multiplicity features present significant challenges for analysis. For instance, previous attempts to prove \MClf~NC utilized Jensen's inequality and the concavity of the logarithmic function, but these methods are not effective for \MLab~NC.
% Instead, we analyze the gradient of the pick-all-labels cross-entropy and leverage its strict convexity to directly construct the desired lower bound. \qq{duplication here}

%\LP{Also, extra Lemmas from a probabilistic perspective are added in order to proof the scaled-average property.} \qq{data imabalancedeness} 

We discuss the high-level ideas of the proof in \Cref{sec:proof}.
The detailed proof of our results is deferred to \Cref{app:optimality} and \Cref{app:landscape}.
Next, we delve into the implications of our findings from various perspectives.

\paragraph{The global solutions of Problem \eqref{eq:DL-loss-ufm} satisfy \MLab~NC.} Under the assumption of UFM, our findings imply that every global solution of the loss function of Problem \eqref{eq:DL-loss-ufm} exhibits the \MLab~NC that we presented in \Cref{subsec:MLab-NC}. First, feature variability within each class and multiplicity can be deduced from Equations \eqref{eq:h1_collapse} and \eqref{eq:hm_collapse}. This occurs because all features of the designated class and multiplicity align with the (tag-wise average of) linear classifiers, meaning they are equal to their feature means with no variability. Second, the convergence of feature means to the \MLab~ETF can be observed from Equations \eqref{eq:etf_def}, \eqref{eq:h1_collapse}, and \eqref{eq:hm_collapse}. For Multiplicity-$1$ features $\mb H_1^\star$, \Cref{eq:h1_collapse} implies that the feature mean $\ol{\mb H}_1^\star$ converges to $\mb W ^\star$; this, coupled with \Cref{eq:etf_def}, implies that the feature means $\ol{\mb H}_1^\star$ of Multiplicity-$1$ form a simplex ETF. Moreover, the structure of tag-wise average in \Cref{eq:hm_collapse} implies the \MLab~ETF for feature means of high multiplicity samples. Finally, the convergence of Multiplicity-$1$ features towards self-duality can be deduced from \Cref{eq:h1_collapse}.

\begin{figure*}[t]
    \centering
    \subfloat[]
    {\includegraphics[width=0.24\textwidth]{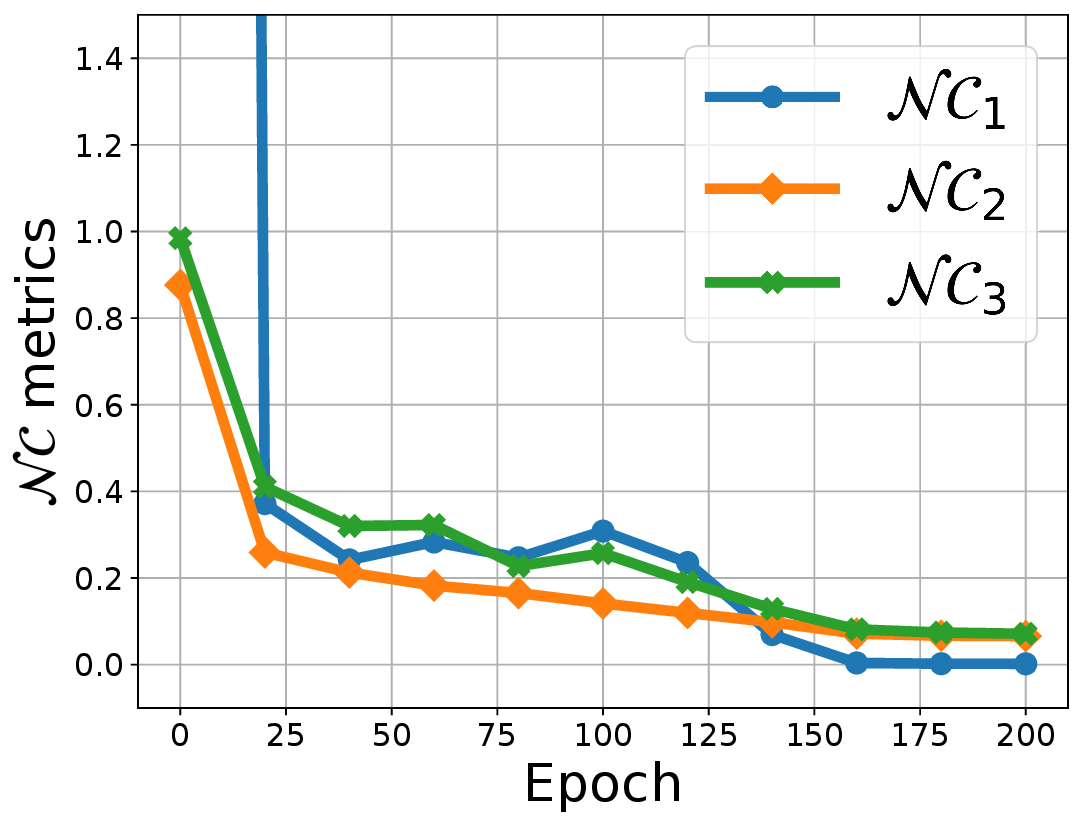}} 
    \subfloat[]{\includegraphics[width=0.24\textwidth]{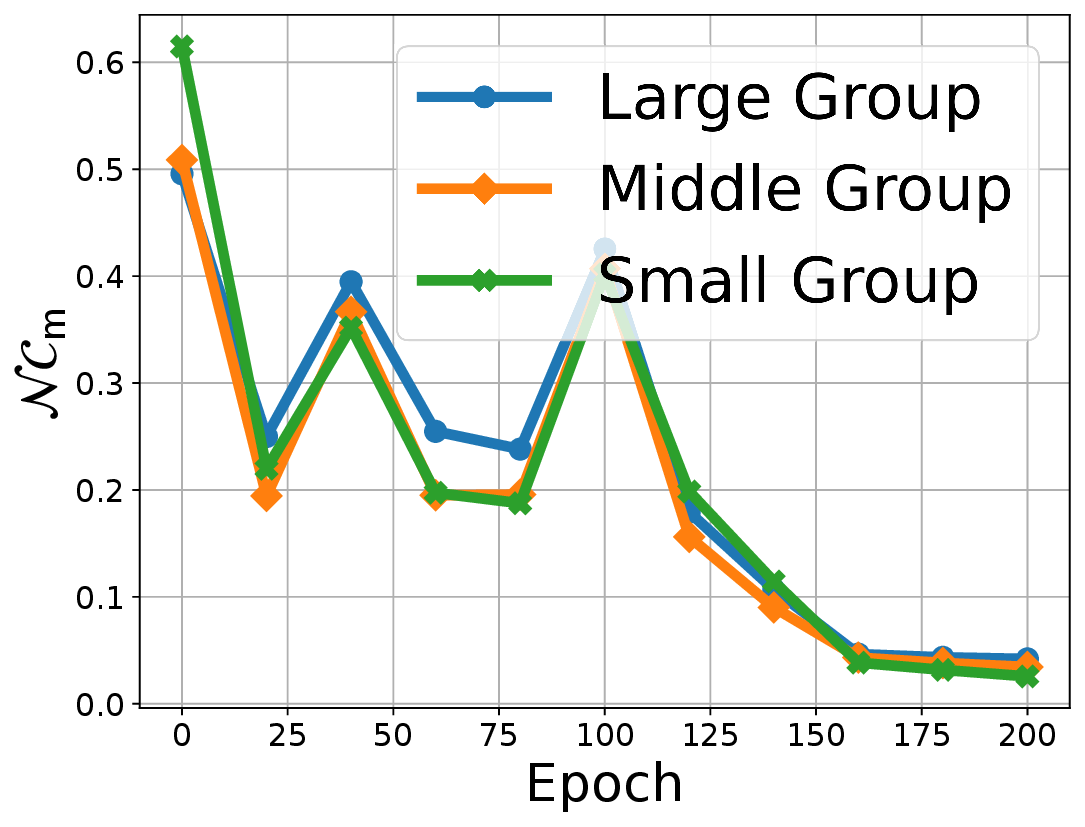}}
    \subfloat[]{\includegraphics[width=0.24\textwidth]{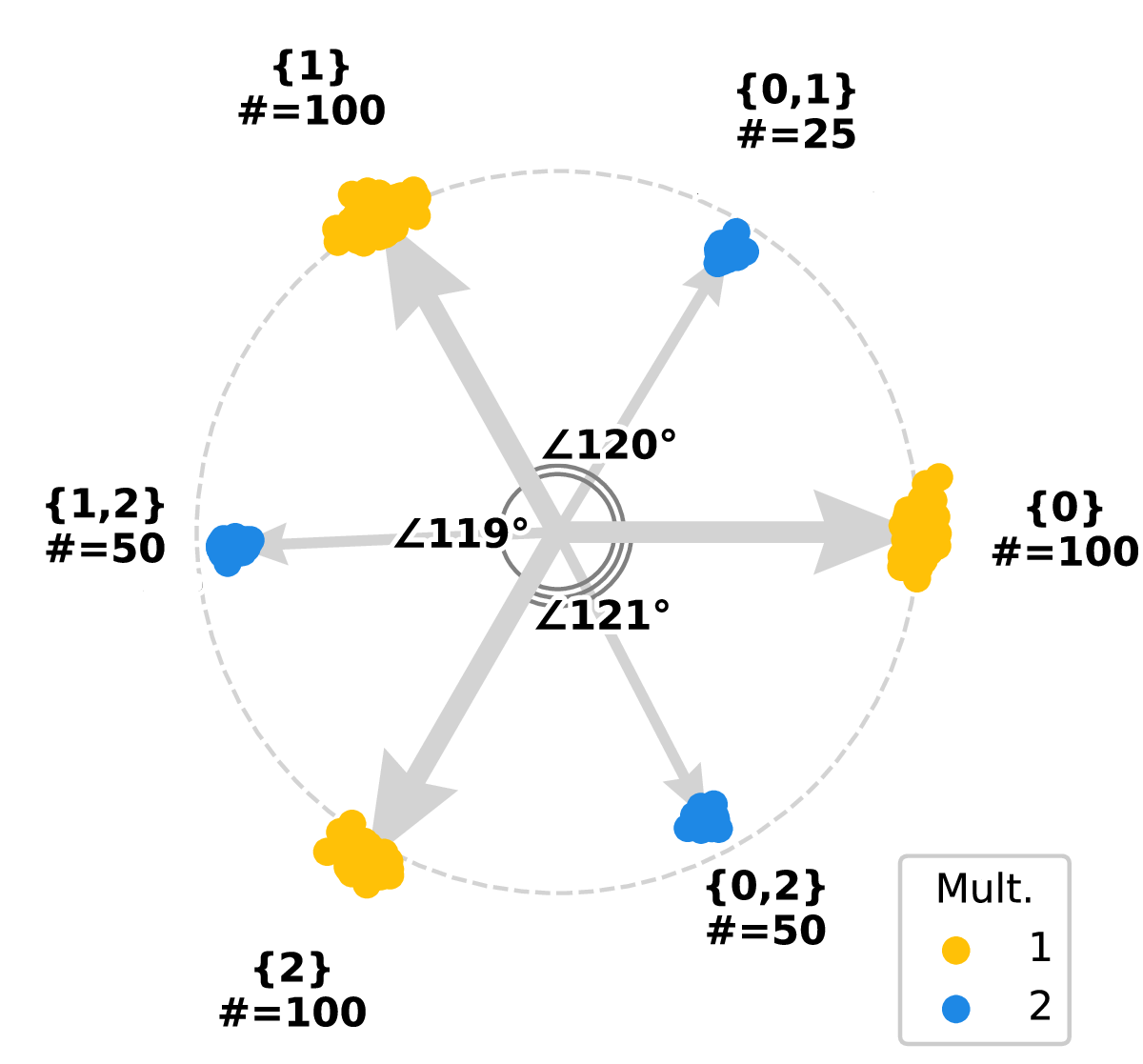}}
    \subfloat[]{\includegraphics[width=0.24\textwidth]{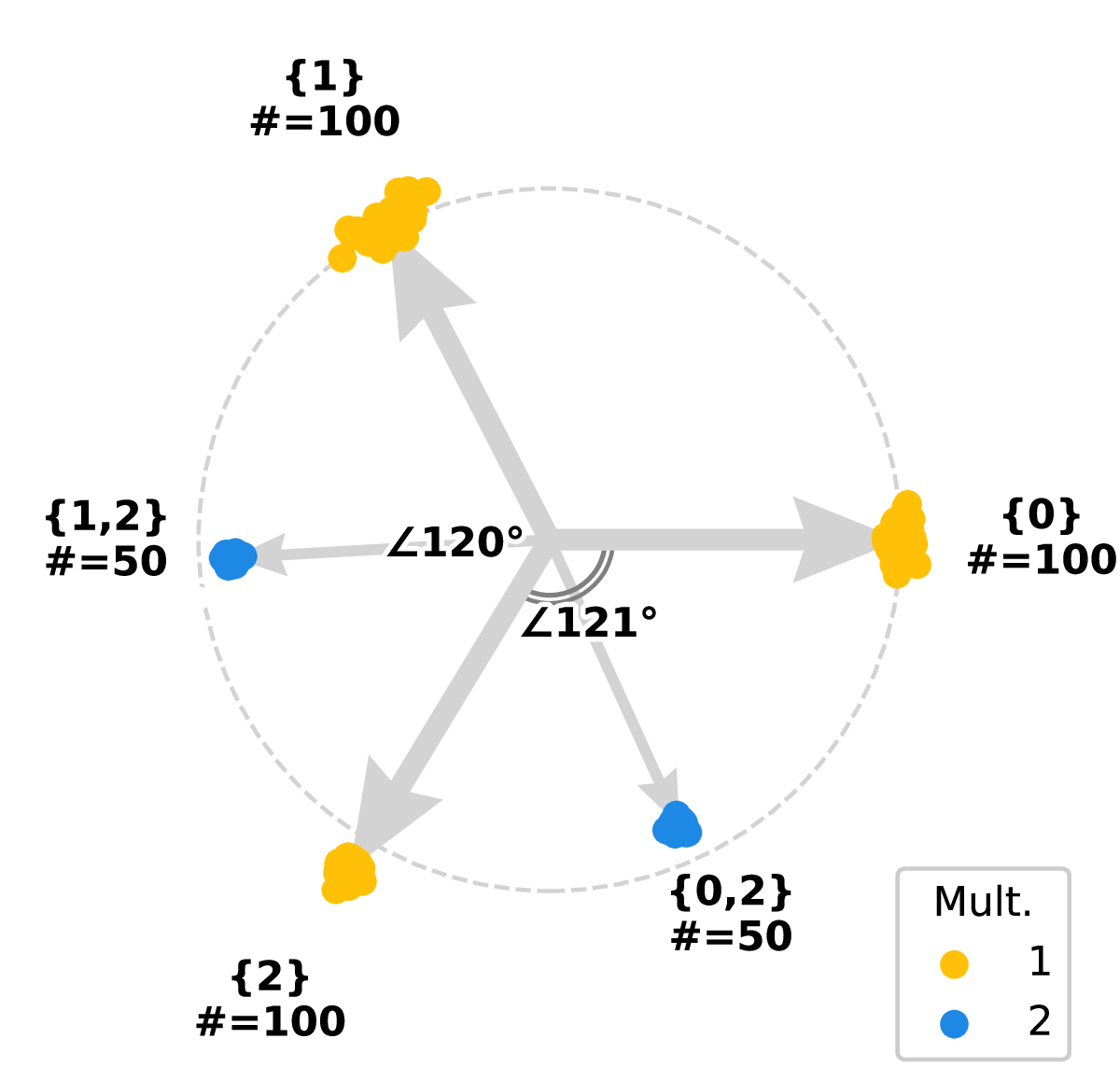}} 
    % \caption{\textbf{\MLab~ NC holds with imbalanced data.} A ResNet18 model is trained using a \MLab\  Cifar10 dataset with balanced Multiplicity-$1$ but imbalanced Multiplicity-$2$ classes. A ResNet18 model is trained using a \MLab\ Cifar10 and a convolution plus MLP model is trained using a \MLab\ MNIST dataset both with balanced Multiplicity-$1$ but imbalanced Multiplicity-$2$ classes.}
    \caption{\small \textbf{\MLab~NC holds with imbalanced data in higher multiplicities.} (a) and (b) plot metrics that measures \MLab~NC on \MLab\ Cifar10; (c) and (d) visualize learned features on \MLab\ MNIST, where one multiplicity-2 class is missing in the set up which results in the reduced \MLab~NC geometry. As we observe, the ETF structure for Multiplicity-1 still holds. More experimental details are deferred to \Cref{sec:exp}. }
    \label{fig:NC_High_imba_c10_Mnist}
\end{figure*}

\paragraph{Data imbalanced-ness in \MLab~learning.} Due to the scarcity of higher multiplicity labels in the training set, in practice the imbalance of training data samples could be a more serious issue in \MLab~than \MClf. It should be noted that there are two types of data imbalanced-ness: (\emph{i}) the imbalanced-ness between classes \emph{within} each multiplicity, and (\emph{ii}) the imbalanced-ness of classes \emph{among} different multiplicities. Interestingly, as long as Multiplicity-$1$ training samples remain balanced between classes, our results in \Cref{fig:NC_High_imba_c10_Mnist} and \Cref{fig:NC-SVHN} imply that the \MLab~NC  holds \emph{regardless} of both within and among multiplicity imbalanced-ness in higher multiplicity. Given that achieving balance in Multiplicity-1 sample data is relatively easy, this implies that our result captures a common phenomenon in \MLab~learning. However, if classes of Multiplicity-1 are imbalanced, we suspect a more general minority collapse phenomenon would happen \cite{fang2021exploring,thrampoulidis2022imbalance}, which is worth of further investigation.

\paragraph{Improving \MLab~prediction \& training via \MLab~NC.} Guided by the feature collapse phenomenon of \MLab~NC, we show that we can improve the prediction accuracy and training efficiency in \Cref{sub:exp-implications}. For prediction, encoding could use an one-nearest-neighbor (ONN) approach to classify new data based on the nearest feature mean in the feature space. Empirical verification confirms that ONN encoding is more efficient and yields superior testing accuracy compared to OvA, as illustrated in \Cref{tab:ONN_OvA}. For training, as shown in \Cref{tab:etf_exp_balance}, we can achieve parameter efficient training for \MLab~by fixing the last layer classifier as simplex ETF and reducing the feature dimension $d$ to $K$. 

%  Another direct application of our theory is shown in \Cref{tab:etf_exp_balance}. 

%In practice, we show that our findings lead to improved prediction and training for \MLab~learning. For prediction, we propose a new one-nearest-neighbor (ONN) approach: for the given test sample, we assign its label to the tag of the nearest neighboring multi-label ETF in the feature space. Compared to the classical one-vs-all (OvA) approach for \MLab~learning, ONN is much more efficient with higher test accuracy. For training, by fixing the last later to be ETF and reducing the feature dimension, our experimental results demonstrate that we can sufficiently reduce parameters without compromising overall performance.
\paragraph{Tag-wise average coefficients for \MLab~ETF with high multiplicity.} The features of high multiplicity are scaled tag-wise average of Multiplicity-$1$ features, and these coefficients are \emph{simple and structured} as shown in \Cref{eq:hm_collapse}. As illustrated in \Cref{fig:multi-label-illustration} (i.e., $K=3$, $M=2$), the feature $\mb h_i^\star$ of Multiplicity-$m$ associated with class-index $S_i$ can be viewed as a \emph{tag-wise average} of Multiplicity-$1$ features in the index set $S_i$. Specifically, the high multiplicity coefficients $\Brac{C_m}_{m=1}^M$ in \Cref{eq:hm_collapse}, which are shared across all features of the same multiplicity, could be expressed as
\[
C_m = \frac{K-1}{\|\bW\|_F^2}\log(\frac{K-m}{m} c_{1,m}), \quad \forall m
\]
where $\Brac{c_{1,m}}_{m=1}^M$ exist.\footnote{They satisfy a set of nonlinear equations (\Cref{app:optimality}).}

\subsubsection{Proof Ideas of \Cref{thm:GO_thm} and Comparison with \MClf~NC}\label{sec:proof}

We briefly outline our proofs for the global optimality in \Cref{thm:GO_thm} as follows: essentially, our proof method first breaks down the $g( \mb W \mb H + \mb b,\mb Y)$ component of the objective function of Problem (\ref{eq:DL-loss-ufm}) into numerous subproblems $g_m( \mb W \mb H_m + \mb b,\mb Y_m)$, categorized by different multiplicity. We determine lower bounds for each $g_m$ and establish the conditions for equality attainment for each multiplicity level. Subsequently, we confirm that equality for these sets of lower bounds of different $m$ values can be attained simultaneously, thus constructing a global optimizer where the overall global objective of (\ref{eq:DL-loss-ufm}) is reached. We demonstrate that all optimizers can be recovered using this approach. As a result, our generalized proof implies \MClf~NC with only single-multiplicity data.

Although our work is inspired by the recent work \cite{zhu2021geometric}, it should be noted that our main results as well as the proving techniques used to establish them significantly differ from that of \cite{zhu2021geometric}. For \MLab, the derivation of optimality conditions is particularly challenging due to the combinatorial complexity of imbalanced features with higher label counts and how they interact with a single linear classifier. We elaborate on this in the following.
\begin{itemize}[leftmargin=*]
    \item We incorporate all multiplicity samples by calculating the gradient of the PAL-CE loss function to obtain the initial lower bound. The tightness condition of such bound uncovers \MLab~learning's unique “in-group and out-group” property hidden behind the combinatorial structure of high multiplicity features. Comparatively, \cite{zhu2021geometric} relied on Jensen's inequality and concavity of log function which falls short under the present of high-multiplicity samples. More details can be found in \Cref{lemma:key-lower-bound-tightness}.
    \item We decoupled the interplay between linear classifier across various multiplicity features by decomposing the loss into different components based on feature multiplicities. Through the decomposition, we then showed that the equality condition for each components can be achieved simultaneously. More details are provided in \Cref{lemma:gm-lower-bound}. 
    \item We further unveil that the higher multiplicity features converge to the "scaled tag-wise average" of its associate tag feature means (\Cref{lemma:ETF_and_sacled_ave}). This requires three new supporting lemmas derived from probabilistic (\Cref{lemma:Y_Moore_pinv}) and matrix theory perspective (\Cref{lemma:proj_subspace_z}, \ref{lemma:pascal_norm}), which is unique in \MLab~learning.
\end{itemize}

\subsubsection{Nonconvex Landscape Analysis}

Due to the nonconvex nature of Problem \eqref{eq:DL-loss}, the characterization of global optimality alone in
\Cref{thm:GO_thm} is not sufficient for guaranteeing efficient optimization to those desired global solutions. Thus, we further study the global landscape of Problem \eqref{eq:DL-loss} by characterizing all of its critical points, we show the following result.

\begin{theorem}[\textbf{Benign Optimization Landscape}]\label{thm:landscape}
    Suppose the same setting of \Cref{thm:GO_thm}, and assume the feature dimension is larger than the number of classes, i.e., $d > K$, and the number of training samples for each class are balanced within each multiplicity. Then the function $f(\bW, \bH, \bbb)$ in Problem \eqref{eq:DL-loss-ufm} is a strict saddle function with no spurious local minimum in the sense that:
    \begin{itemize}[leftmargin=*]
        \item Any local minimizer of $f$ is a global solution of the form described in Theorem \ref{thm:GO_thm}.
        \item Any critical point $(\bW, \bH, \mb b)$ of $f$ that is not a global minimizer is a strict saddle point with negative curvatures, in the sense that there exists some direction $(\mb \Delta_{\mb W},\mb \Delta_{\mb H}, \mb \delta_{\mb b})$ such that the directional Hessian $\nabla^2 f(\bW, \bH, \mb b)[\mb \Delta_{\mb W},\mb \Delta_{\mb H}, \mb \delta_{\mb b}]<0$. 
    \end{itemize}
\end{theorem}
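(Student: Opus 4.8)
The plan is to classify every first-order critical point of $f$ and show it is either a global minimizer of the form in \Cref{thm:GO_thm} or a strict saddle; the two bullet points then follow immediately. The key structural fact I would exploit is that, although $f$ is nonconvex through the bilinear product $\bW\bH$, the loss term $g(\bZ,\bY)$ with $\bZ := \bW\bH + \bbb\mb{1}_N^\top$ is convex in $\bZ$ (each PAL--CE summand is a log-sum-exp, hence convex). This lets me compare the factorized objective against the convex surrogate furnished by the variational form of the nuclear norm, $\min_{\bW\bH=\bZ}\paren{\lambda_W\|\bW\|_F^2 + \lambda_H\|\bH\|_F^2} = 2\sqrt{\lambda_W\lambda_H}\,\|\bZ\|_*$.

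First I would write the stationarity equations $\nabla_{\bW} f = \bG\bH^\top + 2\lambda_W\bW = \mb{0}$, $\nabla_{\bH} f = \bW^\top\bG + 2\lambda_H\bH = \mb{0}$, and $\nabla_{\bbb} f = \bG\mb{1}_N + 2\lambda_b\bbb = \mb{0}$, where $\bG := \nabla_{\bZ} g(\bZ,\bY)$. Two consequences drive the argument. Taking inner products of the first two with $\bW$ and $\bH$ gives the self-balancing identity $\lambda_W\|\bW\|_F^2 = \lambda_H\|\bH\|_F^2$. Moreover, the two matrix equations force $\range(\bH) = \range(\bW^\top) =: \mathcal{S}$ with $r := \dim\mathcal{S} = \rank\bW = \rank\bH \le K$. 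Because the theorem assumes $d > K \ge r$, the complement $\mathcal{S}^\perp$ is nontrivial, and any unit vector $\bv\in\mathcal{S}^\perp$ satisfies \emph{both} $\bW\bv = \mb{0}$ and $\bH^\top\bv = \mb{0}$ simultaneously. This spare feature direction (available precisely because $d > K$ is strict) is what powers the strict-saddle construction.

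The heart of the proof is the negative-curvature construction at a non-global critical point. For a unit $\bv\in\mathcal{S}^\perp$ and vectors $\bg\in\Re^K$, $\bp\in\Re^N$ to be chosen, I perturb along $\bDelta_{\bW} = \bg\bv^\top$, $\bDelta_{\bH} = \bv\bp^\top$, $\bdelta_{\bbb} = \mb{0}$. Since $\bv\in\mathcal{S}^\perp$, the first-order change $\dot{\bZ} = \bDelta_{\bW}\bH + \bW\bDelta_{\bH}$ vanishes, so the convex (PSD) contribution $\inner{\nabla^2 g[\dot{\bZ}],\dot{\bZ}}$ of the directional Hessian drops out and, using $\bDelta_{\bW}\bDelta_{\bH} = \bg\bp^\top$, only the bilinear cross term survives:
\[
\nabla^2 f[\bDelta_{\bW},\bDelta_{\bH},\bdelta_{\bbb}] \;=\; 2\,\bg^\top\bG\bp + 2\lambda_W\|\bg\|_2^2 + 2\lambda_H\|\bp\|_2^2 .
\]
Aligning $\bg,\bp$ with the leading left/right singular vectors of $\bG$ (with a sign making $\bg^\top\bG\bp < 0$) and optimizing the scales, this form is indefinite---so some direction has strictly negative curvature---exactly when $\|\bG\|_2 > 2\sqrt{\lambda_W\lambda_H}$.

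It then remains to establish the dichotomy: $\|\bG\|_2 > 2\sqrt{\lambda_W\lambda_H}$ holds at every non-global critical point, while $\|\bG\|_2 \le 2\sqrt{\lambda_W\lambda_H}$ together with the alignment implied by stationarity certifies global optimality. I would obtain this by recognizing that $\|\bG\|_2 \le 2\sqrt{\lambda_W\lambda_H}$, plus the subgradient alignment of $-\bG$ against $\partial\|\bZ\|_*$ coming from the matrix stationarity equations, are precisely the optimality conditions of the convex surrogate $\min_{\bZ,\bbb} g(\bZ,\bY) + 2\sqrt{\lambda_W\lambda_H}\|\bZ\|_* + \lambda_b\|\bbb\|_2^2$; a critical point meeting them attains the convex minimum and is thus globally optimal for $f$, after which \Cref{thm:GO_thm} supplies the \MLab~NC form. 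A local minimizer cannot be a strict saddle, so it must satisfy these conditions and hence be global (first bullet); conversely a non-global critical point violates dual feasibility, forcing $\|\bG\|_2 > 2\sqrt{\lambda_W\lambda_H}$ and a strict saddle (second bullet). I expect the main obstacle to be this last equivalence: stating it cleanly in the presence of the bias term and the PAL-specific, multiplicity-structured gradient $\bG$ (whose columns are softmax-minus-multi-hot residuals grouped by multiplicity), and using strict convexity of $g$ on the complement of the all-ones shift direction to rule out degenerate non-global critical points where the spare-direction perturbation would otherwise yield only zero curvature.
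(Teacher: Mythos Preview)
Your proposal is correct and follows essentially the same approach as the paper, which itself simply observes that the proof of \cite[Theorem~3.2]{zhu2021geometric} extends verbatim once one notes that the PAL loss $g(\bZ,\bY)$ is convex in $\bZ$ and invokes the nuclear-norm variational identity $\min_{\bW\bH=\bZ}(\lambda_W\|\bW\|_F^2+\lambda_H\|\bH\|_F^2)=2\sqrt{\lambda_W\lambda_H}\,\|\bZ\|_*$ to reduce to the convex surrogate. Your write-up in fact supplies considerably more detail than the paper's own appendix (stationarity equations, the spare-direction construction with $\bv\in\range(\bW^\top)^\perp=\range(\bH)^\perp$, and the $\|\bG\|_2\gtrless 2\sqrt{\lambda_W\lambda_H}$ dichotomy via the dual feasibility condition of the convex problem), all of which match the zhu2021geometric template the paper invokes.
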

The original proof in \cite{zhu2021geometric} connects the nonconvex optimization problem to a convex low-rank realization and then characterizes the global optimality conditions based on the convex problem. The proof concludes by analyzing all critical points, guided by the identified optimality conditions. 
Because the PAL loss for \MLab~is reduced from the CE loss in \MClf, the above result can be generalized from the result in \cite{zhu2021geometric}.  We defer detailed proofs to \Cref{app:landscape}.
Unlike \Cref{thm:GO_thm}, the result of benign landscape in \Cref{thm:landscape} does not hold for $d = K-1$. The reason is that we need to construct a negative curvature direction in the null space of $\bW$ for showing strict saddle points. Similar to \cite{zhu2021geometric}, we conjecture the \MLab~NC results also hold for $d = K$ and leave it for future work.  

In our paper, we establish the theoretical properties of all critical points, demonstrating that the function is a strict saddle function \cite{ge2015escaping} in the context of multi-label learning with respect to $(\bm W, \bm H)$. It's worth noting that for strict saddle functions like PAL-CE, there exists a substantial body of prior research in the literature that provides rigorous algorithmic convergence to global minimizers. In our case, this equates to achieving a global multi-label neural collapse solution. These established methods include both first-order gradient descent techniques \cite{ge2015escaping, jin2017escape, lee2019first} and second-order trust-region methods \cite{sun2016complete}, all of which ensure efficient algorithmic convergence.

\section{Experiments}\label{sec:exp}

\begin{figure*}[t]
    \centering
    \subfloat[$\mc {NC}_1$ (MLab-\textsc{Mnist})]{\includegraphics[width=0.23\textwidth]{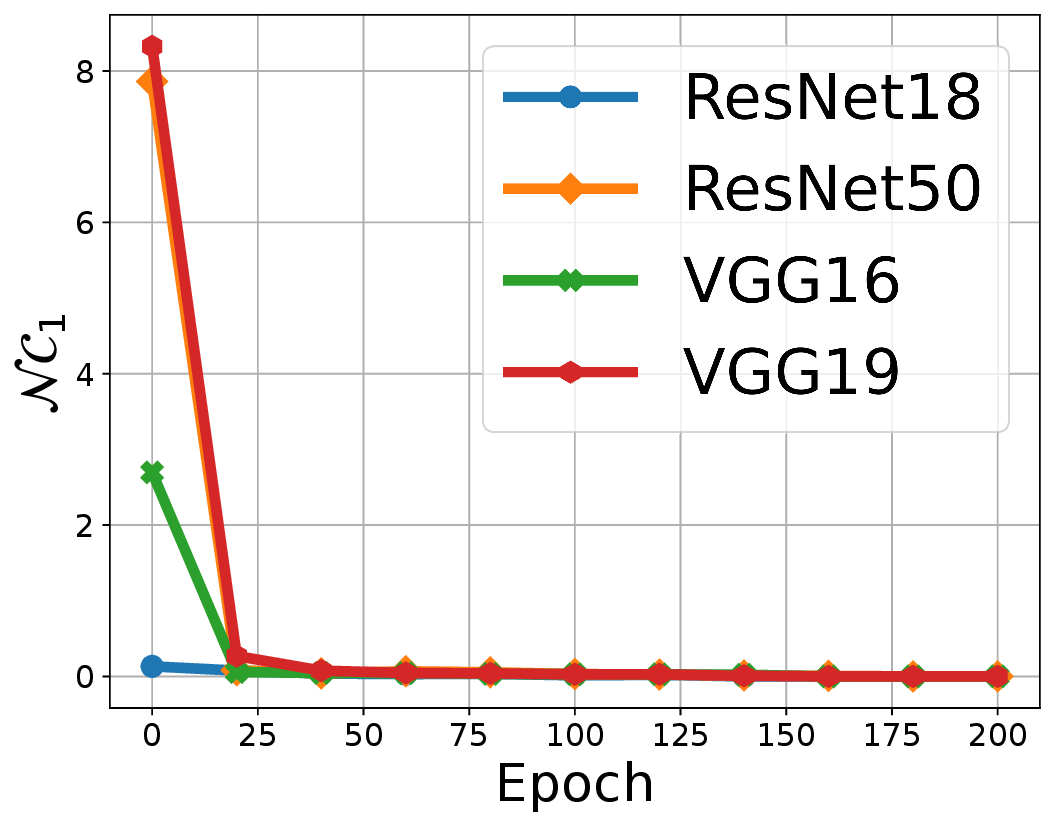}} \
    \subfloat[$\mc {NC}_2$ (MLab-\textsc{Mnist})]{\includegraphics[width=0.23\textwidth]{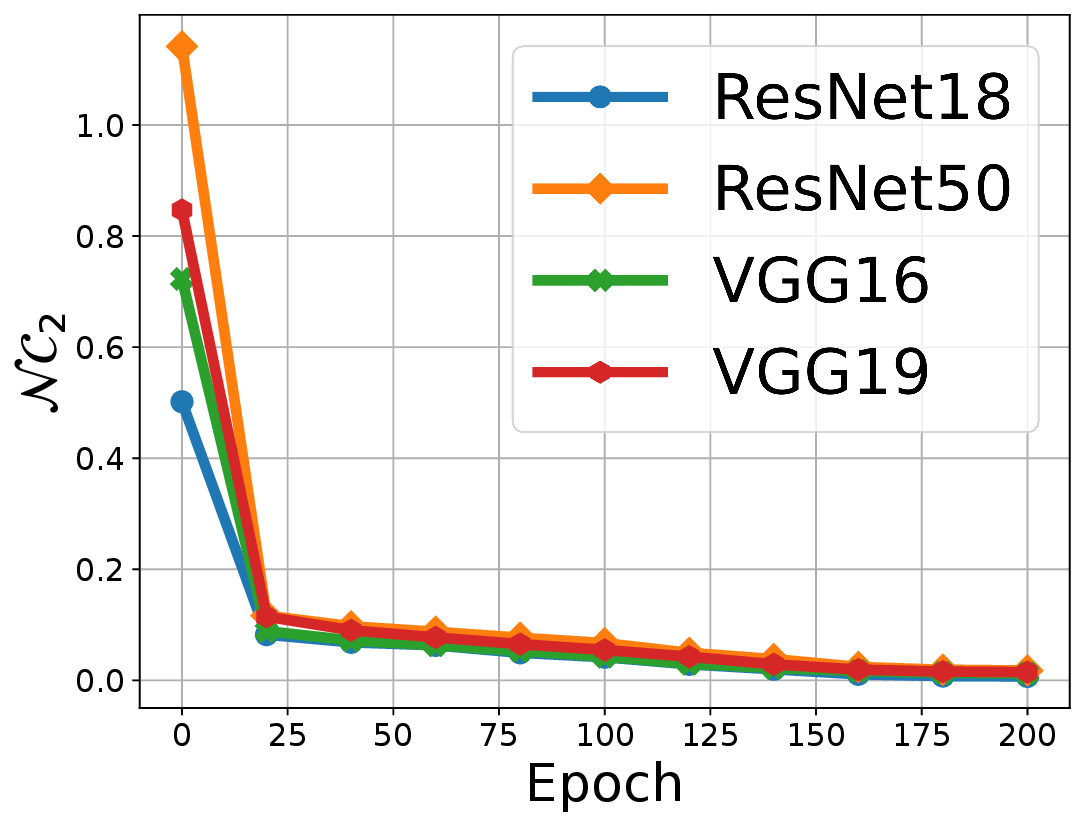}} \
    \subfloat[$\mc {NC}_3$ (\textsc{MLab-Mnist})]{\includegraphics[width=0.23\textwidth]{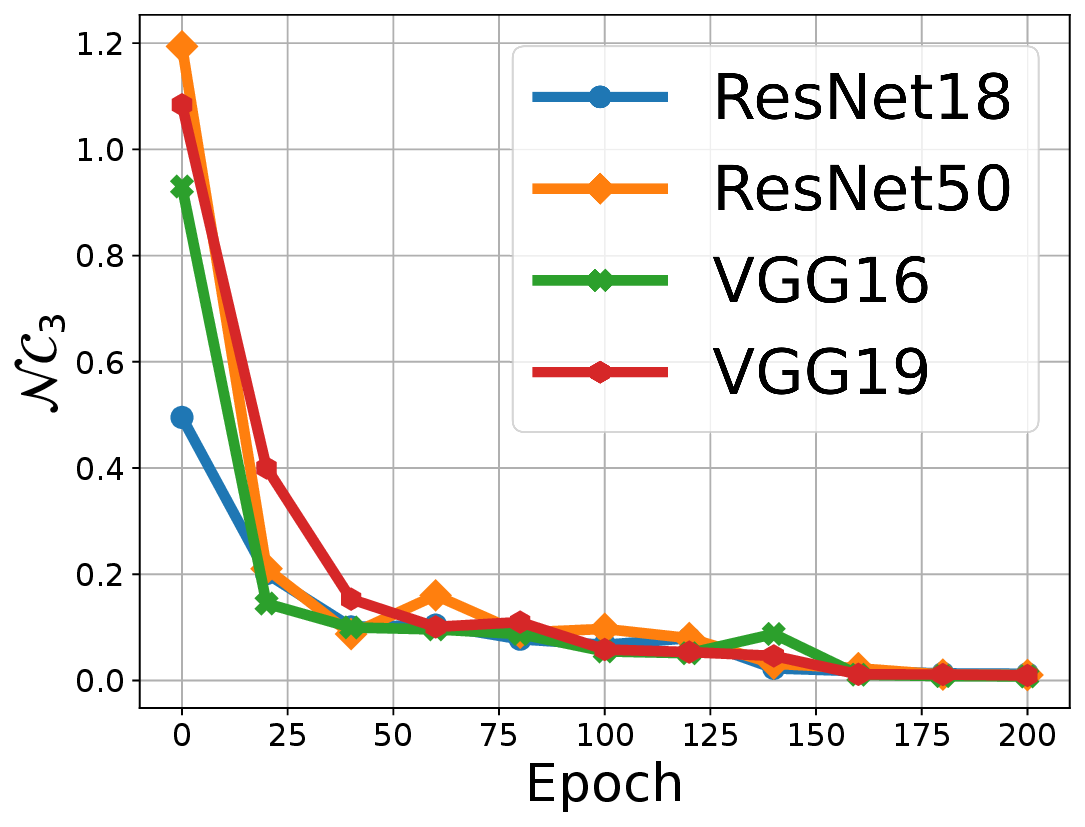}} \
    \subfloat[$\mc {NC}_m$ (MLab-\textsc{Mnist})]{\includegraphics[width=0.23\textwidth]{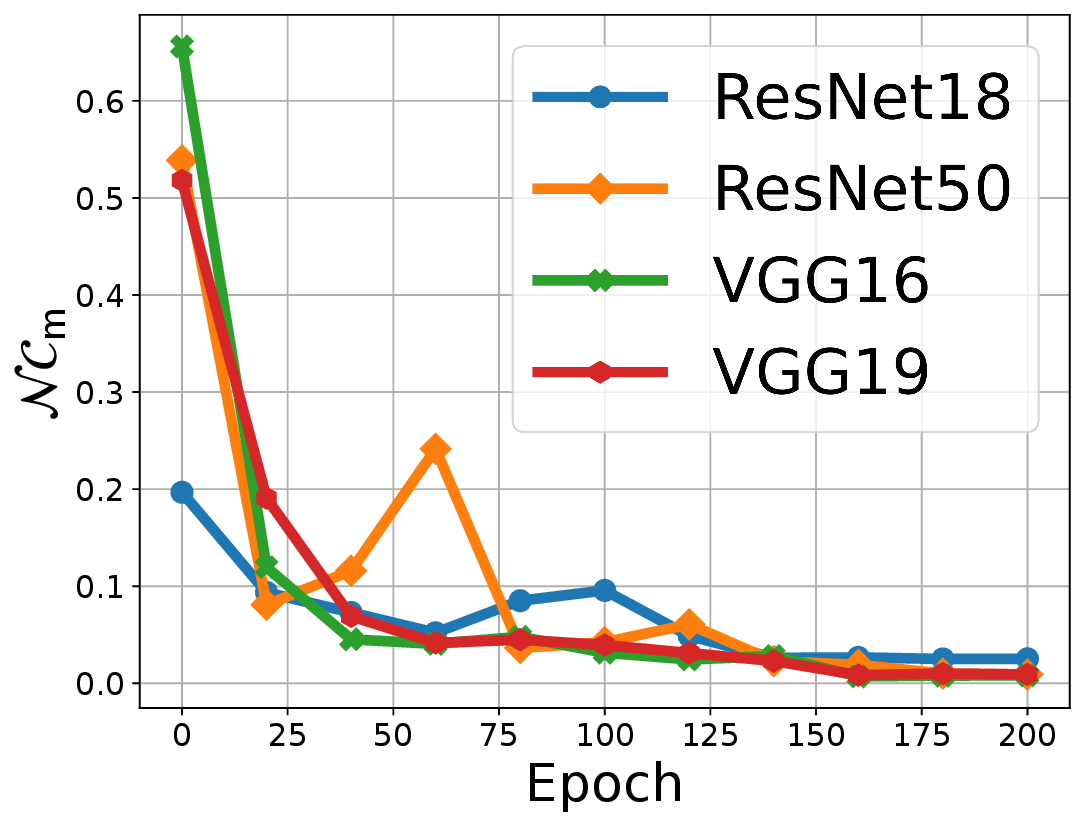}} \\
    \vspace{-0.1in}
    \subfloat[$\mc {NC}_1$ (MLab-{Cifar10})]{\includegraphics[width=0.23\textwidth]{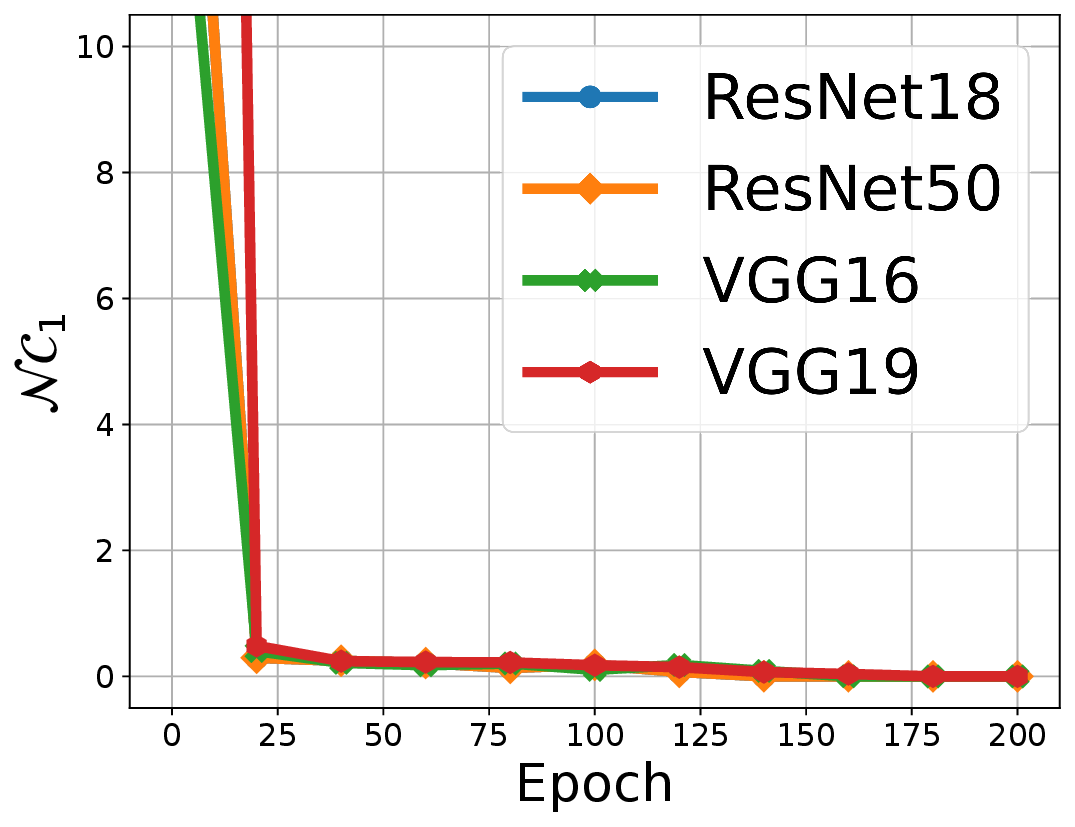}} \
    \subfloat[$\mc {NC}_2$ (MLab-{Cifar10})]{\includegraphics[width=0.23\textwidth]{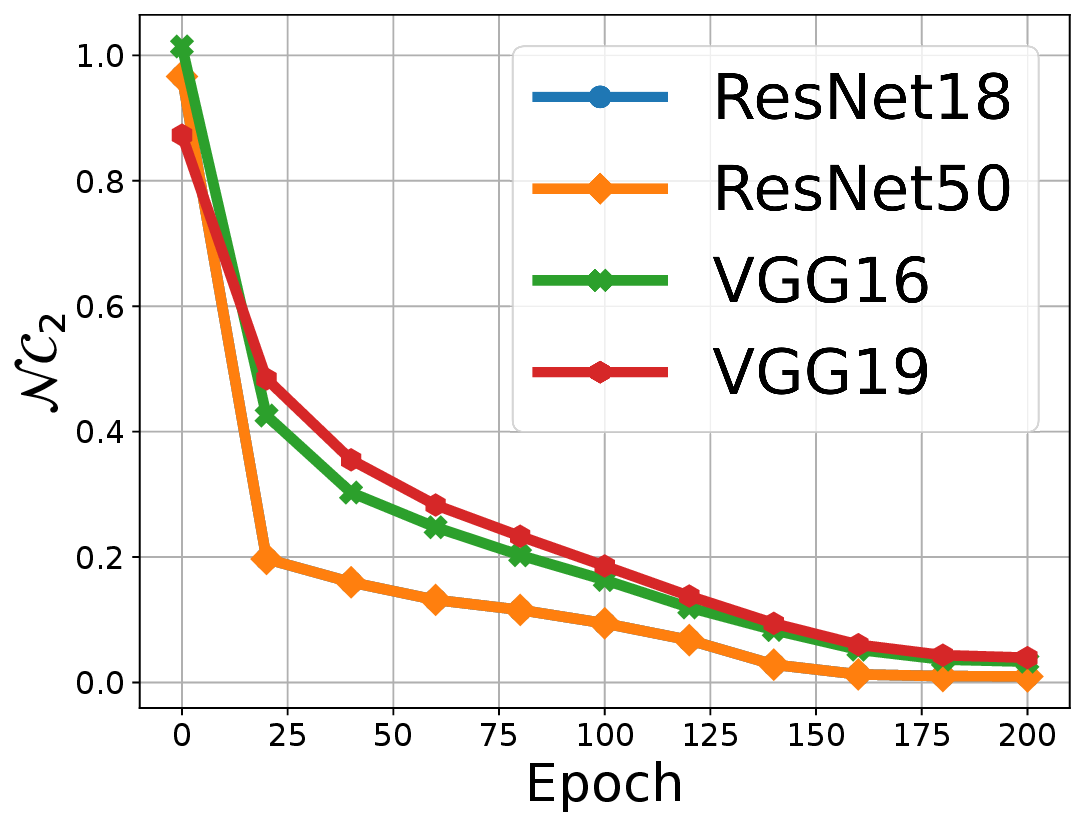}} \
    \subfloat[$\mc {NC}_3$ (MLab-Cifar10)]{\includegraphics[width=0.23\textwidth]{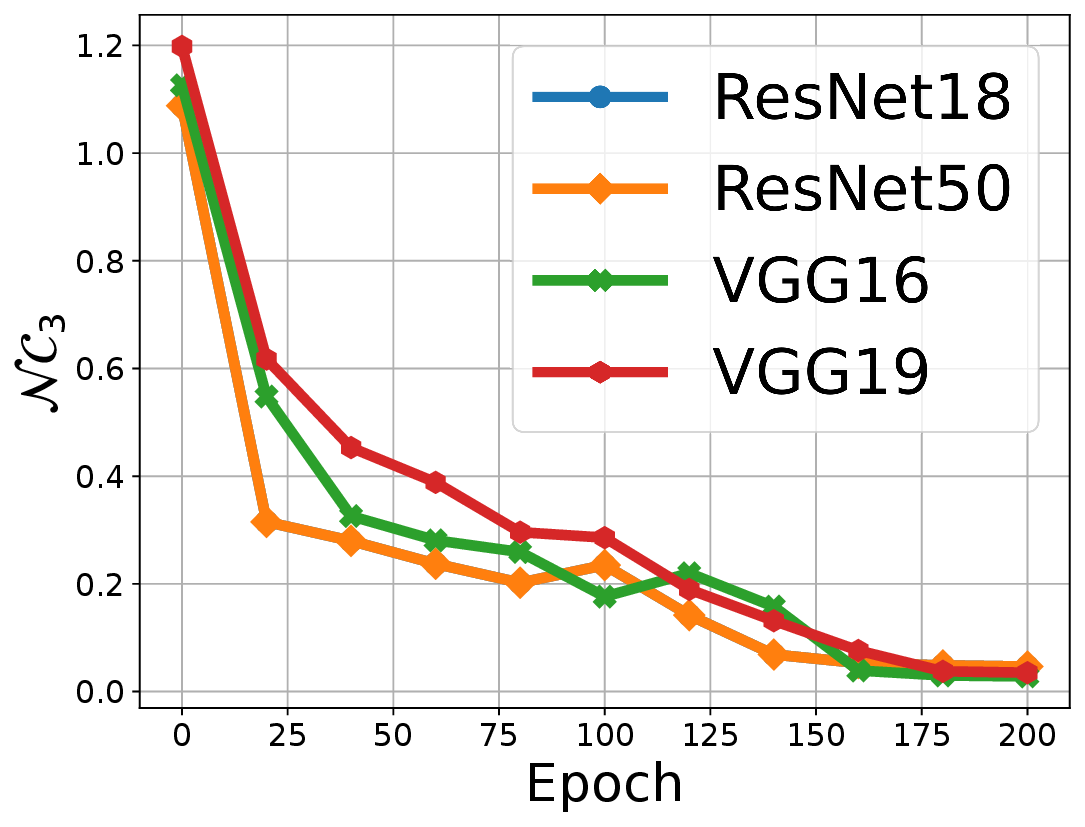}} \
    \subfloat[$\mc {NC}_m$ (MLab-Cifar10)]{\includegraphics[width=0.23\textwidth]{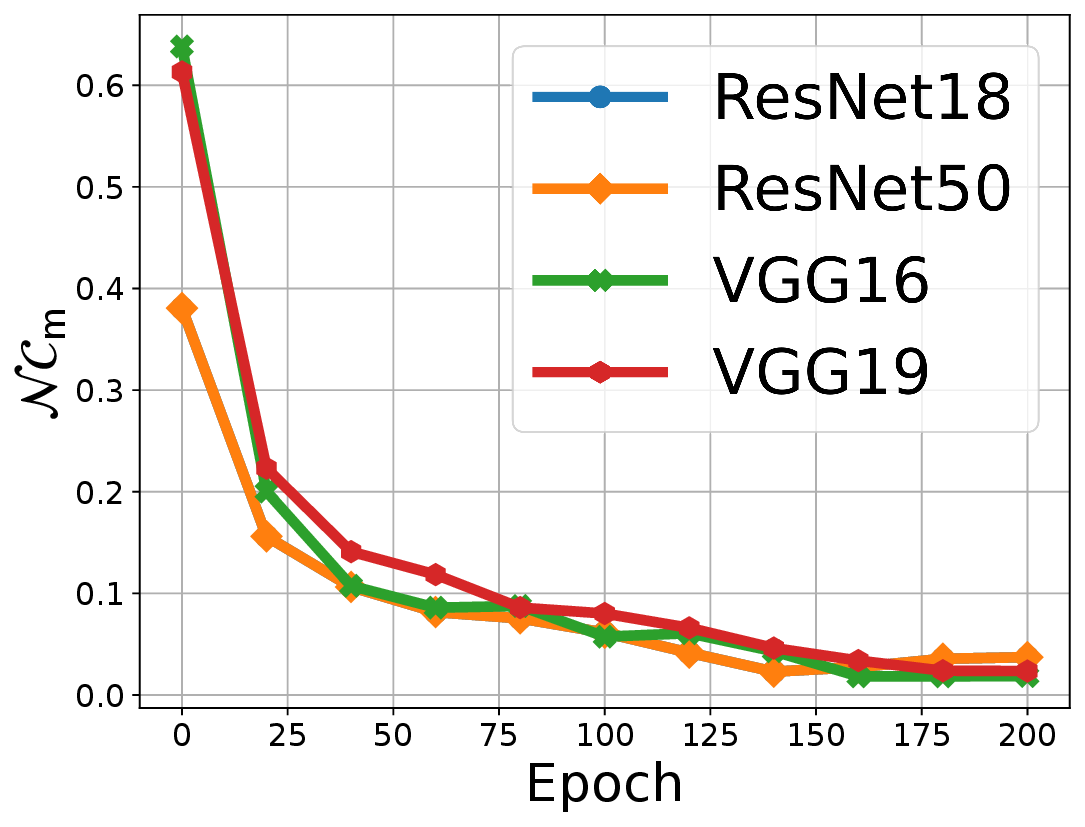}} \\
    \vspace{-0.1in}
    \caption{\textbf{Prevalence of \MLab~NC across different network architectures} on \MLab~MNIST (top) and \MLab~Cifar10 (bottom). From the left to the right, the plots show the four metrics, $\mathcal{NC}_1, \mathcal{NC}_2, \mathcal{NC}_3$, and $\mathcal{NC}_m$, for measuring \MLab~NC. More details about dataset and training setups could be found in \Cref{appendix-section:Mlab_MNIST_C10}.
    }
    \label{fig:NC-MNIST-CIFAR10}
    \vspace{-0.15in}
\end{figure*}

% \vspace{-0.1in}

In this section, we first conduct a series of experiments to demonstrate and analyze the \MLab~NC on different practical deep networks with various multi-label datasets. Second, we show that the geometric structure of \MLab~NC could efficiently guide \MLab~learning in both testing and training stage for better performance. 

% First, when the training data are balanced within multiplicities, \Cref{fig:NC-MNIST-CIFAR10} shows that all practical deep networks exhibit \MLab~NC during the terminal phase of training. We further explore the \MLab~NC under multiplicity imbalanced-ness on both synthetic (\Cref{fig:NC_High_imba_c10_Mnist}) and real data (\Cref{fig:NC-SVHN}), demonstrating that \MLab~NC partially holds irrespective of imbalanced-ness in higher multiplicity data. Finally, we \py{show that our novel ONN inference mechanism achieve higher testing accuracy efficiently compared with OvA in both synthetic data of various settings and real data \Cref{tab:ONN_OvA}.} We also show that saving of a large portion of network parameters is achievable in training deep networks without compromising performance by exploiting \MLab~NC. 

The datasets used in our experiment are real-world \MLab~SVHN \cite{netzer2011reading}, along with synthetically generated \MLab~MNIST \cite{lecun2010mnist} and \MLab~Cifar10 \cite{krizhevsky2009learning}. The detail dataset description, generation, and visualization along with experimental setups could be found in \Cref{app:data_show}.

\subsection{Verification of M-lab NC}
\paragraph{Experimental demonstration of \MLab~NC on practical deep networks.} %\label{subsec:exp_verify}
%In this section, we investigate the presence of \NC\ when using the multi-label MNIST and Cifar10 datasets. Four different metrics are designed to capture the \MLab~NC phenomenon.
%We assess the within-class variability collapse ($\mathcal{NC}_1$), convergence of learned classifier and feature class means to simplex ETF ($\mathcal{NC}_2$), and convergence to self-duality ($\mathcal{NC}3$) using the metrics from the original \NC\ paper \cite{papyan2020prevalence}. Furthermore, to measure the convec combination relationship between higher order $h_S$ and its multiplicity $1$ component $h_i, h_j$ with $S = \{i,j\}$, we 
When the training data (\Cref{appendix-section:Mlab_MNIST_C10}) are balanced within multiplicities, \Cref{fig:NC-MNIST-CIFAR10} shows that all practical deep networks exhibit \MLab~NC during the terminal phase of training as implied by our theory. To show this, we introduce new metrics to measure \MLab~NC on the last-layer features and classifiers of deep networks.

Based on theoretical results in \Cref{subsec:MLab-NC}, we use the original metrics $\mathcal{NC}_1$ (measuring the within-class variability collapse), $\mathcal{NC}_2$ (measuring convergence of learned classifier and feature class means to simplex ETF), and $\mathcal{NC}_3$ (measuring the convergence to self-duality) introduced in \cite{papyan2020prevalence} to measure \MLab~NC on Multiplicity-$1$ features $\mb H_1$ and classifier $\mb W$. Additionally, we also use the $\mathcal{NC}_1$ metric to measure variability collapse on high multiplicity features $\mb H_m\;(m>1)$. Finally, to measure \MLab~ETF (the \textit{tag-wise average} property)
on Multiplicity-$2$ features,\footnote{This is because our dataset only contains labels up to Multiplicity-$2$. The $\mathcal{NC}_m$ could be easily extended to capture scaled average for other higher multiplities.} we propose a new angle metric $\mathcal{NC}_m$, which is defined as:
 \begin{align*}
\mathcal{NC}_m = \frac{ \text{Avg.}(\{geo_{\angle}( \overline{\mb h}_{i},\;\; \overline{{\mb h}}_{j} + \overline{{\mb h}}_{\ell}): (i,j,\ell) \in \mc F_1 \})}{\text{Avg.}(\{geo_{\angle}(\overline{\mb h}_{i^\prime},\;\; \overline{{\mb h}}_{j^\prime} + \overline{{\mb h}}_{\ell^\prime}): (i^\prime,j^\prime,\ell^\prime) \in \mc F_2 \}},
\end{align*}
with the sets
\begin{align*}
    \mc F_1 &= \Brac{ i,j,\ell \mid |S_i| = 2, \,|S_j| = |S_\ell| = 1,\, S_i =  S_j \cup S_\ell  }, \\
    \mc F_2 &= \Brac{ i,j,\ell \mid |S_{i}| = 2, \,|S_{j}| = |S_{\ell}| = 1 }.
\end{align*}
%\qq{the equation needs to be rewritten}
%\py{equation too long to rewrite, maybe move it to Appendix and describe it here?}
%\qq{rewrite the metric for Multi-label, explain clearly}
Here, $geo{\angle}$ represents the geometric angle between two vectors, and $\overline{\bh}_{i}$ is the mean of all features in the label set $S_i$. Intuitively, our $\mathcal{NC}_m$ measures the angle between features means of different label sets or classes. The numerator calculates the average angle difference between multiplicity-$2$ features means and the sum of their multiplicity-$1$ component features means. %\st{of the geometric difference between multiplicity $1$ and $2$ feature mean pairs such that multiplicity $2$ label sets contain that have a scaled average relationship}, 
While the denominator serves as a normalization factor that is the average of all existing pairs regardless of the relationship.\footnote{For example, if we have $4$ total classes for multiplicity-1 samples, they corresponds to $4$ features means and hence $6$ different sums if we randomly pick $2$ features means to sum up. Multiplicity-2 then has $\binom{4}{2} = 6$ features means, there are then 36 possible angles to calculate, we averaged these 36 angles as the denominator.} As training progresses, the numerator will converge to $0$, while the denominator becomes larger demonstrating the angle collapsing. 

As shown in \Cref{fig:NC-MNIST-CIFAR10} and \Cref{fig:NC-SVHN}, practical networks do exhibit \MLab~NC, and such a phenomenon is prevalent across network architectures and datasets. Specifically, the four metrics, evaluated on four different network architectures and two different datasets, all converge to zero as the training progresses toward the terminal phase. %Besides the synthetic dataset, we also explored the \MLab-NC on practical PASCAL VOC2007 dataset \cite{everingham2007pascal} with more experimental results in \Cref{app:extra-exp}. Although VOC2007 lacks balanced Multiplicity-$1$ classes, we can still observe a reduced level of \MLab~NC.

\begin{figure*}[t]
    \centering
    \subfloat[$\mc {NC}_1$]{\includegraphics[width=0.23\textwidth]{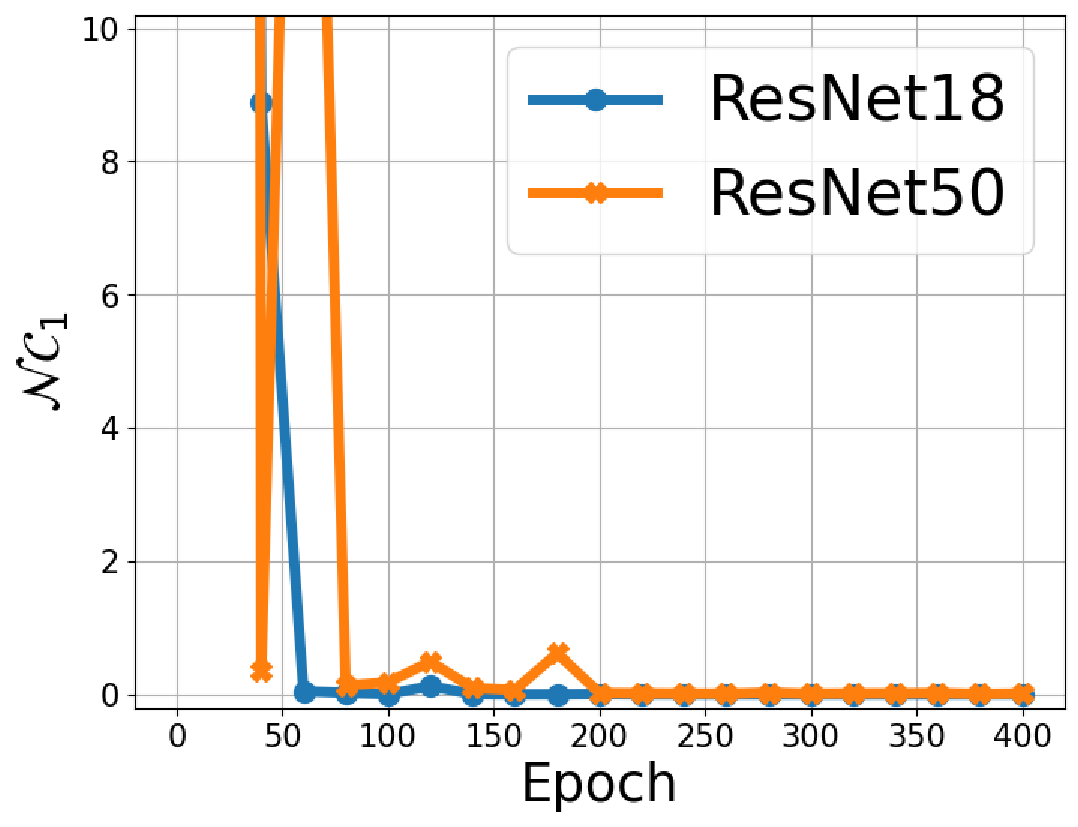}} \
    \subfloat[$\mc {NC}_2$]{\includegraphics[width=0.23\textwidth]{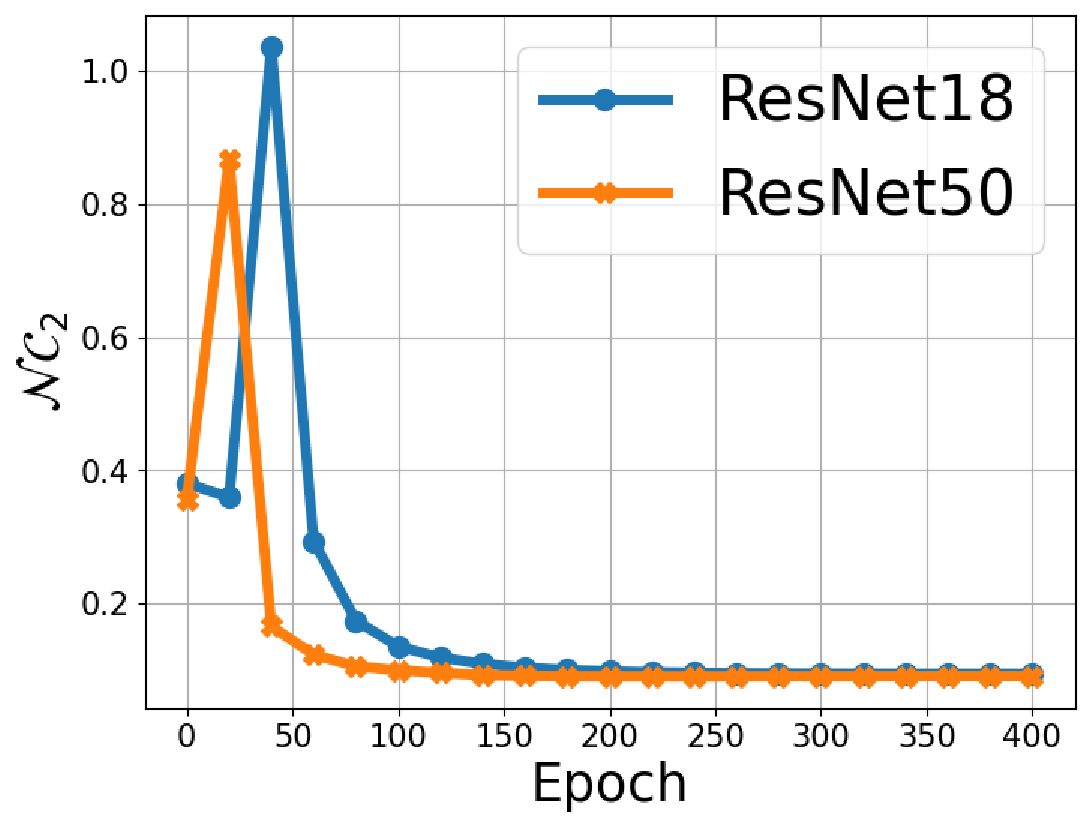}} \
    \subfloat[$\mc {NC}_3$]{\includegraphics[width=0.23\textwidth]{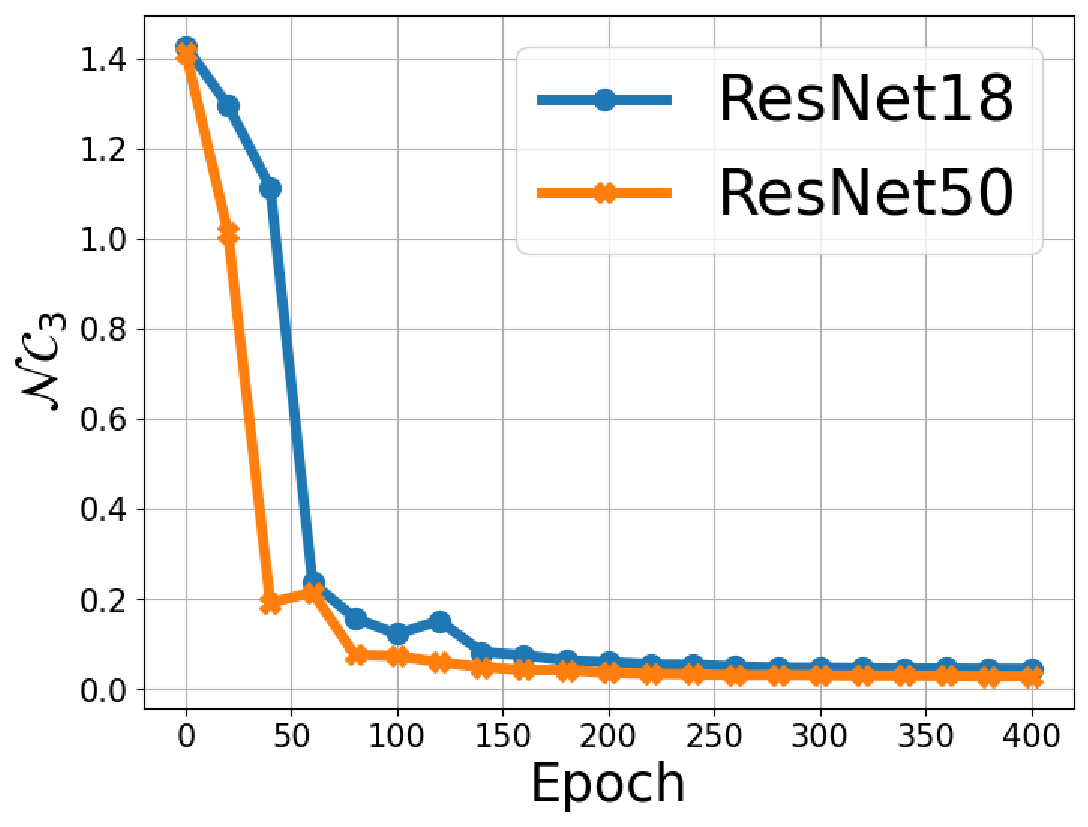}} \
    \subfloat[$\mc {NC}_m$]{\includegraphics[width=0.23\textwidth]{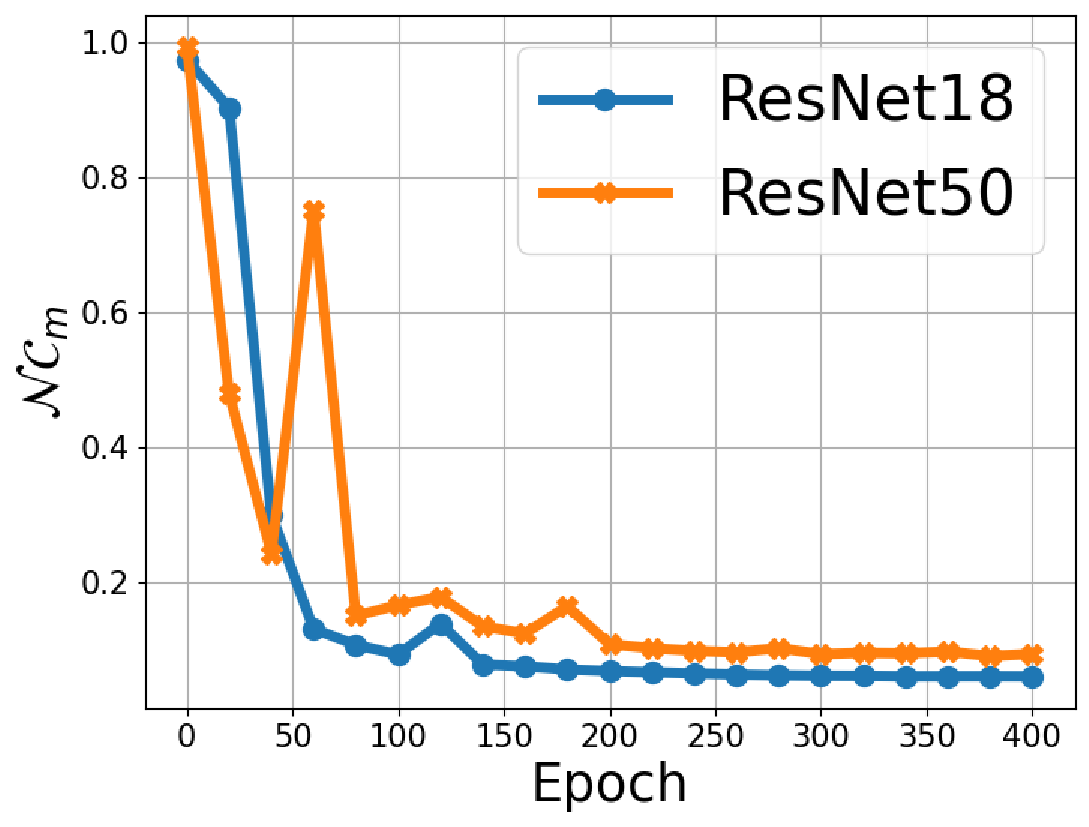}} 
    \vspace{-0.1in}
    \caption{\textbf{Prevalence of \MLab~NC on the \MLab~SVHN dataset}. We train ResNets models on the \MLab~SVHN dataset \cite{netzer2011reading} for $400$ epochs and report $\mathcal{NC}_1, \mathcal{NC}_2, \mathcal{NC}_3$, and $\mathcal{NC}_m$, for measuring \MLab~NC, respectively. See \Cref{appendix-section:imbala_SVHN} for more details.}
    \label{fig:NC-SVHN}
\end{figure*}

\paragraph{\MLab~NC holds despite of class imbalanced-ness in high order multiplicity labels.} Moreover, our experiments imply that maintaining balance in single-label training samples ensures the persistence of \MLab~NC, even amidst imbalance in higher-order label multiplicities, across both synthetic and real-world data sets. To verify this, we create imbalanced \MLab~cifar10 and MNIST datasets, and real-world \MLab~SVHN dataset, with more details in \Cref{app:data_show}.

%Supported and inspired by \Cref{thm:GO_thm}, where $\bW$ only collapse to $\bH_1$, experimentally we found that as long as the training samples of Multiplicity-$1$ remain balanced, we empirically observe that \MLab~NC holds regardless of the imbalanced-ness in high order multiplicity on both synehtic and real-world datasets. To verify this, we create imbalanced \MLab~cifar10 and MNIST datasets, and real-world \MLab~SVHN dataset.

% The cifar10 dataset has balanced Multiplicity-$1$ samples ($5000$ for each class). For the classes of Multiplicity-$2$, we divide them into $3$ groups: the large group ($500$ samples), the middle group ($50$ samples), and the small group ($5$ samples). For \MLab~SVHN dataset, minimal preprocessing is applied to ensure the balanced multiplicity-1 samples and for the details of the dataset, we refer readers to \Cref{fig:svhn_setup}. Our results can be summarized as follows.
\begin{itemize}[leftmargin=*]
    \item \textbf{Experimental results on imbalanced \MLab~Cifar10 dataset.} We run a ResNet18 model with these datasets (\Cref{appendix-section:imbala_Mlab_MNIST_C10}) and report the metrics of measuring \MLab~ NC in \Cref{fig:NC_High_imba_c10_Mnist} (a) (b). We can observe that not only $\mathcal{NC}_1$ to $\mathcal{NC}_3$ collapse to zero, but the $\mathcal{NC}_m$ metric is also converging zero for all $3$ groups of different sizes.
 
    \item \textbf{Experimental results on imbalanced \MLab~MNIST dataset.} For \Cref{fig:NC_High_imba_c10_Mnist} (c) (d) on \MLab\ MNIST (\Cref{appendix-section:imbala_Mlab_MNIST_C10}), we can see from the visualization of the features vectors that the scaled average property still holds despite a missing class in higher multiplicity. Here, we train a simple convolution plus multi-layer perceptron model. This implies that \MLab~ NC holds even under data imbalanced-ness in high order multiplicity.

    \item \textbf{Experimental results on imbalanced \MLab~SVHN dataset.} In addition, we demonstrate that \MLab~NC happens independently of higher multiplicity data imbalanced-ness on real-world \MLab~SVHN dataset (\Cref{appendix-section:imbala_SVHN}). We evaluated the behavior of NC metrics on this dataset as illustrated in \Cref{fig:NC-SVHN}, affirming the continued validity of our analysis in real-world settings.
\end{itemize}

%For more detail, we refer readers to \Cref{fig:svhn_setup}. Subsequently, we assessed the trends of NC metrics on this dataset, as depicted in \Cref{fig:NC-SVHN}. The plots affirm the continued validity of our analysis within real-world settings.

\begin{table*}[t]
\centering
\resizebox{0.98\linewidth}{!}{
\centering
\begin{tabular}{|ccccccccc|}
\hline
\multicolumn{1}{|c|}{Dataset}                                                       & \multicolumn{2}{c|}{c10-L}                                                       & \multicolumn{2}{c|}{c10-M}                                                       & \multicolumn{2}{c|}{c10-S}                                                       & \multicolumn{2}{c|}{SVHN}                                            \\ \hline
\multicolumn{1}{|c|}{\begin{tabular}[c]{@{}c@{}}Encoding \\ Mechanism\end{tabular}} & \multicolumn{1}{c|}{OvA}           & \multicolumn{1}{c|}{\textbf{ONN}}           & \multicolumn{1}{c|}{OvA}           & \multicolumn{1}{c|}{\textbf{ONN}}           & \multicolumn{1}{c|}{OvA}           & \multicolumn{1}{c|}{\textbf{ONN}}           & \multicolumn{1}{c|}{OvA}                    & \textbf{ONN}           \\ \hline
\multicolumn{9}{|c|}{Test Metrics (F1-score $/$ SubsetZero-one Accuracy)} \\ \hline
\multicolumn{1}{|c|}{Overall}                                                       & \multicolumn{1}{c|}{$0.896 / 0.787$} & \multicolumn{1}{c|}{\textbf{$\mathbf{0.899} / \mathbf{0.805}$}} & \multicolumn{1}{c|}{$0.885 / 0.778$} & \multicolumn{1}{c|}{\textbf{$\mathbf{0.888} / \mathbf{0.794}$}} & \multicolumn{1}{c|}{$0.857 / 0.740$} & \multicolumn{1}{c|}{\textbf{$\mathbf{0.863} / \mathbf{0.760}$}} & \multicolumn{1}{c|}{$0.937 / 0.831$}          & \textbf{$\mathbf{0.942} / \mathbf{0.837}$} \\ \hline
\multicolumn{1}{|c|}{Mul-1}                                                         & \multicolumn{1}{c|}{$0.893/ 0.828$} & \multicolumn{1}{c|}{\textbf{$\mathbf{0.894 / 0.829}$}} & \multicolumn{1}{c|}{$0.890/0.832$} & \multicolumn{1}{c|}{$0.890 / \mathbf{0.833}$}          & \multicolumn{1}{c|}{$0.865/0.812$} & \multicolumn{1}{c|}{\textbf{$\mathbf{0.867/0.815}$}}  & \multicolumn{1}{c|}{$0.905/0.761$}          & \textbf{$\mathbf{0.928/0.836}$} \\ \hline
\multicolumn{1}{|c|}{Mul-2}                                                         & \multicolumn{1}{c|}{$0.898/0.775$} & \multicolumn{1}{c|}{\textbf{$\mathbf{0.901/0.797}$}} & \multicolumn{1}{c|}{$0.883/0.754$} & \multicolumn{1}{c|}{\textbf{$\mathbf{0.887/0.777}$}} & \multicolumn{1}{c|}{$0.851/0.691$} & \multicolumn{1}{c|}{\textbf{$\mathbf{0.861/0.723}$}} & \multicolumn{1}{c|}{\textbf{$\mathbf{0.953/0.871}$}} & $0.949/0.842$          \\ \hline
\multicolumn{1}{|c|}{Mul-3}                                                         & \multicolumn{6}{c|}{*} & \multicolumn{1}{c|}{$0.926/0.771$}          & \textbf{$\mathbf{0.935/0.814}$} \\ \hline
\multicolumn{9}{|c|}{Computational Complexity}                                \\ \hline
\multicolumn{1}{|c|}{FLOPs(B)}                                                      & \multicolumn{1}{c|}{$2.46$}          & \multicolumn{1}{c|}{\textbf{$\mathbf{0.0307}$}}        & \multicolumn{1}{c|}{$2.00$}          & \multicolumn{1}{c|}{\textbf{$\mathbf{0.0249}$}}        & \multicolumn{1}{c|}{$1.54$}          & \multicolumn{1}{c|}{\textbf{$\mathbf{0.0192}$}}        & \multicolumn{1}{c|}{$1.068$}                  & $\mathbf{0.0135}$                 \\ \hline
\end{tabular}}
%}

\caption{\textbf{Test accuracy and computational complexity comparison between ONN (ours) and OvA approaches across different multiplicity imbalance-ness for both real and synthetic datasets.} Both approaches are trained with fixed ETF classifier. Models are trained using ResNet18 structure and reported accuracy are the average over $3$ models with random initialization. We report both F1-score and subset zero-one accuracy that only count for perfect predictions \cite{dembczynski2010regret}.} \label{tab:ONN_OvA}
%    \end{minipage}
    % \hfill
\end{table*}

\begin{table*}
    
   % \begin{minipage}{0.48\textwidth}
\centering
\resizebox{0.7\linewidth}{!}{

\begin{tabular}{|lllllllll|}
\hline
\multicolumn{1}{|l|}{\multirow{2}{*}{\textbf{Dataset / Arch.}}} & \multicolumn{2}{l|}{ResNet18}                                                                 & \multicolumn{2}{l|}{ResNet50}                                                                & \multicolumn{2}{l|}{VGG16}                                                                    & \multicolumn{2}{l|}{VGG19}                                                                    \\ \cline{2-9} 
\multicolumn{1}{|l|}{}                                          & \multicolumn{1}{l|}{Learned}            & \multicolumn{1}{l|}{ETF}                            & \multicolumn{1}{l|}{Learned}            & \multicolumn{1}{l|}{ETF}                           & \multicolumn{1}{l|}{Learned}            & \multicolumn{1}{l|}{ETF}                            & \multicolumn{1}{l|}{Learned}            & ETF                                                 \\ \hline
\multicolumn{9}{|c|}{Test IoU (\%)}                                                                                                                                                                                                                                                                                                                                                                                                                            \\ \hline
\multicolumn{1}{|l|}{MLab-MNIST}                                & \multicolumn{1}{l|}{99.5}               & \multicolumn{1}{l|}{99.4}                           & \multicolumn{1}{l|}{99.4}               & \multicolumn{1}{l|}{99.4}                          & \multicolumn{1}{l|}{99.5}               & \multicolumn{1}{l|}{99.5}                           & \multicolumn{1}{l|}{99.5}               & 99.5                                                \\ \hline
\multicolumn{1}{|l|}{MLab-Cifar10}                              & \multicolumn{1}{l|}{87.7}               & \multicolumn{1}{l|}{87.7}                           & \multicolumn{1}{l|}{88.9}               & \multicolumn{1}{l|}{88.6}                          & \multicolumn{1}{l|}{86.9}               & \multicolumn{1}{l|}{87.4}                           & \multicolumn{1}{l|}{88.7}               & 87.0                                                \\ \hline
\multicolumn{9}{|c|}{Percentage of parameter saved (\%)}                                                                                                                                                                                                                                                                                                                                                                                                       \\ \hline
\multicolumn{1}{|l|}{MLab-MNIST}                                & \multicolumn{1}{c|}{\multirow{2}{*}{0}} & \multicolumn{1}{c|}{\multirow{2}{*}{\textbf{20.7}}} & \multicolumn{1}{c|}{\multirow{2}{*}{0}} & \multicolumn{1}{c|}{\multirow{2}{*}{\textbf{4.5}}} & \multicolumn{1}{c|}{\multirow{2}{*}{0}} & \multicolumn{1}{c|}{\multirow{2}{*}{\textbf{15.8}}} & \multicolumn{1}{c|}{\multirow{2}{*}{0}} & \multicolumn{1}{c|}{\multirow{2}{*}{\textbf{11.6}}} \\ \cline{1-1}
\multicolumn{1}{|l|}{\textbf{MLab-Cifar10}}                     & \multicolumn{1}{c|}{}                   & \multicolumn{1}{c|}{}                               & \multicolumn{1}{c|}{}                   & \multicolumn{1}{c|}{}                              & \multicolumn{1}{c|}{}                   & \multicolumn{1}{c|}{}                               & \multicolumn{1}{c|}{}                   & \multicolumn{1}{c|}{}                               \\ \hline
\end{tabular}}
    \caption{\textbf{Comparison of the performances and parameter efficiency between learned and fixed ETF classifier.} When counting parameters, we consider all parameters that require gradient calculation during back-propagation.}
    \label{tab:etf_exp_balance}
    % \end{minipage}
    \vspace{-0.1in}
\end{table*}

\subsection{Practical Implications for \MLab~Learning}\label{sub:exp-implications}
Finally, we show that our findings lead to improved prediction and training for \MLab~learning. For prediction, Our ONN encoding approach attains greater accuracy than the OvA method with improved efficiency, eliminating the need for extra classifier training, as shown in \Cref{tab:ONN_OvA}. For training, our theory supports reducing feature dimension and maintaining a fixed classifier structure without sacrificing training accuracy as shown in \Cref{tab:etf_exp_balance}. Additionally, the detail descriptions of training datasets with experimental setups can be found in \Cref{appendix-section:test_data_show}

\subsubsection{Implication I:  \MLab~NC guided methods for improved test performance.} 
As discussed in \Cref{sec:intro}, the classical OvA and PAL methods have several fundamental limitations, specially, when it comes to how to convert outputs of the model into tags (i.e., binary vectors). In this part, we show that we can improve the PAL based method by our findings. Specifically, our proposed method and comparison baseline are the following.

\begin{itemize}[leftmargin=*]
    \item \textbf{\emph{Proposed} ``one-nearest-neighbor'' (ONN) encoding method}: supported by the \MLab~NC, features within each class collapse to their means across all multiplicities. Utilizing this, encoding new testing data into binary vectors is simplified: a one-nearest-neighbor calculation is performed between the testing data's features and all class means.

    \item \textbf{\emph{Classical} ``one-versus-all'' (OvA) encoding method}: divides the task into multiple binary classification subtasks, where each needs to train an individual binary classifier for every label based on pre-trained features. During testing,  thresholding is used to convert the outputs of the model (i.e., the logits) into tags.
\end{itemize}

We compared our ONN method with OvA across three synthetic \MLab~Cifar-10 datasets and one real-world dataset with different data imbalanced-ness. Two types of test accuracies, F1-score and subset zero-one, are reported in \cref{tab:ONN_OvA}. The F1-score provides a more balanced performance measure, whereas subset zero-one accuracy only considers perfect matches to the ground truth, disregarding partially correct predictions. The detailed setup of training and dataset could be find in \Cref{appendix-section:test_data_show}. As we observe from \cref{tab:ONN_OvA}, our ONN method uniformly outperforms OvA in overall accuracy, with higher accuracy especially in higher multiplicities. Simultaneously, our ONN eliminates the necessity of training multiple binary classifiers unlike OvA, leading to substantially lower computational complexity for predictions, quantified in billions of FLOPs. Remarkably, even when dealing with the class-imbalanced real \MLab~SVHN dataset, our ONN method consistently achieves an overall higher accuracy than OvA.

\subsubsection{Implication II: \MLab~NC guided parameter-efficient training.} %\label{subsec:exp_ParaEff} 
With the knowledge of \MLab~NC in hand, we can make direct modifications to the model architecture to achieve parameter savings without compromising performance for \MLab~classification. Specifically, parameter saving could come from two folds: (i) given the existence of \NC\ in the multi-label case with $d \geq K$, we can reduce the dimensionality of the penultimate features to match the number of labels (i.e., we set $d=K$); (ii) recognizing that the final linear classifier will converge to a simplex ETF as the training converges, we can initialize the weight matrix of the classifier as a simplex ETF from the start and refrain from updating it during training. By doing so, our experimental results in \Cref{tab:etf_exp_balance} demonstrate that we can achieve parameter reductions of up to $20\%$ without sacrificing the performance of the model.\footnote{We use intersection over union (IoU) to measure model performances, in \MLab, we define $\text{IoU}(\hat{\mb y}, \mb y) = ||\mb y||_0 \cdot (\hat{\mb y}^\top \mb y)  \in [0,1] $. Here, the ground truth $\mb y$ represents a probability vector.}

\section{Conclusion}\label{sec:conclusion}
%xxxxxxxxxxxxxxx

%\qq{one paragraph conclusion, this section needs to be significantly reduced, move to appendices}
% \vspace{-0.1in}
% In this study, we extensively analyzed the NC phenomenon in \MLab~learning. In theory, our results establish that \MLab~ETFs are the only global minimizers of the PAL loss funcclassification.ting weight decay and bias. In practice, these findings hold significantimplications for improving the performance and efficiency of \MLab~tasks in both testing and training stages. More detailed discussions on future directions can be found in \Cref{app:related_works}. \textcolor{red}{Discuss about extreme multi-label classification: what is it? how our setting is realistic but not extreme? leave it as an open problem (cite Peng's work and the database mentioned by reviewer)}

In this study, we extensively analyzed the NC phenomenon in \MLab~learning. In theory, our results establish that \MLab~ETFs are the only global minimizers of the PAL loss function, incorporating weight decay and bias. In practice, these findings hold significant implications for improving the performance and efficiency of \MLab~tasks in both testing and training stages. Our work also fosters future directions in multi-label learning such as dealing with data imbalanced-ness, investigating other losses \cite{han2022neural, zhou2022optimization} or designing a better training loss are all worthy directions of pursuit. 

    % \item \textbf{Improving test performance by designing better decision rules.} By reducing \MLab~to \MClf, the PAL loss considered in this work leverages the dependency and relationship between each label. Nevertheless, it is worth noting that during the testing phase, PAL yields a real-valued vector as its output, requiring extra steps for converting it into a multi-hot binary label representation. To obtain the multi-hot binary label, varying threshold levels or employing different encoding techniques are required, which yield different multi-hot labels and hence result in unpredictable testing performance. Therefore, better decision rules are needed for \MLab~learning. Based upon the observation of the \MLab~NC, we could potentially design a better threshold-free decision rule simply by using the feature nearest center. More specifically, the \MLab~NC demonstrates that features collapse to their means across all multiplicities. With this geometric understanding, a simple yet intuitive approach for encoding the predicted real value vector into a multi-hot representation for testing data is to perform a nearest class center calculation between the feature of test data and all the feature means. 
     
\paragraph{Dealing with data imbalanced-ness.} As many multi-label datasets are imbalanced, another important direction is to investigate the more challenging cases where the Multiplicity-1 training data are imbalanced. We suspect a more general minority collapse phenomenon would happen \cite{fang2021exploring,thrampoulidis2022imbalance,zhai2023investigating}. A promising approach might be employing the \emph{Simplex-Encoded-Labels Interpolation (SELI) framework}, which is related to the singular value decomposition of the simplex-encoded label matrix \cite{thrampoulidis2022imbalance}. Nonetheless, we conjecture that the scaled average property will still hold between higher multiplicity features and their multiplicity-1 features. On the other hand, when  Multiplicity-1 training data are imbalanced, it is also worth studying creating a balanced dataset through data augmentation by leveraging recent advances in diffusion models \cite{ho2020denoising,trabucco2023effective,zhang2023emergence}. 
    % \LP{We could use diffusion model \cite{ho2020denoising} to learn the hidden distribution of the minority class in order to generate more data to create the overall balanced dataset.}

    % \item \textbf{using diffusing model to create balanced datasets.}
    
\paragraph{Designing better training loss.} Prior research has underscored the significance of mitigating within-class variability collapse to improve the transferability of learned models \cite{li2022principled}. In the context of \MLab~problems, the principle of Maximal Coding Rate Reduction (MCR$^2$) has been designed and effectively employed to foster feature diversity and discrimination, thereby preventing collapse \cite{yu2020learning,chan2022redunet}. We believe that our \MLab~NC could offer insights into the development of analogous loss functions for \MLab~learning, with the goal of promoting diversity in features.
    %NC not welcome for transfer leanring. \MClf has MCR2 to counter act NC, further analysis needed to see if \MLab~NC is pervasive across different losses. If so, similar to MCR2, \MLab NC could be used to designed better multi-label losses that does not collapse.}
% \end{itemize}

% \begin{itemize}
\paragraph{Extreme Multi-label classification.} The goal of extreme multi-label classification is to tag a data point with the most relevant subset of labels from an extremely large label set. We have explored our \MLab~NC phenomenon on the more challenging Microsoft COCO dataset \cite{lin2014microsoft} with minimal preprocessing to maintain balance among multiplicity-1 data. The visualization of \MLab~NC measures is shown in \cref{fig:cocoNC}. 
\begin{figure*}[h]
\centering
\includegraphics[width=0.95\textwidth]{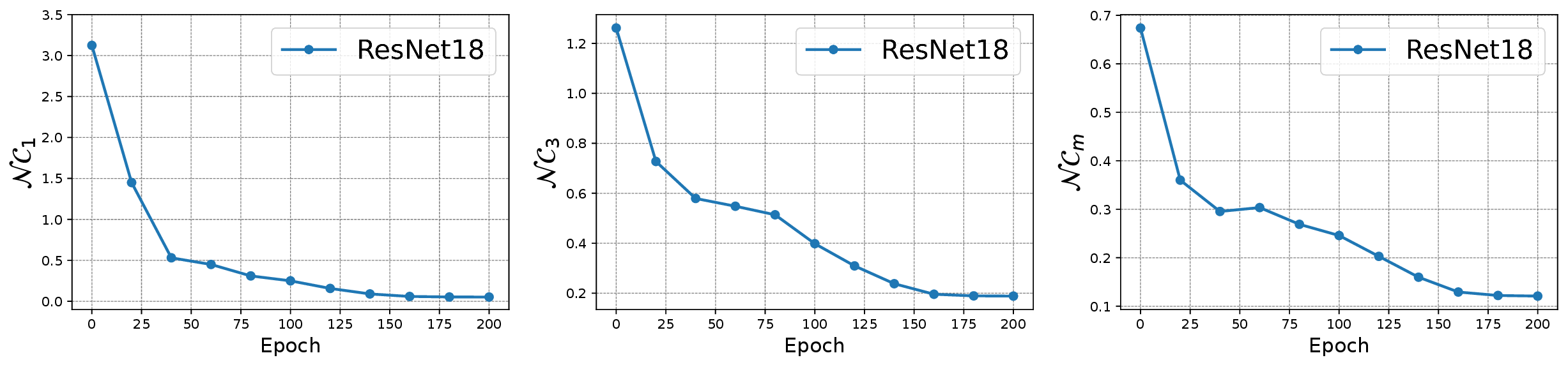} \    \vspace{-0.1in}
    \caption{\MLab~NC phenomenon in extreme MS-COCO dataset. As we can see that all \MLab~NC measures converges to small values.}
    \label{fig:cocoNC}
\end{figure*}
Specifically, a subset of the COCO dataset comprising 32,083 samples was extracted. We included 50 classes with all higher multiplicity samples possible. Most samples belong to higher multiplicity (29,583 samples) compared to multiplicity-1 samples (only 2,500). The ResNet18 architecture was trained for 200 epochs, achieving a training IoU of around $98 \%$. For training efficiency, we fixed the classifier W as an ETF and downsampled images to 64 by 64 pixels. Under these experimental setups, we observed that $\mathcal{NC}_1$, $\mathcal{NC}_3$, and $\mathcal{NC}_m$ all converged to small values, demonstrating the Mlab-NC phenomenon.
% \begin{figure*}[h]
% \centering
% \includegraphics[width=0.95\textwidth]{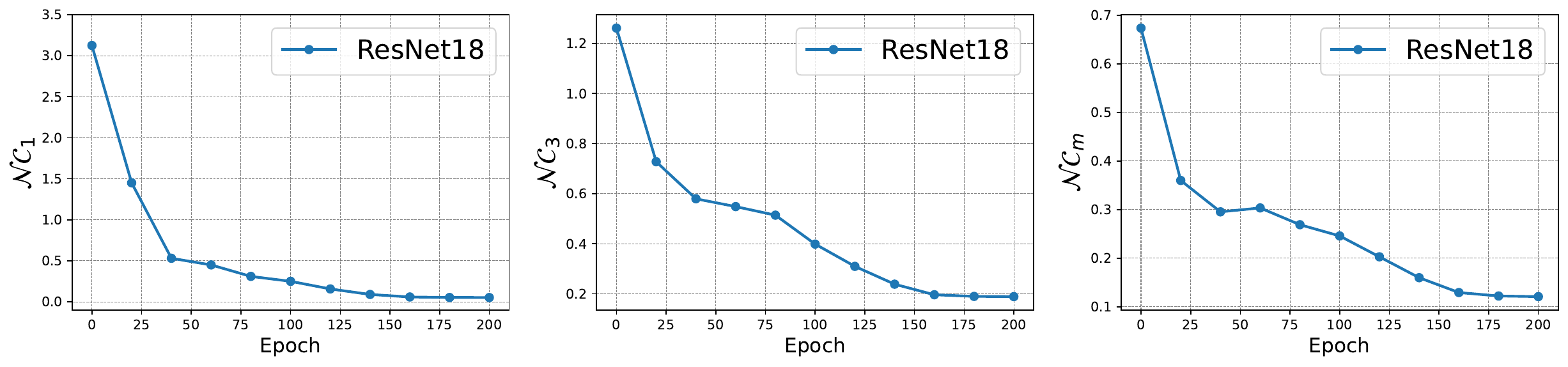} \    \vspace{-0.1in}
%     \caption{\MLab~NC phenomenon in extreme MS-COCO dataset. As we can see that all \MLab~NC measures converges to small values.}
%     \label{fig:cocoNC}
% \end{figure*}

\begin{wrapfigure}{r}{7.5cm}
\vspace{-0.8cm} 
\centering{\includegraphics[width=7.5cm]{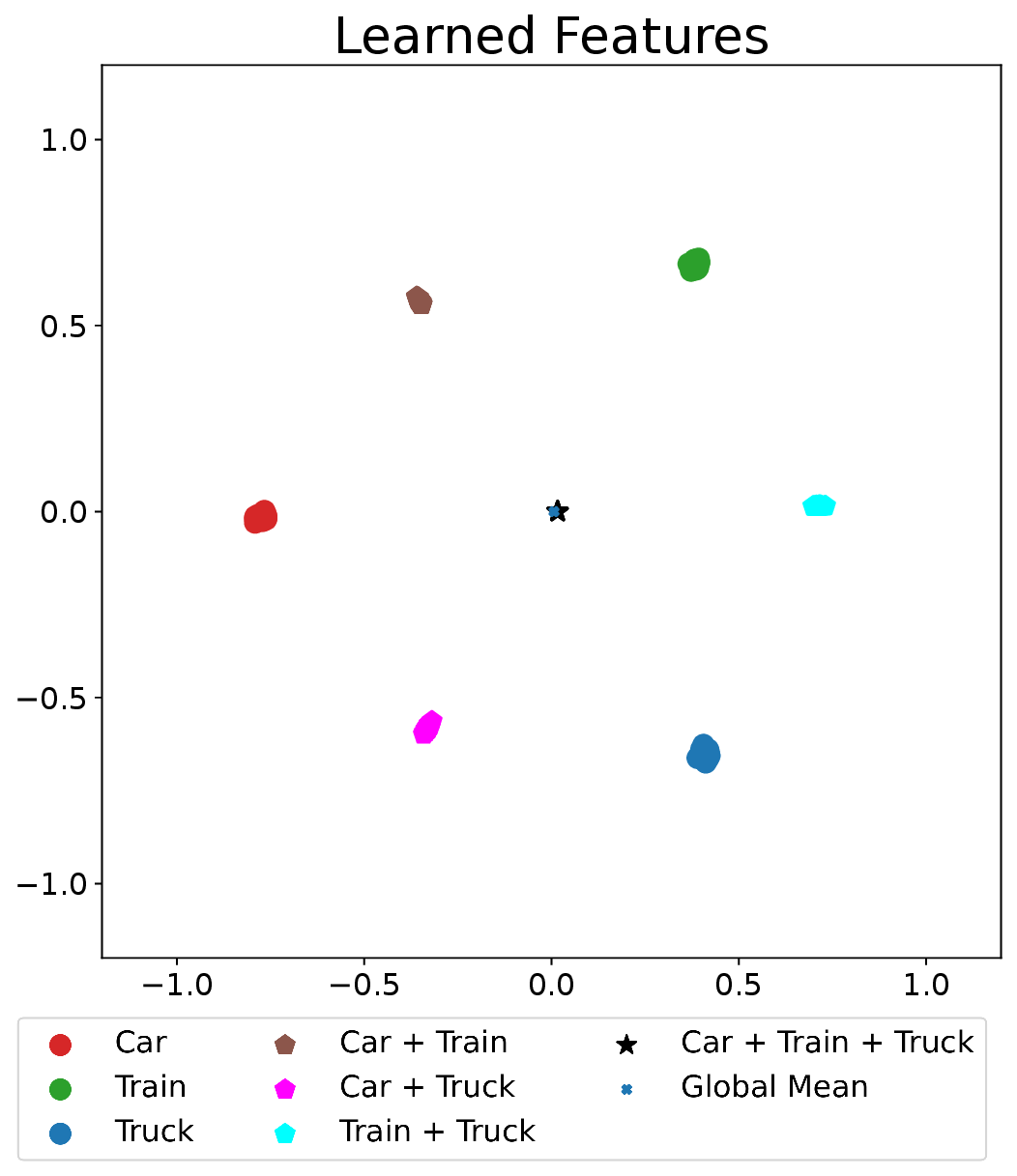}} \vspace{-0.7cm}   
     \caption{\MLab~NC: learned features for data samples with all possible labels is at the origin and follows tag-wise average property.}
     \label{fig:toy_cocoNC_origin}
     \vspace{-0.8cm}
\end{wrapfigure} 
While the Microsoft COCO dataset indeed contains a large number of labels, there is still a gap between its setup and the most extreme multi-label datasets. The work by \cite{jiang2023generalized} studied the phenomenon of neural collapse under extreme multi-class classification where the number of labels exceeds the dimension of learned features. To our knowledge, no one has studied neural collapse in the context of extreme multi-label classification. Our tentative experiments on the Microsoft COCO dataset demonstrate the potential to develop a generalized neural collapse for multi-label classification with a large number of labels.

\paragraph{Features for data with all possible tags.}
% While our theoretical analysis on the scaled tag-wise average property is based on the assumption that the multiplicity of a data point is smaller than total labels (i.e. $M < K$), experimental results on the toy Microsoft COCO dataset indicate that the tag-wise average property still holds for data samples with multiplicity $M = K$. The visualization of learned data features are shown in \cref{fig:toy_cocoNC_origin}. Specifically, we pick three classes: Car, Train, and Truck, where each sample can have $1-3$ labels. This means that the dataset comprises samples in mul-1, mul-2, and mul-3, based on the number of labels they carry. As we can see that both the global feature mean and the multiplicity-3 features are indeed at the origin. 
While our theoretical analysis of the scaled tag-wise average property assumes that the multiplicity of a data point is less than the total number of classes (i.e., $M<K$), experimental results on the toy Microsoft COCO dataset indicate that the tag-wise average property still holds for data samples with multiplicity $M=K$. The visualization of learned data features is shown in \cref{fig:toy_cocoNC_origin}. Specifically, we selected three classes: Car, Train, and Truck, where each sample can have $1$ to $3$ labels. This means that the dataset comprises samples with multiplicities of $1$, $2$, and $3$, based on the number of labels they carry. We then trained a 5-layer convolutional network using the selected data and we chose a feature dimension $2$ for visualization purposes. As shown in the visualization, both the global feature mean and the multiplicity-3 features are indeed located at the origin.

\section*{Code Availability}
Our code is publicly available at \url{https://github.com/Heimine/NC_MLab/}.

\section*{Acknowledgement}
The authors acknowledge support from NSF CAREER CCF-2143904, NSF CCF-2212066, NSF CCF-2212326, NSF IIS 2312842, ONR N00014-22-1-2529, an AWS AI Award, a gift grant from KLA, and MICDE Catalyst Grant. YW also acknowledges the support from the Eric and Wendy Schmidt AI in Science Postdoctoral
Fellowship, a Schmidt Futures program. Results presented in this paper were obtained using CloudBank, which is supported by the NSF under Award \#1925001.

%\newpage 
%\section*{Reproductivity Statement}
%\input{sections/reproductivity}

{\small 
\bibliographystyle{plain}
\bibliography{references}

\begin{thebibliography}{10}

\bibitem{behnia2023implicit}
Tina Behnia, Ganesh~Ramachandra Kini, Vala Vakilian, and Christos Thrampoulidis.
\newblock On the implicit geometry of cross-entropy parameterizations for label-imbalanced data.
\newblock In {\em International Conference on Artificial Intelligence and Statistics}, pages 10815--10838. PMLR, 2023.

\bibitem{bengio2013representation}
Yoshua Bengio, Aaron Courville, and Pascal Vincent.
\newblock Representation learning: A review and new perspectives.
\newblock {\em IEEE transactions on pattern analysis and machine intelligence}, 35(8):1798--1828, 2013.

\bibitem{blondel2020learning}
Mathieu Blondel, Andr{\'e}~FT Martins, and Vlad Niculae.
\newblock Learning with fenchel-young losses.
\newblock {\em Journal of Machine Learning Research}, 21:1--69, 2020.

\bibitem{brinker2006unified}
Klaus Brinker, Johannes F{\"u}rnkranz, and Eyke H{\"u}llermeier.
\newblock A unified model for multilabel classification and ranking.
\newblock In {\em Proceedings of the 2006 conference on ECAI 2006: 17th European Conference on Artificial Intelligence August 29--September 1, 2006, Riva del Garda, Italy}, pages 489--493, 2006.

\bibitem{chan2022redunet}
Kwan Ho~Ryan Chan, Yaodong Yu, Chong You, Haozhi Qi, John Wright, and Yi~Ma.
\newblock Redunet: A white-box deep network from the principle of maximizing rate reduction.
\newblock {\em The Journal of Machine Learning Research}, 23(1):4907--5009, 2022.

\bibitem{chen2022perfectly}
Mayee Chen, Daniel~Y Fu, Avanika Narayan, Michael Zhang, Zhao Song, Kayvon Fatahalian, and Christopher R{\'e}.
\newblock Perfectly balanced: Improving transfer and robustness of supervised contrastive learning.
\newblock In {\em International Conference on Machine Learning}, pages 3090--3122. PMLR, 2022.

\bibitem{cheng2010bayes}
Weiwei Cheng, Eyke H{\"u}llermeier, and Krzysztof~J Dembczynski.
\newblock Bayes optimal multilabel classification via probabilistic classifier chains.
\newblock In {\em International Conference on Machine Learning}, pages 279--286, 2010.

\bibitem{cybenko1989approximation}
G~Cybenko.
\newblock Approximation by superposition of sigmoidal functions.
\newblock {\em Mathematics of Control, Signals and Systems}, 2(4):303--314, 1989.

\bibitem{dang2023neural}
Hien Dang, Tho Tran, Stanley Osher, Hung Tran-The, Nhat Ho, and Tan Nguyen.
\newblock Neural collapse in deep linear networks: From balanced to imbalanced data.
\newblock In {\em International Conference on Machine Learning}, 2023.

\bibitem{dembczynski2010regret}
Krzysztof Dembczy{\'n}ski, Willem Waegeman, Weiwei Cheng, and Eyke H{\"u}llermeier.
\newblock Regret analysis for performance metrics in multi-label classification: the case of hamming and subset zero-one loss.
\newblock In {\em Machine Learning and Knowledge Discovery in Databases: European Conference, ECML PKDD 2010, Barcelona, Spain, September 20-24, 2010, Proceedings, Part I 21}, pages 280--295. Springer, 2010.

\bibitem{fang2021exploring}
Cong Fang, Hangfeng He, Qi~Long, and Weijie~J Su.
\newblock Exploring deep neural networks via layer-peeled model: Minority collapse in imbalanced training.
\newblock {\em Proceedings of the National Academy of Sciences}, 118(43):e2103091118, 2021.

\bibitem{galanti2022note}
Tomer Galanti.
\newblock A note on the implicit bias towards minimal depth of deep neural networks.
\newblock {\em arXiv preprint arXiv:2202.09028}, 2022.

\bibitem{galanti2022generalization}
Tomer Galanti, Andr{\'a}s Gy{\"o}rgy, and Marcus Hutter.
\newblock Generalization bounds for transfer learning with pretrained classifiers.
\newblock {\em arXiv preprint arXiv:2212.12532}, 2022.

\bibitem{galanti2022role}
Tomer Galanti, Andr{\'a}s Gy{\"o}rgy, and Marcus Hutter.
\newblock On the role of neural collapse in transfer learning.
\newblock In {\em International Conference on Learning Representations}, 2022.

\bibitem{gao2023study}
Peifeng Gao, Qianqian Xu, Peisong Wen, Huiyang Shao, Zhiyong Yang, and Qingming Huang.
\newblock A study of neural collapse phenomenon: Grassmannian frame, symmetry, generalization.
\newblock {\em arXiv preprint arXiv:2304.08914}, 2023.

\bibitem{ge2015escaping}
Rong Ge, Furong Huang, Chi Jin, and Yang Yuan.
\newblock Escaping from saddle points---online stochastic gradient for tensor decomposition.
\newblock In {\em Proceedings of The 28th Conference on Learning Theory}, pages 797--842, 2015.

\bibitem{graf2021dissecting}
Florian Graf, Christoph Hofer, Marc Niethammer, and Roland Kwitt.
\newblock Dissecting supervised contrastive learning.
\newblock In {\em International Conference on Machine Learning}, pages 3821--3830. PMLR, 2021.

\bibitem{han2022neural}
XY~Han, Vardan Papyan, and David~L Donoho.
\newblock Neural collapse under mse loss: Proximity to and dynamics on the central path.
\newblock In {\em International Conference on Learning Representations}, 2022.

\bibitem{he2022law}
Hangfeng He and Weijie~J. Su.
\newblock A law of data separation in deep learning.
\newblock {\em Proceedings of the National Academy of Sciences}, 120(36):e2221704120, 2023.

\bibitem{he2016deep}
Kaiming He, Xiangyu Zhang, Shaoqing Ren, and Jian Sun.
\newblock Deep residual learning for image recognition.
\newblock In {\em Proceedings of the IEEE conference on Computer Vision and Pattern Recognition}, pages 770--778, 2016.

\bibitem{ho2020denoising}
Jonathan Ho, Ajay Jain, and Pieter Abbeel.
\newblock Denoising diffusion probabilistic models.
\newblock {\em Advances in Neural Information Processing Systems}, 33:6840--6851, 2020.

\bibitem{hui2022limitations}
Like Hui, Mikhail Belkin, and Preetum Nakkiran.
\newblock Limitations of neural collapse for understanding generalization in deep learning.
\newblock {\em arXiv preprint arXiv:2202.08384}, 2022.

\bibitem{ioffe2015batch}
Sergey Ioffe and Christian Szegedy.
\newblock Batch normalization: Accelerating deep network training by reducing internal covariate shift.
\newblock In {\em International Conference on Machine Learning}, pages 448--456. PMLR, 2015.

\bibitem{ji2022unconstrained}
Wenlong Ji, Yiping Lu, Yiliang Zhang, Zhun Deng, and Weijie~J Su.
\newblock An unconstrained layer-peeled perspective on neural collapse.
\newblock In {\em International Conference on Learning Representations}, 2022.

\bibitem{jiang2023generalized}
Jiachen Jiang, Jinxin Zhou, Peng Wang, Qing Qu, Dustin Mixon, Chong You, and Zhihui Zhu.
\newblock Generalized neural collapse for a large number of classes.
\newblock {\em arXiv preprint arXiv:2310.05351}, 2023.

\bibitem{jin2017escape}
Chi Jin, Rong Ge, Praneeth Netrapalli, Sham~M Kakade, and Michael~I Jordan.
\newblock How to escape saddle points efficiently.
\newblock In {\em International conference on machine learning}, pages 1724--1732. PMLR, 2017.

\bibitem{kothapalli2023neural}
Vignesh Kothapalli.
\newblock Neural collapse: A review on modelling principles and generalization.
\newblock {\em Transactions on Machine Learning Research}, 2023.

\bibitem{krizhevsky2009learning}
Alex Krizhevsky, Geoffrey Hinton, et~al.
\newblock Learning multiple layers of features from tiny images.
\newblock {\em Master's thesis, Department of Computer Science, University of Toronto}, 2009.

\bibitem{lecun2015deep}
Yann LeCun, Yoshua Bengio, and Geoffrey Hinton.
\newblock Deep learning.
\newblock {\em nature}, 521(7553):436--444, 2015.

\bibitem{lecun2010mnist}
Yann LeCun, Corinna Cortes, and CJ~Burges.
\newblock Mnist handwritten digit database. at\&t labs, 2010.

\bibitem{lee2019first}
Jason~D Lee, Ioannis Panageas, Georgios Piliouras, Max Simchowitz, Michael~I Jordan, and Benjamin Recht.
\newblock First-order methods almost always avoid strict saddle points.
\newblock {\em Mathematical Programming}, 176:311--337, 2019.

\bibitem{li2022principled}
Xiao Li, Sheng Liu, Jinxin Zhou, Xinyu Lu, Carlos Fernandez-Granda, Zhihui Zhu, and Qing Qu.
\newblock Principled and efficient transfer learning of deep models via neural collapse.
\newblock {\em arXiv preprint arXiv:2212.12206}, 2022.

\bibitem{lin2017focal}
Tsung-Yi Lin, Priya Goyal, Ross Girshick, Kaiming He, and Piotr Doll{\'a}r.
\newblock Focal loss for dense object detection.
\newblock In {\em Proceedings of the IEEE international conference on computer vision}, pages 2980--2988, 2017.

\bibitem{lin2014microsoft}
Tsung-Yi Lin, Michael Maire, Serge Belongie, James Hays, Pietro Perona, Deva Ramanan, Piotr Doll{\'a}r, and C~Lawrence Zitnick.
\newblock Microsoft coco: Common objects in context.
\newblock In {\em Computer Vision--ECCV 2014: 13th European Conference, Zurich, Switzerland, September 6-12, 2014, Proceedings, Part V 13}, pages 740--755. Springer, 2014.

\bibitem{liu2021emerging}
Weiwei Liu, Haobo Wang, Xiaobo Shen, and Ivor~W Tsang.
\newblock The emerging trends of multi-label learning.
\newblock {\em IEEE transactions on pattern analysis and machine intelligence}, 44(11):7955--7974, 2021.

\bibitem{liu2023generalizing}
Weiyang Liu, Longhui Yu, Adrian Weller, and Bernhard Sch{\"o}lkopf.
\newblock Generalizing and decoupling neural collapse via hyperspherical uniformity gap.
\newblock In {\em The Eleventh International Conference on Learning Representations}, 2023.

\bibitem{loshchilov2017sgdr}
Ilya Loshchilov and Frank Hutter.
\newblock {SGDR}: Stochastic gradient descent with warm restarts.
\newblock In {\em International Conference on Learning Representations}, 2017.

\bibitem{lu2020neural}
Jianfeng Lu and Stefan Steinerberger.
\newblock Neural collapse under cross-entropy loss.
\newblock {\em Applied and Computational Harmonic Analysis}, 59:224--241, 2022.
\newblock Special Issue on Harmonic Analysis and Machine Learning.

\bibitem{lu2022importance}
Yiping Lu, Wenlong Ji, Zachary Izzo, and Lexing Ying.
\newblock Importance tempering: Group robustness for overparameterized models.
\newblock {\em arXiv preprint arXiv:2209.08745}, 2022.

\bibitem{menon2019multilabel}
Aditya~K Menon, Ankit~Singh Rawat, Sashank Reddi, and Sanjiv Kumar.
\newblock Multilabel reductions: what is my loss optimising?
\newblock {\em Advances in Neural Information Processing Systems}, 32, 2019.

\bibitem{mixon2020neural}
Dustin~G Mixon, Hans Parshall, and Jianzong Pi.
\newblock Neural collapse with unconstrained features.
\newblock {\em Sampling Theory, Signal Processing, and Data Analysis}, 2022.

\bibitem{moyano2018review}
Jose~M Moyano, Eva~L Gibaja, Krzysztof~J Cios, and Sebasti{\'a}n Ventura.
\newblock Review of ensembles of multi-label classifiers: models, experimental study and prospects.
\newblock {\em Information Fusion}, 44:33--45, 2018.

\bibitem{mukhoti2020calibrating}
Jishnu Mukhoti, Viveka Kulharia, Amartya Sanyal, Stuart Golodetz, Philip Torr, and Puneet Dokania.
\newblock Calibrating deep neural networks using focal loss.
\newblock {\em Advances in Neural Information Processing Systems}, 33:15288--15299, 2020.

\bibitem{netzer2011reading}
Yuval Netzer, Tao Wang, Adam Coates, Alessandro Bissacco, Bo~Wu, and Andrew~Y. Ng.
\newblock Reading digits in natural images with unsupervised feature learning.
\newblock In {\em NIPS Workshop on Deep Learning and Unsupervised Feature Learning 2011}, 2011.

\bibitem{papyan2020traces}
Vardan Papyan.
\newblock Traces of class/cross-class structure pervade deep learning spectra.
\newblock {\em Journal of Machine Learning Research}, 21(252):1--64, 2020.

\bibitem{papyan2020prevalence}
Vardan Papyan, XY~Han, and David~L Donoho.
\newblock Prevalence of neural collapse during the terminal phase of deep learning training.
\newblock {\em Proceedings of the National Academy of Sciences}, 117(40):24652--24663, 2020.

\bibitem{paszke2019pytorch}
Adam Paszke, Sam Gross, Francisco Massa, Adam Lerer, James Bradbury, Gregory Chanan, Trevor Killeen, Zeming Lin, Natalia Gimelshein, Luca Antiga, et~al.
\newblock Pytorch: An imperative style, high-performance deep learning library.
\newblock {\em Advances in Neural Information Processing Systems}, 32, 2019.

\bibitem{rangamani2022neural}
Akshay Rangamani and Andrzej Banburski-Fahey.
\newblock Neural collapse in deep homogeneous classifiers and the role of weight decay.
\newblock In {\em ICASSP 2022-2022 IEEE International Conference on Acoustics, Speech and Signal Processing (ICASSP)}, pages 4243--4247. IEEE, 2022.

\bibitem{rangamani2023feature}
Akshay Rangamani, Marius Lindegaard, Tomer Galanti, and Tomaso~A Poggio.
\newblock Feature learning in deep classifiers through intermediate neural collapse.
\newblock In {\em International Conference on Machine Learning}, pages 28729--28745. PMLR, 2023.

\bibitem{reddi2019stochastic}
Sashank~J Reddi, Satyen Kale, Felix Yu, Daniel Holtmann-Rice, Jiecao Chen, and Sanjiv Kumar.
\newblock Stochastic negative mining for learning with large output spaces.
\newblock In {\em The 22nd International Conference on Artificial Intelligence and Statistics}, pages 1940--1949. PMLR, 2019.

\bibitem{sharma2023learning}
Saurabh Sharma, Yongqin Xian, Ning Yu, and Ambuj Singh.
\newblock Learning prototype classifiers for long-tailed recognition.
\newblock {\em arXiv preprint arXiv:2302.00491}, 2023.

\bibitem{simonyan2014very}
Karen Simonyan and Andrew Zisserman.
\newblock Very deep convolutional networks for large-scale image recognition.
\newblock {\em CoRR}, abs/1409.1556, 2014.

\bibitem{smith2022cyclical}
Leslie~N Smith.
\newblock Cyclical focal loss.
\newblock {\em arXiv preprint arXiv:2202.08978}, 2022.

\bibitem{sun2015nonconvex}
Ju~Sun, Qing Qu, and John Wright.
\newblock When are nonconvex problems not scary?
\newblock In {\em NIPS Workshop on Nonconvex Optimization for Machine Learning}, 2015.

\bibitem{sun2016complete}
Ju~Sun, Qing Qu, and John Wright.
\newblock Complete dictionary recovery over the sphere ii: Recovery by riemannian trust-region method.
\newblock {\em IEEE Transactions on Information Theory}, 63(2):885--914, 2016.

\bibitem{szegedy2016rethinking}
Christian Szegedy, Vincent Vanhoucke, Sergey Ioffe, Jon Shlens, and Zbigniew Wojna.
\newblock Rethinking the inception architecture for computer vision.
\newblock In {\em Proceedings of the IEEE conference on Computer Vision and Pattern Recognition}, pages 2818--2826, 2016.

\bibitem{thrampoulidis2022imbalance}
Christos Thrampoulidis, Ganesh~Ramachandra Kini, Vala Vakilian, and Tina Behnia.
\newblock Imbalance trouble: Revisiting neural-collapse geometry.
\newblock In {\em Advances in Neural Information Processing Systems}, volume~35, pages 27225--27238, 2022.

\bibitem{tirer2022extended}
Tom Tirer and Joan Bruna.
\newblock Extended unconstrained features model for exploring deep neural collapse.
\newblock In {\em International Conference on Machine Learning}, 2022.

\bibitem{trabucco2023effective}
Brandon Trabucco, Kyle Doherty, Max Gurinas, and Ruslan Salakhutdinov.
\newblock Effective data augmentation with diffusion models.
\newblock {\em arXiv preprint arXiv:2302.07944}, 2023.

\bibitem{wang2022linear}
Peng Wang, Huikang Liu, Can Yaras, Laura Balzano, and Qing Qu.
\newblock Linear convergence analysis of neural collapse with unconstrained features.
\newblock In {\em OPT 2022: Optimization for Machine Learning (NeurIPS 2022 Workshop)}, 2022.

\bibitem{wang2023far}
Zijian Wang, Yadan Luo, Liang Zheng, Zi~Huang, and Mahsa Baktashmotlagh.
\newblock How far pre-trained models are from neural collapse on the target dataset informs their transferability.
\newblock In {\em Proceedings of the IEEE/CVF International Conference on Computer Vision}, pages 5549--5558, 2023.

\bibitem{xie2023neural}
Liang Xie, Yibo Yang, Deng Cai, and Xiaofei He.
\newblock Neural collapse inspired attraction-repulsion-balanced loss for imbalanced learning.
\newblock {\em Neurocomputing}, 2023.

\bibitem{xie2022hidden}
Shuo Xie, Jiahao Qiu, Ankita Pasad, Li~Du, Qing Qu, and Hongyuan Mei.
\newblock Hidden state variability of pretrained language models can guide computation reduction for transfer learning.
\newblock In {\em Empirical Methods in Natural Language Processing}, 2022.

\bibitem{yang2022inducing}
Yibo Yang, Shixiang Chen, Xiangtai Li, Liang Xie, Zhouchen Lin, and Dacheng Tao.
\newblock Inducing neural collapse in imbalanced learning: Do we really need a learnable classifier at the end of deep neural network?
\newblock In {\em Advances in Neural Information Processing Systems}, 2022.

\bibitem{yang2023neural}
Yibo Yang, Haobo Yuan, Xiangtai Li, Zhouchen Lin, Philip Torr, and Dacheng Tao.
\newblock Neural collapse inspired feature-classifier alignment for few-shot class-incremental learning.
\newblock In {\em The Eleventh International Conference on Learning Representations}, 2023.

\bibitem{yaras2023law}
Can Yaras, Peng Wang, Wei Hu, Zhihui Zhu, Laura Balzano, and Qing Qu.
\newblock The law of parsimony in gradient descent for learning deep linear networks.
\newblock {\em arXiv preprint arXiv:2306.01154}, 2023.

\bibitem{yaras2022neural}
Can Yaras, Peng Wang, Zhihui Zhu, Laura Balzano, and Qing Qu.
\newblock Neural collapse with normalized features: A geometric analysis over the riemannian manifold.
\newblock In {\em Advances in Neural Information Processing Systems}, 2022.

\bibitem{yu2023continual}
Longhui Yu, Tianyang Hu, Lanqing HONG, Zhen Liu, Adrian Weller, and Weiyang Liu.
\newblock Continual learning by modeling intra-class variation.
\newblock {\em Transactions on Machine Learning Research}, 2023.

\bibitem{yu2020learning}
Yaodong Yu, Kwan Ho~Ryan Chan, Chong You, Chaobing Song, and Yi~Ma.
\newblock Learning diverse and discriminative representations via the principle of maximal coding rate reduction.
\newblock {\em Advances in Neural Information Processing Systems}, 33:9422--9434, 2020.

\bibitem{zhai2023investigating}
Yuexiang Zhai, Shengbang Tong, Xiao Li, Mu~Cai, Qing Qu, Yong~Jae Lee, and Yi~Ma.
\newblock Investigating the catastrophic forgetting in multimodal large language models.
\newblock {\em arXiv preprint arXiv:2309.10313}, 2023.

\bibitem{zhang2021understanding}
Chiyuan Zhang, Samy Bengio, Moritz Hardt, Benjamin Recht, and Oriol Vinyals.
\newblock Understanding deep learning (still) requires rethinking generalization.
\newblock {\em Communications of the ACM}, 64(3):107--115, 2021.

\bibitem{zhang2023emergence}
Huijie Zhang, Jinfan Zhou, Yifu Lu, Minzhe Guo, Liyue Shen, and Qing Qu.
\newblock The emergence of reproducibility and consistency in diffusion models.
\newblock {\em arXiv preprint arXiv:2310.05264}, 2023.

\bibitem{zhang2020symmetry}
Yuqian Zhang, Qing Qu, and John Wright.
\newblock From symmetry to geometry: Tractable nonconvex problems.
\newblock {\em arXiv preprint arXiv:2007.06753}, 2020.

\bibitem{zhong2023understanding}
Zhisheng Zhong, Jiequan Cui, Yibo Yang, Xiaoyang Wu, Xiaojuan Qi, Xiangyu Zhang, and Jiaya Jia.
\newblock Understanding imbalanced semantic segmentation through neural collapse.
\newblock In {\em Proceedings of the IEEE/CVF Conference on Computer Vision and Pattern Recognition}, pages 19550--19560, 2023.

\bibitem{zhou2022optimization}
Jinxin Zhou, Xiao Li, Tianyu Ding, Chong You, Qing Qu, and Zhihui Zhu.
\newblock On the optimization landscape of neural collapse under mse loss: Global optimality with unconstrained features.
\newblock In {\em International Conference on Machine Learning}, pages 27179--27202. PMLR, 2022.

\bibitem{zhou2022all}
Jinxin Zhou, Chong You, Xiao Li, Kangning Liu, Sheng Liu, Qing Qu, and Zhihui Zhu.
\newblock Are all losses created equal: A neural collapse perspective.
\newblock {\em Advances in Neural Information Processing Systems}, 2022.

\bibitem{zhu2021geometric}
Zhihui Zhu, Tianyu Ding, Jinxin Zhou, Xiao Li, Chong You, Jeremias Sulam, and Qing Qu.
\newblock A geometric analysis of neural collapse with unconstrained features.
\newblock {\em Advances in Neural Information Processing Systems}, 34:29820--29834, 2021.

\end{thebibliography}
}

% \section{Appendix}
% You may include other additional sections here.

\appendix
\newpage 

\onecolumn
\par\noindent\rule{\textwidth}{1pt}
\begin{center}
{\Large \bf Appendix}
\end{center}
\vspace{-0.1in}
\par\noindent\rule{\textwidth}{1pt}

%\qq{we can put a table of contents here}
%\py{done}

%\makeatletter
%\setcounter{tocdepth}{3}
%\def\contentsname{Contents}
%\def\tableofcontents{%
%    \section*{\MakeUppercase{\contentsname}}%
%    \@starttoc{toc}%
%    }
%\makeatother
%\tableofcontents

%\qq{use the command of table of contents to generate}

\section*{Organization of Appendices}

%\begin{table*}[h!]
%\centering
%\begin{tabular}{|c|clll|}           \hline
%\Cref{app:data_show} & \multicolumn{4}{c|}{\textbf{Dataset Illustrations, Visualizations and Experimental Setups}} \\ \hline
%\Cref{app:optimality} & \multicolumn{4}{c|}{\textbf{Proofs for Optimality Condition}}          \\ \hline
%\Cref{app:landscape} & \multicolumn{4}{c|}{\textbf{Analysis for Nonconvex Landscape}}             \\ \hline
%\end{tabular}
%\caption{\textbf{Table of Contents for Appendices}}
%\end{table*}

 In \Cref{app:data_show}, We present all the datasets utilized for validating multi-label NC, guiding testing and training, along with the experimental setups introduced in this study. In \Cref{app:optimality}
and \Cref{app:landscape}, we present the proofs for results from the main paper \Cref{thm:GO_thm} (global optimality) and \Cref{thm:landscape} (benign landscapes), respectively.

\section{Dataset Illustration and Visualization} \label{app:data_show}

\begin{figure*}[h!]
    \centering
    \subfloat[MLab-MNIST]{\includegraphics[width=0.20\textwidth]{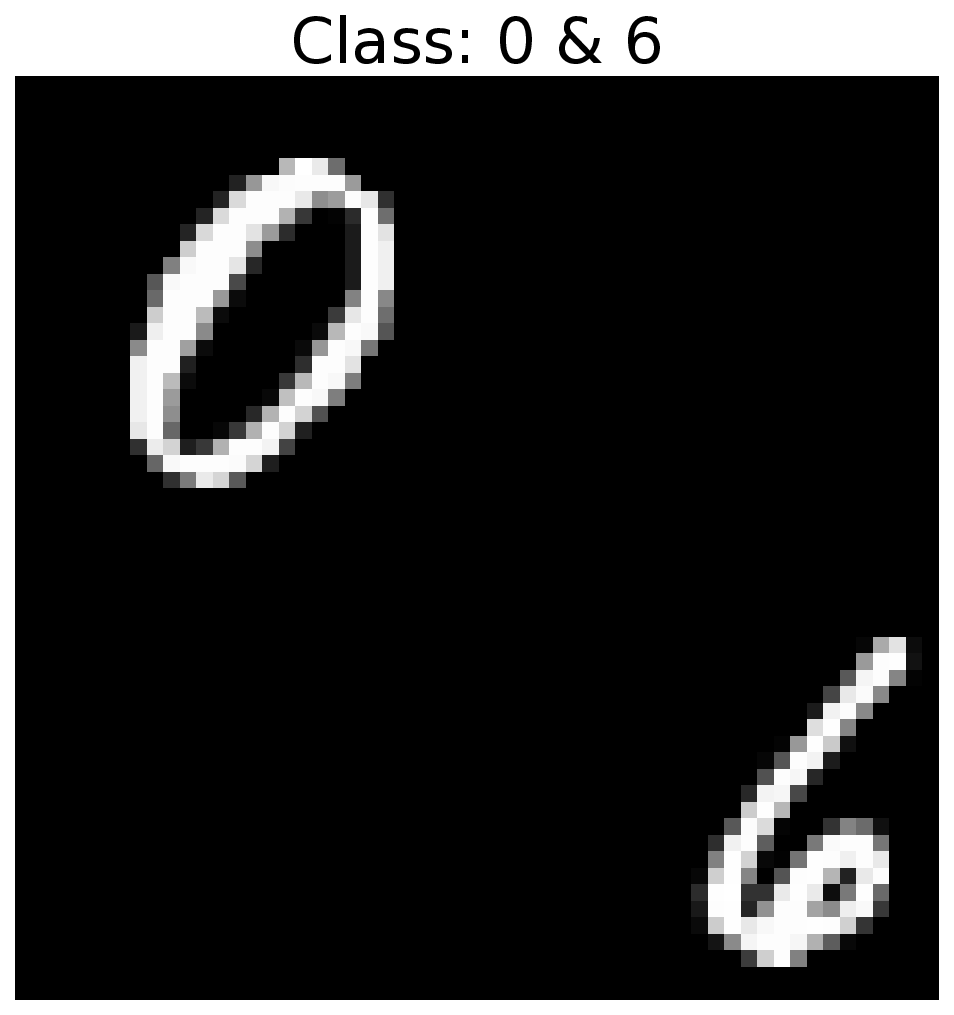}
    \includegraphics[width=0.20\textwidth]{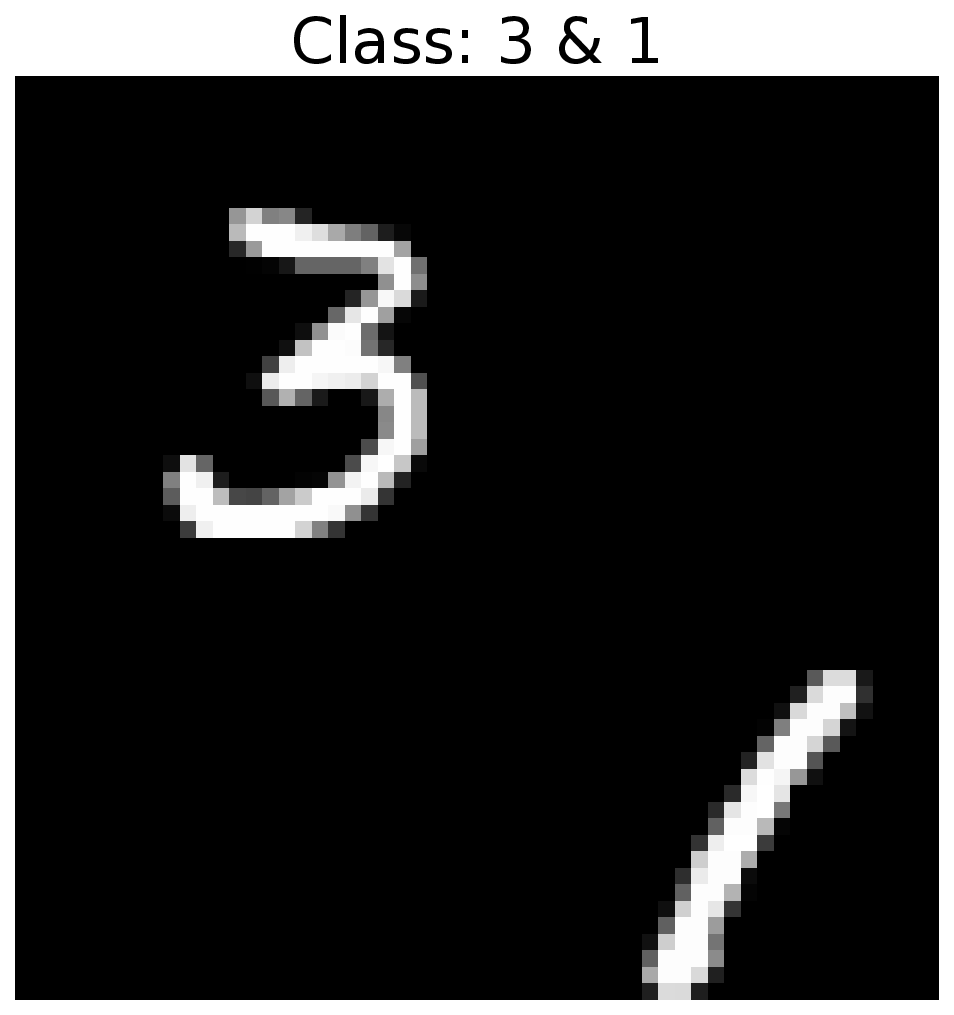}} \
    ~
    \subfloat[MLab-Cifar10] {\includegraphics[width=0.20\textwidth]{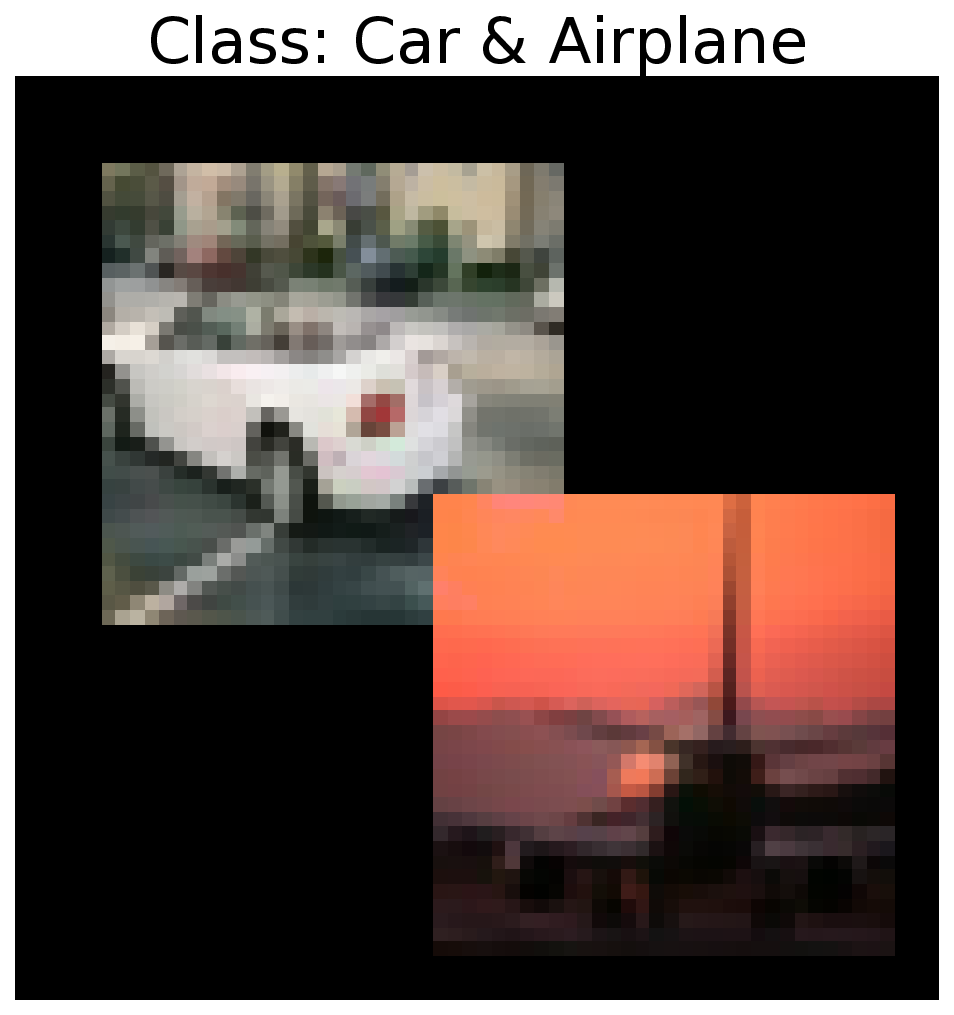}
    \includegraphics[width=0.20\textwidth]{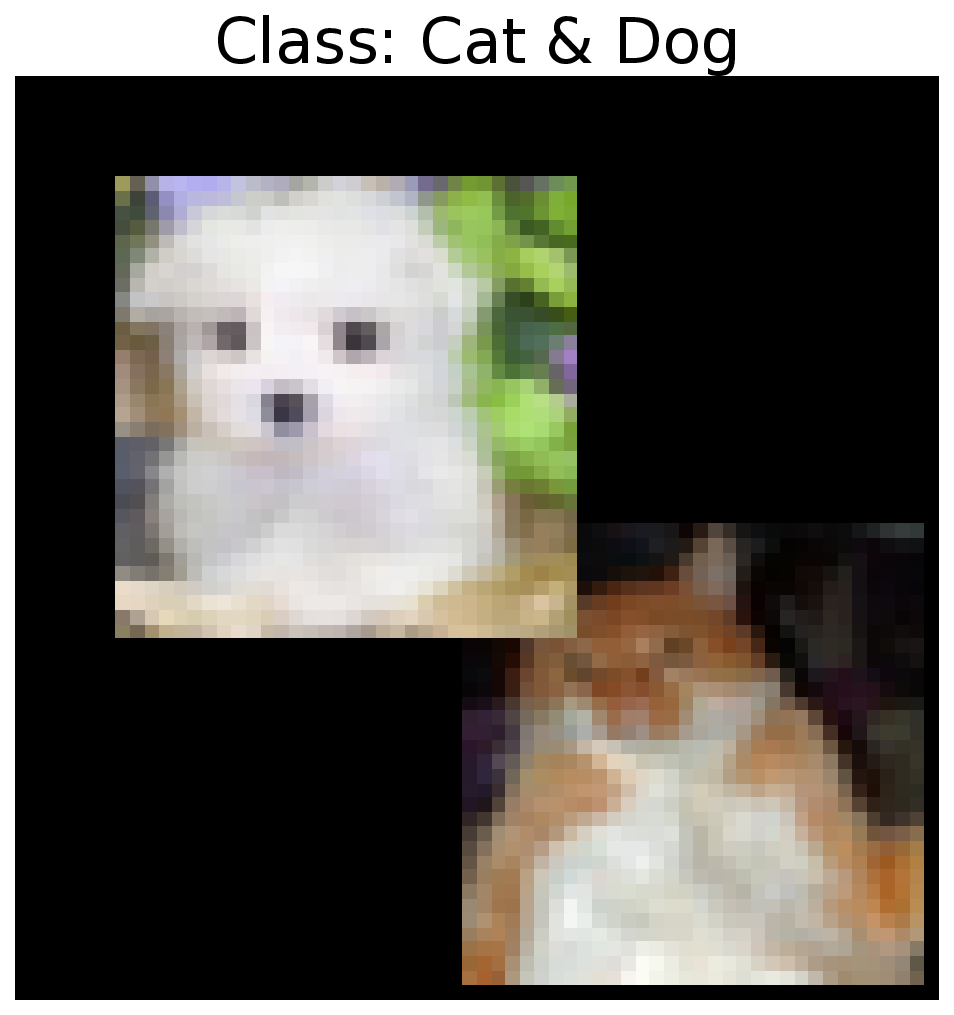}}\\
    \caption{\textbf{Illustration of synthetic multi-label MNIST (left) and Cifar10 (right) datasets.
    }}
    \label{fig:sample_mnist_c10}
\end{figure*}

\subsection{\MLab~MNIST and \MLab~Cifar10 dataset} \label{appendix-section:Mlab_MNIST_C10}
We created synthetic Multi-label MNIST \cite{lecun2010mnist} and Cifar10 \cite{krizhevsky2009learning} datasets by applying zero-padding to each image, increasing its width and height to twice the original size, and then combining it with another padded image from a different class. An illustration of generated multi-label samples can be found in Figure \ref{fig:sample_mnist_c10}. To create the training dataset, for $m=1$ scenario, we randomly picked $3100$ images in each class, and for $m=2$, we generated 200 images for each combination of classes using the pad-stack method described earlier. Therefore, the total number of images in the training dataset is calculated as $10 \times 3100 + \binom{10}{2} \times 200 = 40000$. Those dataset are used to generate results in \Cref{fig:NC-MNIST-CIFAR10} and \Cref{tab:etf_exp_balance}.  %For the test dataset, we included 800 images for each class in the $m=1$ scenario and 50 images for each combination of classes in the $m=2$ scenario, resulting in a total of 10250 images. 

In terms of training deep networks for \MLab, we use standard ResNet \cite{he2016deep} and VGG \cite{simonyan2014very} network architectures. Throughout all the experiments, we use an SGD optimizer with fixed batch size $128$, weight decay  $(\lambda_{\bW}, \lambda_{\bH}) = (5 \times 10^{-4}, 5 \times 10^{-4})$ and momentum $0.9$. The learning rate is initially set to $1 \times 10^{-1}$ and dynamically decays to $1 \times 10^{-3}$ following a CosineAnnealing learning rate scheduler as described in \cite{loshchilov2017sgdr}. The total number of epochs is set to $200$ for all experiments.

\subsection{Multiplicity 2 imbalanced \MLab~MNIST and \MLab~Cifar10 dataset} \label{appendix-section:imbala_Mlab_MNIST_C10}
Following the same padding rule described in \Cref{appendix-section:Mlab_MNIST_C10}, the  multiplicity-2 imbalanced data used to generate \Cref{fig:NC_High_imba_c10_Mnist} are created as follows. The cifar10 dataset has balanced Multiplicity-$1$ samples ($5000$ for each class). For the classes of Multiplicity-$2$, we divide them into $3$ groups: the large group ($500$ samples), the middle group ($50$ samples), and the small group ($5$ samples).

\begin{figure*}[t]
    \centering
    \subfloat[The SVHN Dataset]{\raisebox{0.6cm}{\includegraphics[width=0.38\textwidth]{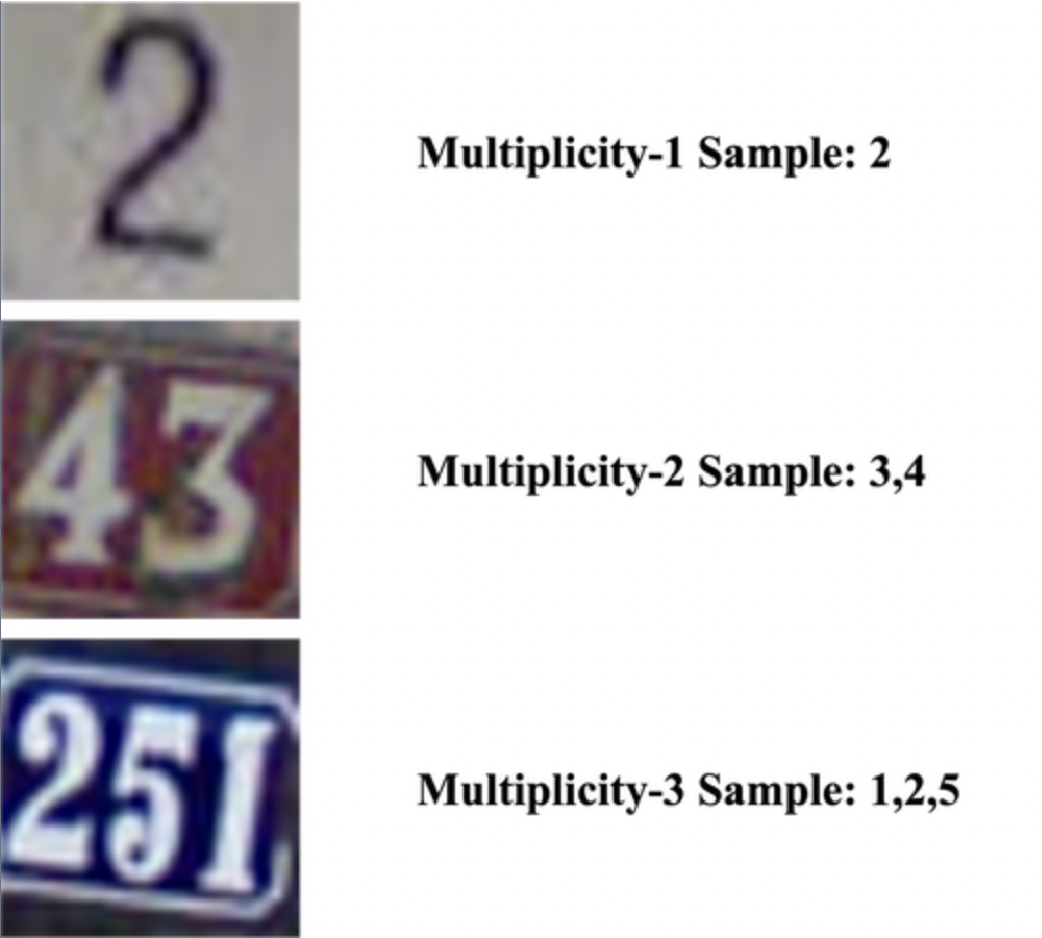}}}\
    \hspace{0.1in}
    \subfloat[Class Distribution]{\includegraphics[width=0.44\textwidth]{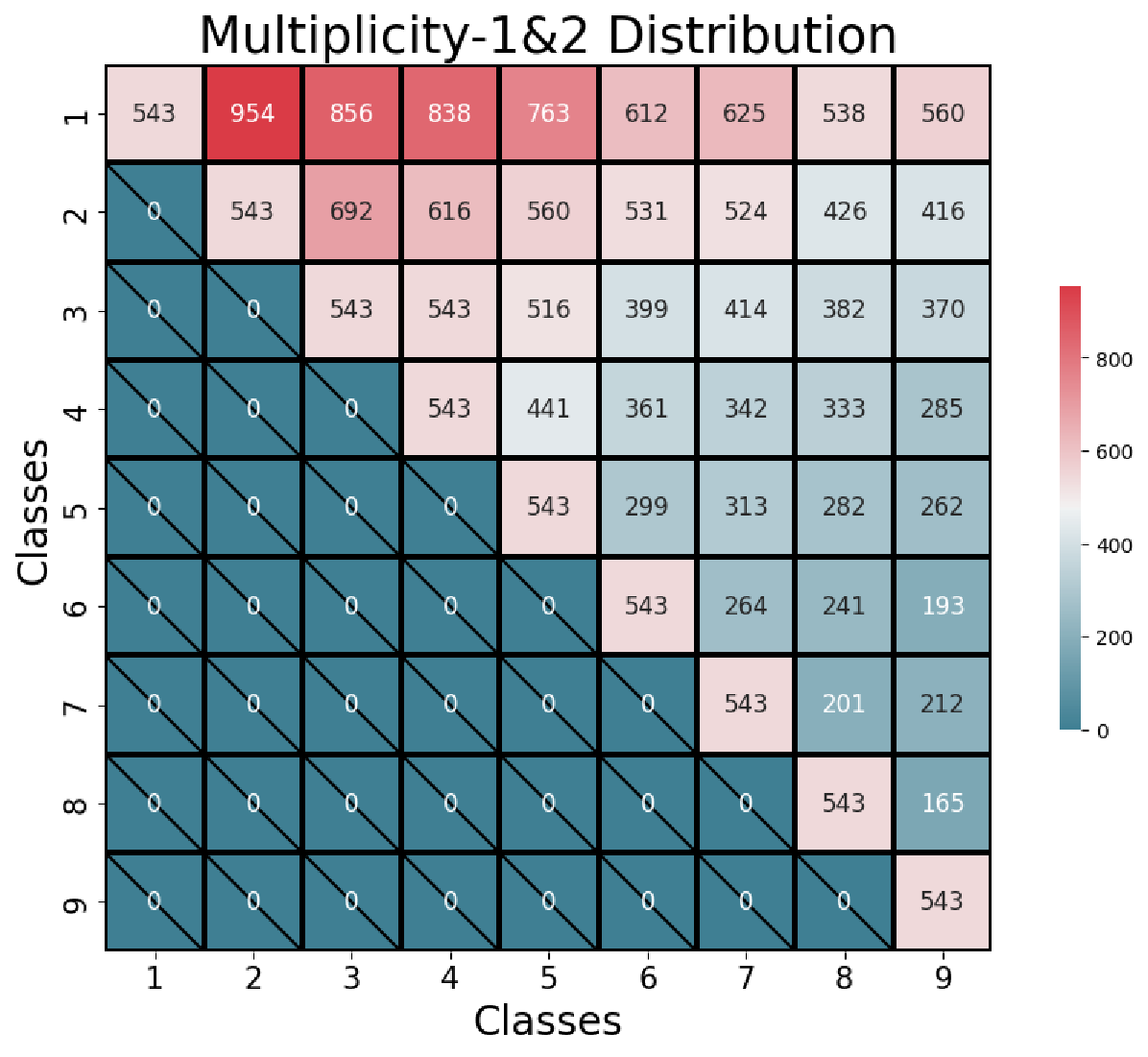}} \
    \caption{\textbf{Illustration of \MLab~SVHN dataset.} As illustrated in (a), the Street View House Numbers (SVHN) Dataset \cite{netzer2011reading} comprises labeled numerical characters and inherently serves as a \MLab~learning dataset. We applied minimal preprocessing to achieve balance specifically within the Multiplicity-1 scenario, as evidenced by the diagonal entries in (b). Furthermore, we omitted samples with Multiplicity-4 and above, as these images posed considerable recognition challenges. Notably, the Multiplicity-2 case remained largely imbalanced, as observed in the off-diagonal entries in (b). Nonetheless, our findings remained robust and consistent in this scenario, as evidenced in \Cref{fig:NC-SVHN}. }
    \label{fig:svhn_setup}
\end{figure*}
\subsection{Multiplicity 2 imbalanced \MLab~SVHN dataset} \label{appendix-section:imbala_SVHN}

To further explore our findings, we conducted additional experiments on the practical SVHN dataset \cite{netzer2011reading} alongside the synthetic datasets. In order to preserve the natural characteristics of the SVHN dataset, we applied minimal pre-processing only to ensure a balanced scenario for multiplicity-1, while leaving other aspects of the dataset untouched. The dataset are visualized in \Cref{fig:svhn_setup}.

% \begin{figure*}[t]
%     \centering
%     \subfloat[MLab-MNIST]{\includegraphics[width=0.20\textwidth]{figures/mnist_0_6.eps}
%     \includegraphics[width=0.20\textwidth]{figures/mnist_3_1.eps}} \
%     ~
%     \subfloat[MLab-Cifar10]{\includegraphics[width=0.20\textwidth]{figures/c10_car_airplane.eps}
%     \includegraphics[width=0.20\textwidth]{figures/c10_dog_cat.eps}} \\
%     \caption{\textbf{Illustration of synthetic multi-label MNIST (left) and Cifar10 (right) datasets.
%     }}
%     \label{fig:sample_mnist_c10}
% \end{figure*}

\subsection{Dataset used to compare test accuracy and efficiency between ONN and OvA} \label{appendix-section:test_data_show}

For training dataset, following the same generation method described in \Cref{appendix-section:Mlab_MNIST_C10} and simply varying data balanced-ness, the synthetic multiplicity imbalanced data used in \Cref{tab:ONN_OvA} are generated from Cifar10 datasets. All $3$ datasets has $1500$ sample in every class in multiplicity-1, we reduce the sample in every class in multiplicity-2 to $1000$, $750$, and $500$, resulting in the ``c10-Large", ``c10-Medium", and ``c10-Small" datasets. The testing datasets are independently generated, each with a sample size equivalent to $20\%$ of the training datasets.

Standard ResNet \cite{he2016deep} network architecture are used for training with only fixing the last layer classifier of as $ETF$. The SGD optimizer with fixed batch size $128$ are used. Specifically, for the three Cifar-10 datasets, models are trained with weight decay $(\lambda_{\bW}, \lambda_{\bH}) = (5 \times 10^{-5}, 10^{-5})$ with $200$ epochs with learning rate of $0.1$. For the SVHN dataset, models are trained with weight decay $(\lambda_{\bW}, \lambda_{\bH}) = (5 \times 10^{-6}, 1.5 \times 10^{-6})$ with $100$ epochs with learning rate of $0.09$. For testing with OvA, the additional linear classifiers are trained until the training loss reaches $0$, typically after approximately $2$ epochs.

\newpage
\section{Optimality Condition}\label{app:optimality}
The purpose of this section is to prove \Cref{thm:GO_thm}.
As such, throughout this section, we assume that we are in the situation of the statement of said theorem. 
Due to the additional complexity of the \MLab~setting compared to the \MClf~setting, analysis of the \MLab~NC requires substantially more notations. 
These notations, which are defined in \cref{appendix-section:notations}, while not necessary for \emph{stating} \Cref{thm:GO_thm},  are crucial for the proofs in \cref{appendix-section:main_proof} .

\subsection{Additional notations}\label{appendix-section:notations}

For the reader's convenience, we recall the following:
\begin{align}
N &:= \mbox{number of samples}\\
N_m &:= \mbox{number of samples \(i \in [N]\) such that \(|S_i| =m\)} \\
n_m &:= N_m / \binom{K}{m}\\
\binom{[K]}{m} &:= \{ S \subseteq [K]: |S| = m\} \\
M &:= \mbox{largest \(m\) such that \(n_m \ne 0\)} \\
d &:= \mbox{dimension of the last layer features}
\end{align}

\subsubsection{Lexicographical ordering on subsets}\label{appendix-section:lex-ordering}
 For each \(m \le K\), recall from the above that the set of subsets of \([K]\) of size \(m\) is denoted by the commonly used, suggestive notation \(\binom{[K]}{m}\).
Moreover, \(|\binom{[K]}{m} | = \binom{K}{m}\).

\textbf{\(\rhd\) Notation convention}. Assume the \emph{lexicographical ordering} on \(\binom{[K]}{m}\). Thus, for each \(k \in \binom{K}{m}\), the \emph{\(k\)-th subset of \(\binom{[K]}{m}\)} is well-defined. 

For example, when \(K = 5\) and \(m = 2\), there are \(\binom{5}{2} = 10\) elements in \(\binom{[5]}{2}\) which, when listed in the lexicographic ordering, are
\[\underbrace{\{1, 2\}}_{\mbox{1st}},
\underbrace{\{1, 3\}}_{\mbox{2nd}},
\underbrace{\{1, 4\}}_{\mbox{3rd}},
\underbrace{\{1, 5\}}_{\mbox{4th}},
\underbrace{\{2, 3\}}_{\cdots},
\{2, 4\},
\{2, 5\},
\{3, 4\},
\{3, 5\},
\underbrace{\{4, 5\}}_{\mbox{10th}}.
\]
In general, we use the notation \(S_{m,k}\) to denote the \(k\)-th subset of \(\binom{[K]}{m}\). In other words,
\[
\binom{[K]}{m} = \{S_{m,1},\,\,S_{m,2},\,\,\dots, \,\, S_{m,\binom{K}{m}}\}.
\]

\subsubsection{Block submatrices of the last layer feature matrix}
\label{appendix-section:block-submatrices-of-H}

Without the loss of generality, we assume that the sample indices \(i \in [N]\) are sorted such that
\(|S_i|\) is non-decreasing, i.e.,
\(
|S_1| \le \cdots \le |S_i| \le \cdots \le |S_N|.\)
Clearly, this does not affect the optimization problem itself.
Denote the set of indices of Multiplicity-\(m\) samples by
\(
\mathcal{I}_m := \{ i \in [N]: |S_i| = m\}
\).
Thus, we have
\[
\mathcal{I}_1 = \{1,\dots, N_1\}, \,\,
\mathcal{I}_2 = \{1+N_1,\dots, N_2 + N_1\}, \,\,
\cdots
\mathcal{I}_m = \{1 + \sum_{\ell=1}^m N_\ell, \dots, N_m + \sum_{\ell=1}^m N_\ell\},
\cdots
\,\,
\]
Below, it will be helpful to define the notation
\[
\mathcal{I}_{m,S} := \{ i \in [N] : S_i = S\}
\]
for each \(m = 1,\dots, M\) and \(S \in \binom{[K]}{m}\).

\textbf{\(\rhd\) Notation convention}. Define the block-submatrices \(\bH_1,\dots, \bH_M\) of \(\bH\) such that
\begin{enumerate}
    \item \(\bH_m \in \mathbb{R}^{d \times N_m}\)
    \item \(\bH = \begin{bmatrix}
        \bH_1 & \bH_2 & \cdots & \bH_M
    \end{bmatrix}\)
\end{enumerate}
Thus, as in the main paper, the columns of \(\bH_m\) correspond to the features of \(\mathcal{I}_m\).

\subsubsection{Decomposition of the PAL-CE loss}\label{appendix-section:decomposition-of-the-loss}

Define
\begin{equation}
\label{equation-definition:gm}    
g_m(\bW\bH_m + \bbb,\bY) :=
\frac{1}{N_m}
\sum_{i \in \mathcal{I}_m}
\underline{\mathcal{L}}_{\mathtt{PAL}}(\bW\bh_i + \bbb, \by_{S_i}).
\end{equation}
Intuitively, \(g_m\) is the contribution to \(g\) from the Multiplicity-\(m\) samples.
More precisely, the function \(g(\bW\bH + \bbb,\bY)\) from
\Cref{eq:DL-loss-ufm} can be decomposed as
\begin{align} \label{eq:sum_up_gm} 
g(\bW\bH + \bbb,\bY) =
\sum_{m=1}^M
\frac{N_m}{N}
g_m
(\bW\bH_m + \bbb,\bY).
\end{align}
% \subsubsection{The balanced-ness condition -- precise statement}

\subsubsection{Triple indices notation}\label{appendix-section:triple-subscript-index}

Next, we state precisely the data balanced-ness condition from \Cref{thm:GO_thm}. In order to state the condition, we need some additional notations.
Fix some \(m \in \{1,\dots ,M \}\) and  let \(S \in \binom{[K]}{m}\). Define
\begin{equation}
    n_{m,S} := \{i \in [N] : S_i = S\}.
\end{equation}
\Cref{thm:GO_thm} made the following
\textbf{data balanced-ness
condition}: 
\begin{equation}
\mbox{\(n_{m,S} = N_m/ \binom{K}{m} =: n_m\) for all \(S  \in \binom{[K]}{m}\).}
\label{equation:data-balanced-ness-precise}
\end{equation}
In other words, for a fixed \(m \in [M]\), the set  \(\mathcal{I}_{m,S}\) has the same constant cardinality equal to \(n_m\) ranging across all \(S \in \binom{[K]}{m}\).

By the data balanced-ness condition, we have for a fixed \(m = 1,\dots, M\) that \(\mathcal{I}_{m,S}\) have the same number of elements across all \(S \in \binom{[K]}{m}\). Moreover, in our notation, we have 
\(|\mathcal{I}_{m,S}| = n_m\).
Below, for each \(m = 1,\dots, M\) and for each \(S \in \binom{[K]}{m}\), choose an arbitrary ordering on \(\mathcal{I}_{m,S}\) once and for all.
Every sample is \emph{uniquely} specified by the following three indices:
\begin{enumerate}
    \item \(m \in [M]\) the sample's multiplicity, i.e., \(m = |S|\)
    \item \(k \in \binom{K}{m}\) the index such that \(S_{m,k}\) is the label set of the sample,
    \item \(i \in [n_m]\) such that the sample is the \(i\)-th element of \(\mathcal{I}_{m,S_{m,k}}\).
\end{enumerate}
More concisely, we now introduce the

\textbf{\(\rhd\) Notation convention}. Denote each sample by the triplet
\begin{equation}
\label{equation:triple-indices-notation}    
(m,k,i) \quad \mbox{where} \quad m \in [M], \, \, k \in \binom{K}{m}, \,\, i \in [n_m].
\end{equation}
Below, \eqref{equation:triple-indices-notation} will be referred to as the \textbf{triple indices notation} and every sample will be referred to by its triple indices \((m,k,i)\) instead of the previous single index \(i \in [N]\).
Accordingly, throughout the appendix, columns of \(\bH\) are expressed as \(\bh_{m,k,i}\) instead of the previous \(\bh_i\),
and thus the block submatrix \(\bH_m\) of \(\bH\) can be, without the loss of generality, be written as
\(
\bH_m
=
\begin{bmatrix}
    \bh_{m,k,i}
    \end{bmatrix}_{m \in [M], \, \, k \in \binom{K}{m}, \,\, i \in [n_m]}.
\)

Moreover, in the triple indices notation, 
\Cref{equation-definition:gm}     can be rewritten as
\begin{equation}
\label{equation-definition:gm-triplet}
    g_m(\bW \bH_m + \bbb) = \frac{1}{N_m}\sum_{i = 1}^{n_m} \sum_{k = 1}^{\binom{K}{m}}{\mathcal{L}_{\mathtt{PAL}}(\bW\bh_{m, k, i}, \by_{S_{m, k}})} 
\end{equation}
\subsection{Proofs} \label{appendix-section:main_proof}
We will first state the proof of \Cref{thm:GO_thm} which depends on several lemmas appearing later in the section. Thus, the proof of \Cref{thm:GO_thm} serves as a roadmap for the rest of this section.

\begin{proof}[Proof of \Cref{thm:GO_thm}] 
Recall the definition of a coercive function: a function \(\varphi: \mathbb{R}^n \to \mathbb{R}\) is said to be coercive if \(\lim_{\|x\| \to \infty} \varphi(x) = +\infty\). It is well-known that a coercive function attains its infimum which is a global minimum.

Now, note that the objective function \(f(\bW,\bH,\bbb)\) in Problem \eqref{eq:DL-loss-ufm} is \textit{coercive} due to the weight decay regularizers (the terms \(\|\bW\|_F^2\), \(\|\bH\|_F^2\) and \(\|\bbb\|_F^2\)) and that the pick-all-labels cross-entropy loss is non-negative. Thus, a global minimizer, denoted below as $(\bm W, \bm H, \bm b)$, of Problem \eqref{eq:DL-loss-ufm} exists. By Lemma B.2, we know that any critical point $(\bm W, \bm H, \bm b)$ of Problem \eqref{eq:DL-loss-ufm} satisfies
\[
\bW^{\top}\bW = \frac{\lambda_{\bH}}{\lambda_{\bW}} \bH \bH^{\top}.
\]
Let $\rho:=\|\bW \|_F^2$. Thus, $\|\bH \|_F^2 = \frac{\lambda_{\bW}}{\lambda_{\bH}} \rho$

We first provide a lower bound for the PAL cross-entropy term $g(\bW \bH + \bbb\mb{1}^\top)$ and then show that the lower bound is tight if and only if the parameters are in the form described in \Cref{thm:GO_thm}.
For each \(m=1,\dots, M\), let \(c_{1,m} >0\) be arbitrary, to be determined below.
Now by \Cref{lemma:gm-lower-bound} and \Cref{lemma:key-lower-bound-tightness}, we have
\begin{equation}
g(\bW \bH    + \bbb) - \Gamma_{2} \ge
- \frac{1}{N} \sqrt{
\sum_{m=1}^M
\left(\frac{1}{1+c_{1,m}} \frac{m}{K-m}\right)^2
\kappa_m n_m \binom{K}{m}^2
}\sqrt{\frac{\lambda_{\bW}}{\lambda_{\bH}} }\rho \nonumber
\end{equation}
where 
\(\Gamma_2 := \sum_{m=1}^M c_{2,m}\) and \(c_{2,m}\) is as in \Cref{lemma:key-lower-bound-tightness}. Therefore, we have 
\begin{align*}
    f(\bW, \bH, \bbb) &= g(\bW\bH + \bbb^{\top}) +
    \lambda_{\bW}\|\bW\|_F^2 + 
    \lambda_{\bH}\|\bH\|_F^2 + {\lambda_{\bbb}}\|\bbb\|_2^2 \\
    &\ge - \frac{1}{N} \sqrt{
\sum_{m=1}^M
\left(\frac{1}{1+c_{1,m}} \frac{m}{K-m}\right)^2
\kappa_m n_m \binom{K}{m}^2
}\sqrt{\frac{\lambda_{\bW}}{\lambda_{\bH}} }\rho + \Gamma_2 + 2\lambda_{\bW} \rho + \frac{\lambda_{\bbb}}{2}\|\bbb\|_2^2 \\
     &\ge - \frac{1}{N} \sqrt{
\sum_{m=1}^M
\left(\frac{1}{1+c_{1,m}} \frac{m}{K-m}\right)^2
\kappa_m n_m \binom{K}{m}^2
}\sqrt{\frac{\lambda_{\bW}}{\lambda_{\bH}} }\rho + \Gamma_2 + 2\lambda_{\bW} \rho
     \numberthis \label{proof:final_inequality}
\end{align*}
where the last inequality becomes an equality whenever either $\lambda_{\bbb} = 0$ or $\bbb = \mb{0}$. Furthermore, by \Cref{lemma:ETF_and_sacled_ave}, we know that the Inequality (\ref{proof:final_inequality}) becomes an equality \textit{if and only if} $(\bW, \bH, \bbb)$ satisfy the following:
\begin{align*}
 &\mbox{(I)}\quad   \|\bw^{1}\|_2 = \|\bw^{2}\|_2 = \cdots = \|\bw^{ K}\|_2, \quad \mbox{and} \quad \bbb = b \mb{1}, \\
    & \mbox{(II)}\quad  \frac{1}{\binom{K}{m}}
    \sum_{k = 1}^{\binom{K}{m}}
    \bh_{m,k,i} = \mb{0}, \quad \mbox{and} \quad \sqrt{\frac{\binom{K-2}{m-1}}{n_m}}\bw^k = 
     \sum_{\ell: k\in S_{m, \ell}}{\bh_{m, \ell, i}},
    % \bh_{m, \{k\}, i}
     \forall m \in [M], k \in [K], i\in [n_m], \\
     &\mbox{(III)}\qquad  \bW^\top\bW = \frac{\rho}{K-1}\left(\mb{I}_K - \frac{1}{K}\mb{1}_K \mb{1}_K^\top \right)
     % , \ \mbox{and} \ c_{1, m} =
\end{align*}

\hspace{0.1227cm} (IV) There exist unique positive real numbers \(C_1,C_2,\dots, C_M>0\) such that the following holds: 
\begin{align*}
    &\bh_{1,k,i} = C_1 \bw^{\ell} 
     \quad \qquad\qquad \mbox{when} \,\, S_{1,k} = \{\ell\},\; \ell\in [K], 
    &\mbox{(Multiplicity \(=1 \) Case)} \\
    &\bh_{m,k,i} = C_m  \textstyle{\sum_{\ell \in S_{m,k}} }\bw^{\ell} 
    \quad \mbox{when} \,\, m > 1.
    & \mbox{(Multiplicity \(>1\) Case)} 
\end{align*}
Note that condition (IV) is a restatement of \Cref{eq:h1_collapse}
and \Cref{eq:hm_collapse}. The choice of the \(c_{1,m}\)'s is given by (V) from \Cref{lemma:ETF_and_sacled_ave}.
\end{proof}

\begin{lemma}\label{lemma:KKT-condition}
we have: 
\[
\bW^{\top} \bW = \frac{\lambda_{\bH}}{\lambda_{\bW}} \bH \bH^{\top} \quad and \quad \rho = \|\bW\|_F^2 = \frac{\lambda_{\bH}}{\lambda_{\bW}}\|\bH\|_F^2
\] 
\end{lemma}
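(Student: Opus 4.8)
The plan is to derive both identities purely from the first-order stationarity conditions, using nothing about the pick-all-labels loss beyond its differentiability. First I would introduce the logit matrix $\bZ := \bW\bH + \bbb\mb{1}_N^\top \in \mathbb{R}^{K\times N}$ and write $\bG := \nabla_{\bZ}\, g(\bZ,\bY)$ for the gradient of the smooth loss term with respect to the logits; this gradient exists because $g$ is a finite average of softmax--cross-entropy terms, each smooth in $\bZ$. Applying the chain rule through the bilinear map $(\bW,\bH)\mapsto \bW\bH$, and differentiating the Frobenius regularizers, the stationarity conditions $\nabla_{\bW} f = \mb{0}$ and $\nabla_{\bH} f = \mb{0}$ at a critical point $(\bW,\bH,\bbb)$ read
\[
\bG\,\bH^\top + 2\lambda_{\bW}\,\bW = \mb{0}, \qquad \bW^\top \bG + 2\lambda_{\bH}\,\bH = \mb{0}.
\]

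Next I would eliminate the shared factor $\bG$. Left-multiplying the first equation by $\bW^\top$ and right-multiplying the second by $\bH^\top$ both produce the \emph{same} quantity $\bW^\top\bG\bH^\top$, giving
\[
-2\lambda_{\bW}\,\bW^\top\bW \;=\; \bW^\top\bG\bH^\top \;=\; -2\lambda_{\bH}\,\bH\bH^\top,
\]
so that $\lambda_{\bW}\bW^\top\bW = \lambda_{\bH}\bH\bH^\top$, i.e. $\bW^\top\bW = \tfrac{\lambda_{\bH}}{\lambda_{\bW}}\bH\bH^\top$. Taking the trace of this matrix identity and using $\tr(\bW^\top\bW) = \|\bW\|_F^2$ together with $\tr(\bH\bH^\top) = \|\bH\|_F^2$ yields $\lambda_{\bW}\|\bW\|_F^2 = \lambda_{\bH}\|\bH\|_F^2$, which rearranges to $\rho = \|\bW\|_F^2 = \tfrac{\lambda_{\bH}}{\lambda_{\bW}}\|\bH\|_F^2$, as claimed. (This is consistent with the way \Cref{thm:GO_thm} later replaces $\lambda_{\bW}\|\bW\|_F^2+\lambda_{\bH}\|\bH\|_F^2$ by $2\lambda_{\bW}\rho$.)

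I expect no substantive obstacle here: this is the standard ``norm-balancing'' property of critical points of a regularized matrix factorization, and the argument is completely insensitive to the specific form of the loss. The only points requiring minor care are the broadcasting of the bias into $\bbb\mb{1}_N^\top$ (which affects only $\bG$ and never appears in the $\bW$- or $\bH$-gradients otherwise), the factor of $2$ from differentiating the Frobenius-norm penalties (which cancels on both sides, leaving the clean ratio $\lambda_{\bH}/\lambda_{\bW}$), and verifying that $\bW^\top\bG\bH^\top$ is genuinely the common term in both eliminations. Existence of a minimizer at which this holds is guaranteed by coercivity of $f$, so the lemma applies in particular at the optimum invoked in the proof of \Cref{thm:GO_thm}.
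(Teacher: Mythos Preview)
Your argument is correct and is exactly the standard norm-balancing computation from the first-order stationarity conditions; the paper itself omits the proof and simply cites \cite{zhu2021geometric}~Lemma~B.2, whose proof proceeds the same way. There is nothing to add.
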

\begin{proof}
    The proceeds identically as in given by \cite{zhu2021geometric}~Lemma B.2 and is thus omitted here.
\end{proof}

% \begin{lemma}\label{lemma:Pascal_H_lemma}[Is this needed?]
% We assume that we have:
% \[
% \binom{K-2}{m-1}\|\bH_m \|_F^2 = 
% \|\bH_m \bD_m\|_F^2
% \]
% where \[
% \bD_m = \mathrm{diag}(\underbrace{\bY_m^\top, \cdots, \bY_m^\top}_{n_m \ \mathrm{of} \ \bY_m}) \in \mathbb{R}^{n_m \binom{K}{m} \times n_m K} \quad and \quad \bY_m \in \mathbb{R}^{K \times n_m }. 
% \]
% \end{lemma}
The following lemma 
is the generalization of \cite{zhu2021geometric}~Lemma B.3 to the multilabel case for each multiplicity.

\begin{lemma}\label{lemma:gm-lower-bound}
% \yw{Actually we need the WHb to be a critical point of \(f\), not \(g\). Because we need the lambda relations.}
Let \((\bW,\bH,\bbb)\) be a critical point for the objective \(f\) from Problem \eqref{eq:DL-loss-ufm}. Let \(c_{1,m} > 0\) be arbitrary and let \(\gamma_{1,m} := 
\frac{1}{1+c_{1,m}} \frac{m}{K-m}
\).
Define
\(\kappa_m := 
\left(\frac{K}{m \binom{K}{m}}\right)^2 \binom{K-2}{m-1}
\).
% and let
% $c_{3, m} := \frac{\frac{\alpha}{\binom{K}{m}} \sqrt{\binom{K-2}{m-1}} \|\bH_m\|_F}{n_m \|\bW\|_F}$.
% $c_{3, m} :=
% \sqrt{\frac{\kappa_m}{n_m}}
% \frac{ \|\bH_m\|_F}{ \|\bW\|_F}$.
% Let $\bW= 
% \begin{bmatrix}
%     \bw^1 & \cdots & \bw^K
% \end{bmatrix}^\top
% \in \mathbb{R}^{K \times d}$, and, without losing for generosity, let $\bH$ has the block notation for each multiplicity, i.e., $\bH= 
% \begin{bmatrix}
%     \ \bH_1 \ | \ \bH_2\ | \  \cdots\ |\ \bH_m \ 
% \end{bmatrix}
% = 
% \begin{bmatrix}
%     \bh_{1, 1, 1} \cdots \bh_{m, k, n_m}
% \end{bmatrix}
% \in \mathbb{R}^{d \times N}$, $N := \sum_{m = 1}^{K-1}{N_m} = \sum_{m = 1}^{K-1} \binom{K}{m} n_m$.
Then
% \begin{align*}
%     g_m(\bW \bH_m + \bbb) - c_{2,m} \ge - \gamma_{1,m} \frac{c_{3,m}}{2}\|\bW\|_F^2 - \gamma_{1,m} 
% \frac{\kappa_m}{2 c_{3,m} n_m}
% \|\bH_m\|_F^{2}. \numberthis\label{gm_lower_bound}
% \end{align*}
\begin{equation}
g(\bW \bH    + \bbb) - \Gamma_{2} \ge
- \frac{1}{N} \sqrt{
\sum_{m=1}^M
\left(\frac{1}{1+c_{1,m}} \frac{m}{K-m}\right)^2
\kappa_m n_m \binom{K}{m}^2
}\sqrt{\frac{\lambda_{\bW}}{\lambda_{\bH}} }\rho. \numberthis \label{g_lower_bound}
\end{equation}
where \(\rho := \|\bW\|_F^2\),
\(\Gamma_2 := \sum_{m=1}^M c_{2,m}\) and 
\(c_{2,m}\) is as in \Cref{lemma:key-lower-bound-tightness}. 
\end{lemma}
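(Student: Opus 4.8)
The plan is to reduce the global bound to a sum of per-multiplicity estimates and then recombine them with a single Cauchy--Schwarz step. First I would invoke the decomposition \eqref{eq:sum_up_gm}, writing $g(\bW\bH+\bbb) = \sum_{m=1}^M \tfrac{N_m}{N}\, g_m(\bW\bH_m+\bbb)$ with $N_m = n_m\binom{K}{m}$, so that it suffices to lower bound each $g_m$ and to track how the weights $N_m/N$ interact with the combinatorial constants $\kappa_m$ that will emerge. The whole argument is carried out at a fixed critical point, so the only structural facts available are convexity of the loss and the critical-point identity from \Cref{lemma:KKT-condition}.

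The core of each per-multiplicity estimate is convexity of the pick-all-labels loss, which is the content I would import from \Cref{lemma:key-lower-bound-tightness}. For a sample with label set $S$, $|S|=m$, the map $\bz\mapsto\underline{\mathcal{L}}_{\mathtt{PAL}}(\bz,\by_S)$ is convex with gradient $m\,\mathrm{softmax}(\bz)-\by_S$. Evaluating the first-order inequality $\underline{\mathcal{L}}_{\mathtt{PAL}}(\bz,\by_S)\ge \underline{\mathcal{L}}_{\mathtt{PAL}}(\bz_0,\by_S)+\langle\nabla,\bz-\bz_0\rangle$ at a \emph{symmetric} reference logit $\bz_0$ whose softmax takes one value on $S$ and another off $S$ yields, after choosing the free parameter $c_{1,m}$ to set the ratio of off-$S$ to on-$S$ softmax mass, a gradient equal to $-\tfrac{1}{1+c_{1,m}}$ on the coordinates in $S$ and to $\gamma_{1,m}=\tfrac{1}{1+c_{1,m}}\tfrac{m}{K-m}$ off $S$ (these are consistent because the gradient sums to zero). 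The constant part of this affine lower bound is collected into $c_{2,m}$, and summing the linear part over the $n_m\binom{K}{m}$ multiplicity-$m$ samples via \eqref{equation-definition:gm-triplet} leaves a single linear functional of $\bH_m$ paired against $\bW$.

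Next I would convert the aggregated linear functional into a norm bound in two stages. Applying Cauchy--Schwarz to the trace term $\langle\bW,\textstyle\sum_{k,i}\nabla_{m,k}\bh_{m,k,i}^\top\rangle$ bounds it by $\|\bW\|_F$ times the Frobenius norm of the aggregated feature combination; the data-balancedness condition \eqref{equation:data-balanced-ness-precise} lets me average cleanly over the $n_m$ copies of each label set and over all $\binom{[K]}{m}$ subsets, so that each classifier row $\bw^k$ is weighted by how many $m$-subsets do (resp.\ do not) contain $k$, producing the binomial factors that assemble into $\kappa_m=\big(\tfrac{K}{m\binom{K}{m}}\big)^2\binom{K-2}{m-1}$ and leaving a bound of the shape $\|\bW\|_F\cdot\sqrt{\kappa_m/n_m}\cdot\|\bH_m\|_F$. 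Summing over $m$ and applying a second Cauchy--Schwarz across the index $m$ turns $\sum_m \tfrac{N_m}{N}\gamma_{1,m}\sqrt{\kappa_m/n_m}\,\|\bH_m\|_F$ into $\tfrac1N\sqrt{\sum_m\gamma_{1,m}^2\kappa_m n_m\binom{K}{m}^2}\cdot\sqrt{\sum_m\|\bH_m\|_F^2}$. Finally I would use $\sum_m\|\bH_m\|_F^2=\|\bH\|_F^2$ together with the identity $\|\bH\|_F^2=\tfrac{\lambda_{\bW}}{\lambda_{\bH}}\rho$ from \Cref{lemma:KKT-condition} to combine the pulled-out $\|\bW\|_F=\sqrt{\rho}$ with $\|\bH\|_F$ into $\|\bW\|_F\|\bH\|_F=\sqrt{\tfrac{\lambda_{\bW}}{\lambda_{\bH}}}\,\rho$, which is exactly the factor in \eqref{g_lower_bound}; moving $\Gamma_2=\sum_m c_{2,m}$ to the left completes the bound.

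The step I expect to be the main obstacle is the combinatorial bookkeeping in the third paragraph: correctly counting, for each fixed coordinate $k$, how often $\bw^k$ enters with the on-$S$ coefficient versus the off-$S$ coefficient as $S$ ranges over $\binom{[K]}{m}$, and verifying that these counts collapse \emph{exactly} into $\binom{K-2}{m-1}$ and the remaining factors of $\kappa_m$. A secondary subtlety is arranging the two Cauchy--Schwarz applications so that equality is \emph{simultaneously} attainable across all $m$ --- matching the dual relation $\sqrt{\binom{K-2}{m-1}/n_m}\,\bw^k=\sum_{\ell:k\in S_{m,\ell}}\bh_{m,\ell,i}$ and the centering $\tfrac{1}{\binom{K}{m}}\sum_k\bh_{m,k,i}=\mb0$ --- since the global-optimality proof needs this lower bound to be tight; this tightness characterization is precisely what I would demand from \Cref{lemma:key-lower-bound-tightness}.
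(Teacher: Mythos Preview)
Your plan is correct and follows essentially the same route as the paper: apply \Cref{lemma:key-lower-bound-tightness} per sample, aggregate over subsets of a fixed multiplicity, separate $\bW$ from $\bH_m$ by a Cauchy--Schwarz/AM--GM step, bound the resulting feature norm by $\sqrt{\kappa_m/n_m}\,\|\bH_m\|_F$, then combine across $m$ and invoke \Cref{lemma:KKT-condition}. Two organizational differences are worth noting. First, the paper writes the $\bW$/$\bH_m$ separation as AM--GM with a free parameter $c_{3,m}$ (your first Cauchy--Schwarz is exactly this with $c_{3,m}$ optimized). Second, for the cross-$m$ aggregation the paper introduces an auxiliary constant $Q$ and back-solves for it, whereas your single Cauchy--Schwarz across $m$ is a cleaner way to arrive at the same square-root expression; the paper's $Q$-device and your second Cauchy--Schwarz produce identical bounds and identical tightness conditions.

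Two small points your outline underplays. (i) The bias contribution does not disappear automatically: before pairing with $\bW$ you must check that $\sum_{k}\langle \mb 1-\tfrac{K}{m}\mathbb I_{S_{m,k}},\bbb\rangle=0$, which the paper verifies by a symmetry count (its ``$(\star\star)=0$'' step). (ii) The passage from ``the Frobenius norm of the aggregated feature combination'' to $\sqrt{\kappa_m/n_m}\,\|\bH_m\|_F$ is not pure incidence counting. What is actually needed is a spectral fact about $\bY_m\bY_m^\top$: the vector $\mb 1$ is an eigenvector with eigenvalue $m\binom{K-1}{m-1}$, and the remaining nonzero eigenvalue is exactly $\binom{K-2}{m-1}$; the contribution along $\mb 1$ cancels against the $\overline{\bh}$-term, and the $\binom{K-2}{m-1}$ eigenvalue is what produces $\kappa_m$. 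The paper packages this as dropping a nonnegative $\|\overline{\bh}\|^2$ term followed by \Cref{lemma:pascal_norm}. Your ``counting how often $\bw^k$ enters on-$S$ versus off-$S$'' will compute the entries of $\bY_m\bY_m^\top$, but you still need the eigenvalue step to turn that into the norm inequality.
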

% \begin{remark}
    Note that \(\Gamma_2\) depends on \(c_{1,1},\, c_{1,2},\,\dots, \,c_{1,M}\) because \(c_{2,m}\) depends on \(c_{1,m}\) for each \(m \in [M]\).
% \end{remark}
\begin{proof}
Throughout this proof, let $\bz_{m, k, i} := \bW \bh_{m, k, i} + \bbb$ and choose the same $\gamma_{1,m}, c_{2,m}$ for all $i$ and $k$. The first part of this proof aim to find the lower bound for each $g_m(\bW, \bH_m, \bbb)$ along with conditions when the bound is tight. The rest of the proof focus on sum up $g_m$ to get \Cref{g_lower_bound}. Thus, using \Cref{equation-definition:gm-triplet} with the \(\bz_{m, k, i}\)'s, we have that $g_m$ can be written as
\begin{align}
    g_m(\bW \bH_m + \bbb) &= \frac{1}{N_m}\sum_{i = 1}^{n_m} \sum_{k = 1}^{\binom{K}{m}}{\underline{\mathcal{L}}_{\mathtt{PAL}}(\bz_{m, k, i}, \by_{S_{m, k}})} 
\end{align}
By directly applying \Cref{lemma:key-lower-bound-tightness}, the following lower bound holds:
\begin{align*}
    N_m g_m(\bW \bH_m + \bbb) &\ge \gamma_{1,m} \sum_{i = 1}^{n_m} \sum_{k = 1}^{\binom{K}{m}}{\langle \mb{1} -  \tfrac{K}{m}\mathbb{I}_S,\, \bW \bh_{m, k, i} +\bbb \rangle} + N_m c_{2,m} \\
\end{align*}
which implies that
\begin{align*}
    &\ \quad \gamma_{1,m}^{-1}(g_m(\bW \bH_m + \bbb) - c_{2,m}) \\ &\ge \frac{1}{N_m} \sum_{i = 1}^{n_m} \sum_{k = 1}^{\binom{K}{m}}{\langle \mb{1} -  \tfrac{K}{m}\mathbb{I}_S,\, \bW \bh_{m, k, i} +\bbb \rangle} \\
    &=\frac{1}{N_m} \sum_{i = 1}^{n_m} \underbrace{\sum_{k = 1}^{\binom{K}{m}}{\langle \mb{1} -  \tfrac{K}{m}\mathbb{I}_S,\, \bW \bh_{m, k, i} \rangle}}_{(\star)}
     + \frac{1}{N_m} \sum_{i = 1}^{n_m} \underbrace{\sum_{k = 1}^{\binom{K}{m}}{\langle \mb{1} - \tfrac{K}{m}\mathbb{I}_S, \bbb \rangle}}_{(\star\star)}\numberthis\label{gm_lb_1}
\end{align*}
To further simplify the inequality above, we break it down into two parts, namely, the feature part ($\star$) and the bias part ($\star\star$) and analyze each of them separately. We first  show that the term ($\star\star$) is equal to zero. To see this, note that
\begin{align*}
    (\star \star) &= \sum_{k = 1}^{\binom{K}{m}} \left( \sum_{j = 1}^{K}{b_j} - \frac{K}{m}\sum_{j^{\prime} \in S_{m, k}}{b_{j^{\prime}}} \right) \\ 
    &= \sum_{k = 1}^{\binom{K}{m}}\sum_{j = 1}^{K}{b_j} - \frac{K}{m}\sum_{k = 1}^{\binom{K}{m}}\sum_{j^{\prime} \in S_{m, k}}{b_{j^{\prime}}} \\
    &= K\binom{K}{m}\bar{b} - \frac{K}{m} m \binom{K}{m} \bar{b} \\
    &= 0 \numberthis \label{star_0}
\end{align*}
where $\bar{b} = \frac{1}{K}\sum_{j = 1}^{K}{b_j}$ and $\sum_{k = 1}^{\binom{K}{m}}\sum_{j = 1}^{K}{b_j} = K \binom{K}{m} \bar{b}$.  Thus
\[
\sum_{k = 1}^{\binom{K}{m}}
    \sum_{j' \in S_{m,k}}
    b_{j'}
    \overset{(\diamondsuit)}
    {=}
    \sum_{j=1}^K
% \sum_{k = 1}^{\binom{K}{m}}
    \sum_{k:j \in S_{m,k}}
    b_{j}
    {=}
    \sum_{j=1}^K
% \sum_{k = 1}^{\binom{K}{m}}
b_j
\# \{ 
k:j \in S_{m,k}
\}
=
    \sum_{j=1}^K
    \binom{K}{m} \frac{m}{K} b_j
    =
    m \binom{K}{m} \overline{b}.
\]
Note that the equality at $(\diamondsuit)$ holds
 by switching the order of the summation.
Now, substituting the result of \Cref{star_0} into the Inequality (\ref{gm_lb_1}), we have the new lower bound of $g_m$: 
\begin{align}
    \gamma_{1,m}^{-1}(g_m(\bW \bH_m + \bbb) - c_{2,m}) &\ge \frac{1}{N_m} \sum_{i = 1}^{n_m} \underbrace{\sum_{k = 1}^{\binom{K}{m}}{\langle \mb{1} -  \tfrac{K}{m}\mathbb{I}_S,\, \bW \bh_{m, k, i} \rangle}}_{(\star)}
\end{align}
and the bound is tight when conditions are met in \Cref{lemma:key-lower-bound-tightness}. To simplify the expression $(\star)$ we first distribute the outer layer summation and further simplify it as:
\begin{align*}
    (\star) &= \sum_{k=1}^{\binom{K}{m}} \sum_{j = 1}^{K}{\bh_{m, k, i}^{\top} \cdot \bw^{j}} - \frac{K}{m} \sum_{k=1}^{\binom{K}{m}} \sum_{j^{\prime} \in S_{m, k}}{\bh_{m, k, i}^{\top} \cdot \bw^{j^{\prime}}} \\
    &= \sum_{k=1}^{\binom{K}{m}} \sum_{j = 1}^{K}{\bh_{m, k, i}^{\top} \cdot \bw^{j}}  - \frac{K}{m} \sum_{j=1}^{K}
   \sum_{k' : j \in S_{k'}}
    \bh_{m,k',i}^\top \bw^{j} \numberthis \label{switch_sum1} \\
    &= \sum_{k=1}^{\binom{K}{m}} \sum_{j = 1}^{K}{\bh_{m, k, i}^{\top} \cdot \bw^{j}} - \frac{K}{m} \sum_{j=1}^K
\bh_{m,\{j\},i}^\top \bw^{j} \\
    &= \sum_{j = 1}^{K}\sum_{k=1}^{\binom{K}{m}}{\bh_{m, k, i}^{\top} \cdot \bw^{j}} - \frac{K}{m} \sum_{j=1}^K
\bh_{m,\{j\},i}^\top \bw^{j} \numberthis \label{switch_sum2} \\
    & = \sum_{j = 1}^{K} \left( \sum_{k=1}^{\binom{K}{m}}{\bh_{m, k, i}} - \frac{K}{m} \bh_{m, \{j\}, i}\right)^{\top} \bw^{j} \\
    & = \sum_{j = 1}^{K} \left(\binom{K}{m}\overline{\bh}_{m, \bullet, i} - \frac{K}{m} \bh_{m, \{j\}, i}\right)^{\top} \bw^{j} \numberthis\label{star_final}
\end{align*}

where we let $\bh_{m, \{j\}, i} = \sum_{k: j\in S_{m, k}}{\bh_{m, k, i}}$ and \(\overline{\bh}_{m,\bullet,i}\) be the ``average'' of 
\(\bh_{m,k,i}\) over all \(k \in \binom{K}{m}\) defined as:
\begin{equation}
\overline{\bh}_{m,\bullet,i}
:=
\frac{1}{\binom{K}{m}}
\sum_{k = 1}^{\binom{K}{m}}
\bh_{m,k,i}.
\end{equation} Similarly to $(\diamondsuit)$, the Equations (\ref{switch_sum1}) and (\ref{switch_sum2}) holds since we only switch the order of summation. Continuing simplification, we substitute the result in Equations (\ref{star_final}) and (\ref{star_0}) into Inequality (\ref{gm_lb_1}) we have: 
\begin{align*}
    \gamma_{1,m}^{-1}(g_m(\bW \bH_m + \bbb) - c_{2,m}) &\ge \frac{1}{N_m} \sum_{i = 1}^{n_m}{\sum_{j = 1}^{K} \left(\binom{K}{m}\overline{\bh}_{m, \bullet, i} - \frac{K}{m} \bh_{m, \{j\}, i}\right)^{\top} \bw^{j}} \\
    & = \frac{1}{N_m} \sum_{i = 1}^{n_m}{\sum_{k = 1}^{K} \left(\binom{K}{m}\overline{\bh}_{m, \bullet, i} - \frac{K}{m} \bh_{m, \{k\}, i}\right)^{\top} \bw^{k}} \\
    & = \frac{1}{n_m} \sum_{i = 1}^{n_m}{\sum_{k = 1}^{K} \left(\overline{\bh}_{m, \bullet, i} - \frac{K}{m\binom{K}{m}} \bh_{m, \{k\}, i}\right)^{\top} \bw^{k}}
\end{align*}
Furthermore, from the AM-GM inequality (e.g., see Lemma A.2 of \cite{zhu2021geometric}), we know that for any $\bu$, $\bv \in \mathbb{R}^{K}$ and any $c_{3,m} > 0$, 
\begin{equation} \label{eq:AMGM}
    \bu^{\top}\bv \leq \frac{c_{3,m}}{2} \|\bu\|_2^2 + \frac{1}{2 c_{3,m}} \|\bv\|_2^2
\end{equation}
where the above AM-GM inequality becomes an equality when $c_{3,m}\bu = \bv$. Thus letting $\bu = \bw^k$ and $\bv = \left( \overline{\bh}_{m, \bullet, i} - \frac{K}{m\binom{K}{m}}\bh_{m, \{k\}, i}\right)^{\top}$ and applying the AM-GM inequality, we further have: 
% \newpage
\begin{align*}
    &\gamma_{1,m}^{-1}(g_m(\bW \bH_m + \bbb) - c_{2,m})\\ 
    &\ge  \frac{1}{n_m} \sum_{i = 1}^{n_m}{\sum_{k = 1}^{K} \left(\overline{\bh}_{m, \bullet, i} - \frac{K}{m\binom{K}{m}} \bh_{m, \{k\}, i}\right)^{\top} \bw^{k}} \numberthis \label{pal_key_inequal} \\ 
    &\ge \frac{1}{n_m} \sum_{i = 1}^{n_m}\sum_{k = 1}^{K}{ \left( - \frac{c_{3,m}}{2} \|\bw^k\|_2^2 - \frac{1}{2c_{3,m}} \|\overline{\bh}_{m, \bullet, i} - \frac{K}{m\binom{K}{m}} \bh_{m,\{k \},i}\|_2^2 \right)} \\
    & = \frac{1}{n_m}\sum_{i = 1}^{n_m} \sum_{k = 1}^{K}{ - \frac{c_{3,m}}{2} \|\bw^k\|_2^2} - \frac{1}{n_m} \sum_{i = 1}^{n_m}\sum_{k = 1}^{K}{\frac{1}{2c_{3,m}} \|\overline{\bh}_{m, \bullet, i} - \frac{K}{m\binom{K}{m}} \bh_{m,\{k \},i}\|_2^2} \\
    & = -\frac{c_{3,m}}{2}\|\bW\|_F^2 - \frac{1}{2 c_{3,m} n_m} \sum_{i = 1}^{n_m}\sum_{k = 1}^{K}{\|\overline{\bh}_{m, \bullet, i} - \frac{K}{m\binom{K}{m}} \bh_{m,\{k \},i}\|_2^2} \\
    & = -\frac{c_{3,m}}{2}\|\bW\|_F^2 - \frac{1}{2 c_{3,m} n_m} \sum_{i = 1}^{n_m} \Big( K {\|\overline{\bh}_{m, \bullet, i}\|_2^2} + \left(\frac{K}{m\binom{K}{m}}\right)^2 \left(\sum_{k = 1}^{K}{\|\bh_{m, \{k\}, i}\|_2^2}\right) \\
    & \qquad \qquad \qquad\qquad \qquad \qquad\qquad \qquad \qquad\qquad \qquad \qquad - 2 K \langle \overline{\bh}_{m, \bullet, i} ,\ \overline{\bh}_{m, \bullet, i}\rangle\Big) \\ 
% \end{align*}
% \begin{align*}
    & = -\frac{c_{3,m}}{2}\|\bW\|_F^2 - \frac{1}{2 c_{3,m} n_m} \sum_{i = 1}^{n_m} \left( \left(\frac{K}{m\binom{K}{m}}\right)^2 \left(\sum_{k = 1}^{K}{\|\bh_{m, \{k\}, i}\|_2^2}\right) - K \|\overline{\bh}_{m, \bullet, i}\|_2^2 \right) \\
    &= -\frac{c_{3,m}}{2}\|\bW\|_F^2 - \frac{\left(\frac{K}{m\binom{K}{m}}\right)^2}{2 c_{3,m} n_m} \sum_{i = 1}^{n_m}\left( \sum_{k = 1}^{K}{\|\bh_{m, \{k\}, i}\|_2^2} - K \|\overline{\bh}_{m, \bullet, i}\|_2^2 \right) \\
    & \ge -\frac{c_{3,m}}{2}\|\bW\|_F^2 - \frac{\left(\frac{K}{m\binom{K}{m}}\right)^2}{2 c_{3,m} n_m} \sum_{i = 1}^{n_m} \sum_{k = 1}^{K}{\|\bh_{m, \{k\}, i}\|_2^2}
    \numberthis\label{h_mean_0}\\
% \end{align*}
% \begin{align*}
    &= -\frac{c_{3,m}}{2}\|\bW\|_F^2 - \frac{\left(\frac{K}{m\binom{K}{m}}\right)^2}{2 c_{3,m} n_m}\left(\|\bH_m \bD_m\|_F^{2}\right) \numberthis\label{pascal_H} \\
    &= -\frac{c_{3,m}}{2}\|\bW\|_F^2 - \frac{\left(\frac{K}{m\binom{K}{m}}\right)^2 \binom{K-2}{m-1}}{2 c_{3,m} n_m}\left(\|\bH_m\|_F^{2}\right) \quad \quad \quad \quad \quad \quad
    (\mbox{by \Cref{lemma:pascal_norm}})
    \\
    &= -\frac{c_{3,m}}{2}\|\bW\|_F^2 - \frac{\kappa_m}{2 c_{3,m} n_m}\left(\|\bH_m\|_F^{2}\right),
\end{align*}
where we let $\bD_m = \mbox{diag}(\bY_m^{\top}, \cdots, \bY_m^{\top}) \in \mathbb{R}^{(n_m*\binom{K}{m}) \times (n_m * K)}$ and $\bY_m \in \mathbb{R}^{K \times \binom{K}{m}}$ is the many-hot label matrix defined as follows\footnote{See \Cref{appendix-section:lex-ordering} for definition of the \(S_{m,k}\) notation}:
\[
\bY_m = \begin{bmatrix}
    \by_{S_{m,k}}
\end{bmatrix}_{k \in \binom{K}{m}}.
\]
% \py{maybe we need a better definition of $\bD_m$ and $\bY_m$}. 
The first Inequality (\ref{pal_key_inequal}) is tight whenever conditions mentioned in \Cref{lemma:key-lower-bound-tightness} are satisfied and the second inequality is tight if and only if
\begin{equation}\label{eq:AMGM_equal}
    c_{3,m}\bw^{k} = \left(\frac{K}{m\binom{K}{m}}\bh_{m, \{k\}, i} - \overline{\bh}_{m, \bullet, i} \right ) \quad \forall k \in [K], \quad i \in [n_m].
\end{equation}

Therefore, we have 
\begin{align*}
    g_m(\bW \bH_m + \bbb) - c_{2,m} \ge -\gamma_{1,m}\frac{c_{3,m}}{2}\|\bW\|_F^2 - \gamma_{1,m}\frac{\kappa_m}{2 c_{3,m} n_m}\left(\|\bH_m\|_F^{2}\right). \numberthis \label{gm_final_inequality}
\end{align*}
The last Inequality (\ref{h_mean_0}) achieves its equality if and only if 
\begin{equation*}
    \overline{\bh}_{m, \bullet, i} = \mb{0}, \quad \forall i \in [n_m] \numberthis \label{eq:h_mean_0}.
\end{equation*}
Plugging this into (\Cref{eq:AMGM_equal}), we have
% where $\sum_{i=1}^{n_m}\sum_{j=1}^K \| \bh_{m,\{j\}, i}\|_2^2 = \| \bH_{m} \bD_m \|_F^2$ and the last equality holds when $\|\overline{\bh}_{m, \bullet, i}\|_2^2 = 0$.
% % Note that here, we have
% % \[
% % \bD_m = \mathrm{diag}(\bY_m^\top, \cdots, \bY_m^\top) \in \mathbb{R}^{n_m \binom{K}{m} \times n_m K}
% % \]
% Now we reach the final form of the inequality:
% \[
% g_m(\bW \bH_m + \bbb) - c_{2,m} \ge - \gamma_{1,m} \frac{c_{3,m}}{2}\|\bW\|_F^2 - \gamma_{1,m} \frac{\left(\frac{\alpha}{\binom{K}{m}}\right)^2}{2 c_{3,m} n_m}\left(\|\bH_m \bD_m\|_F^{2}\right)
% \]
% where the above inequality achieves equality if and only if the AM-GM achieves equality. Thus we now have $\bu = \bw^k$ and $\bv = \left(\frac{\alpha}{\binom{K}{m}}\bh_{m, \{k\}, i}\right)^{\top}$  
\begin{align*}
    c_{3, m} \bw^k &= \frac{K}{m\binom{K}{m}} \bh_{m, \{k\}, i} \\
    \implies 
    c_{3, m}^2 &= \frac{\left(\frac{K}{m\binom{K}{m}}\right)^2 \sum_{i = 1}^{n} \sum_{k = 1}^{K}{\|\bh_{m, \{k\}, i}\|_F^2}}{n_m \sum_{k=1}^{K}{\|\bw^k}\|_2^2} \\
    &= \frac{\left(\frac{K}{m\binom{K}{m}}\right)^2 \binom{K-2}{m-1} \|\bH_m\|_F^2}{n_m \|\bW\|_F^2} \\
    &= \frac{\kappa_m}{n_m}\frac{\|\bH_m\|_F^2}{\|\bW\|_F^2} \\
    \implies c_{3,m} &= \sqrt{\frac{\kappa_m}{n_m}} \frac{ \|\bH_m\|_F}{ \|\bW\|_F}
    \\
    \implies 
    c_{3, m}^2 
&=
\frac{\kappa_m}{n_m}
\frac{\|\bH_m\|_F^2}{\|\bW\|_F^2}.
\end{align*}

Now, note that by our definition of \(\rho\) and \Cref{lemma:KKT-condition}, we get
\begin{equation}
    \|\bH\|_F^2 = \frac{\lambda_{\bW}}{\lambda_{\bH}} \rho.
    \label{equation:rho-W-H-lambda-relations}
\end{equation}

Recall from the state of the lemma that we defined \(\kappa_m := \left(\frac{K/m}{\binom{K}{m}}\right)^2 \binom{K-2}{m-1} \)
and that
\(
\gamma_{1,m} := \frac{1}{1+c_{1,m}} \frac{m}{K-m}
\).
Thus, continuing from Inequality \eqref{gm_final_inequality}, we have
\[
\gamma_{1,m}^{-1}(g_m(\bW \bH_m + \bbb) - c_{2,m} )\ge -  \frac{c_{3,m}}{2}\|\bW\|_F^2 -  \frac{\kappa_m}{2 c_{3,m} n_m}\|\bH_m \|_F^{2}.
\]
Next, let \(Q > 0\) be an arbitrary constant, to be determined later such that
\begin{equation}
\gamma_{1,m}  = \frac{1}{N_m} Q c_{3,m}^{-1} \frac{\|\bH_m\|_F^2}{\|\bW\|_F^2}, \qquad \forall m \in \{1,\dots, M\}.
\label{equation:gamma-choice}
\end{equation}
A remark is in order: at this current point in the proof, it is unclear that such a \(Q\) exists. However, in \Cref{equation-definition:Q}, we derive an explicit formula for \(Q\) such that \Cref{equation:gamma-choice} holds.
% \yw{Note that here we must show that such a choice is possible, because \(\gamma_{1,m} = \frac{1}{1+c_{1,m}} \frac{m}{K-m}\). In other words, \(\gamma_{1,m}\) must belong to 
% \(
% (0,\frac{m}{K-m})
% \). This is a ``debt''.
% }
Now, given \Cref{equation:gamma-choice}, we have 
\[
g_m(\bW \bH_m + \bbb) - c_{2,m} \ge \frac{1}{N_m}Q\left(-  \frac{ 1}{2}\|\bH_m\|_F^2 -  \frac{1}{2  }\|\bH_m \|_F^{2}\right)
=- \frac{1}{N_m}Q\|\bH_m\|_F^2.
\]
Let
\(
\Gamma_2 := \sum_{m=1}^M \frac{N_m}{N} c_{2,m}
\).
Summing the above inequality on both side over \(m = 1,\dots, M\) according to \Cref{eq:sum_up_gm}, we have
\[\label{eq:QandG}
g(\bW \bH + \bbb) - \Gamma_{2} \ge
-\frac{1}{N}Q\sum_{m=1}^M\|\bH_m\|_F^2
=
-\frac{1}{N}Q\|\bH\|_F^2
=
-\frac{1}{N}Q
     \frac{\lambda_{\bW}}{\lambda_{\bH}} \rho. \numberthis
\] 
where the last equality is due to 
    \Cref{equation:rho-W-H-lambda-relations}.
Now, we derive the expression for \(Q\), which earlier we set to be arbitrary. From \Cref{equation:gamma-choice}, we have
\begin{equation}
    \label{equation:c1m-precise}
\frac{1}{1+c_{1,m}} \frac{m}{K-m}
=
\gamma_{1,m}  = \frac{1}{N_m} Q  \frac{\sqrt{n_m}}{\sqrt{\kappa_m}}\frac{\|\bH_m\|_F}{\|\bW\|_F}.
\end{equation}
Rearranging and using the fact that $N_m = \binom{K}{m} n_m$, we have
\[
\binom{K}{m} \frac{1}{1+c_{1,m}} \frac{m}{K-m}
\sqrt{\kappa_mn_m}
=  Q\frac{\|\bH_m\|_F}{\|\bW\|_F}.
\]
Squaring both side, we have
\[
\left(\frac{1}{1+c_{1,m}} \frac{m}{K-m}\right)^2\kappa_m n_m \binom{K}{m}^2
=  Q^2\frac{\|\bH_m\|_F^2}{\|\bW\|_F^2}.
\]
Summing over \(m = 1,\dots, M\), we have
\[
\sum_{m=1}^M
\left(\frac{1}{1+c_{1,m}} \frac{m}{K-m}\right)^2
\kappa_m n_m \binom{K}{m}^2
  = Q^2
  \sum_{m=1}^M\frac{\|\bH_m\|_F^2}{\|\bW\|_F^2}
  =
   Q^2
   \frac{\|\bH\|_F^2}{\|\bW\|_F^2}
   =
   Q^2
   \frac{\lambda_{\bW}}{\lambda_{\bH}}
\]
Thus, we conclude that 
\begin{equation}
    \label{equation-definition:Q}
Q  = 
   \sqrt{\frac{\lambda_{\bH}}{\lambda_{\bW}}}
   \sqrt{
\sum_{m=1}^M
\left(\frac{1}{1+c_{1,m}} \frac{m}{K-m}\right)^2
\kappa_m n_m \binom{K}{m}^2
}.
\end{equation}
Substituting \(Q\) into \Cref{equation:c1m-precise}, we get
\begin{equation}
\label{equation:c1m-precise2}
\frac{1}{1+c_{1,m}} \frac{m}{K-m}
=
 \frac{1}{\binom{K}{m}\sqrt{\kappa_m n_m}} \frac{\|\bH_m\|_F}{\|\bW\|_F}
\sqrt{\frac{\lambda_{\bH}}{\lambda_{\bW}}}
   \sqrt{
\sum_{m'=1}^M
\left(\frac{1}{1+c_{1,m'}} \frac{m'}{K-m'}\right)^2
\kappa_{m'} n_{m'} \binom{K}{m'}^2
}.
\end{equation}
Finally substituting $Q$ into \Cref{eq:QandG},
\begin{equation*}
g(\bW \bH    + \bbb) - \Gamma_{2} \ge
- \frac{1}{N} \sqrt{
\sum_{m=1}^M
\left(\frac{1}{1+c_{1,m}} \frac{m}{K-m}\right)^2
\kappa_m n_m \binom{K}{m}^2
}\sqrt{\frac{\lambda_{\bW}}{\lambda_{\bH}} }\rho. 
\end{equation*}
which concludes the proof.
\end{proof}

% \hrule 

As a sanity check of the validity of \Cref{lemma:gm-lower-bound},
we briefly revisit the \MClf~case where \(M=1\). We show that our \Cref{lemma:gm-lower-bound} recovers \cite{zhu2021geometric} Lemma B.3 as a special case. Now, from the definition of \(\kappa_m\), we have that \(\kappa_1= 1\). Thus, the above expression reduces to simply
\[
Q = 
\sqrt{\frac{\lambda_{\bH}}{\lambda_{\bW}n_1}}
\frac{1}{1+c_{1,1}} \frac{1}{K-1}.
\]
The lower bound from \Cref{lemma:gm-lower-bound} reduces to simply
\[
g_1(\bW \bH_1 + \bbb) - \gamma_{2,1} \ge -
Q 
 \rho \frac{\lambda_{\bW}}{\lambda_{\bH}}
 =
 -
% \sqrt{\frac{\lambda_{\bH}}{\lambda_{\bW}n_1}}
\frac{1}{1+c_{1,1}} \frac{1}{K-1}
 \rho \sqrt{\frac{\lambda_{\bW}}{\lambda_{\bH}n_1}}
\]
which exactly matches that of \cite{zhu2021geometric} Lemma B.3.

Next, we show that the lower bound in Inequality (\ref{g_lower_bound}) is attained if and only if ($\bW, \bH, \bbb$) satisfies the following conditions. 

\begin{lemma} \label{lemma:ETF_and_sacled_ave}
Under the same assumptions of \Cref{lemma:gm-lower-bound}, the lower bound in Inequality (\ref{g_lower_bound}) is attained for a critical point ($\bW, \bH, \bbb$) of Problem \eqref{eq:DL-loss-ufm} if and only if all of the following hold:
\begin{align*}
 &\mbox{(I)}\quad   \|\bw^{1}\|_2 = \|\bw^{2}\|_2 = \cdots = \|\bw^{ K}\|_2, \quad \mbox{and} \quad \bbb = b \mb{1}, \\
    & \mbox{(II)}\quad  \frac{1}{\binom{K}{m}}
    \sum_{k = 1}^{\binom{K}{m}}
    \bh_{m,k,i} = \mb{0}, \quad \mbox{and} \quad \sqrt{\frac{\binom{K-2}{m-1}}{n_m}} \frac{\|\bH_m\|_F}{\|\bW\|_F} \bw^k = 
     \sum_{\ell: k\in S_{m, \ell}}{\bh_{m, \ell, i}},
    % \bh_{m, \{k\}, i}
     \forall m \in [M], k \in [K], i\in [n_m], \\
     &\mbox{(III)}\qquad  \bW^\top\bW = \frac{\rho}{K-1}\left(\mb{I}_K - \frac{1}{K}\mb{1}_K \mb{1}_K^\top \right)
     % , \ \mbox{and} \ c_{1, m} =
\end{align*}

\textit{(IV)} \hspace{1em}  There exist unique positive real numbers \(C_1,C_2,\dots, C_M>0\) such that the following holds: 
\begin{align*}
    &\bh_{1,k,i} = C_1 \bw^{\ell} 
     \quad \qquad\qquad \mbox{when} \,\, S_{1,k} = \{\ell\},\; \ell\in [K], 
    &\mbox{(Multiplicity \(=1 \) Case)} \\
    &\bh_{m,k,i} = C_m  \textstyle{\sum_{\ell \in S_{m,k}} }\bw^{\ell} 
    \quad \mbox{when} \,\, m > 1.
    & \mbox{(Multiplicity \(>1\) Case)} 
\end{align*}
\hspace{2.5em}(See \Cref{appendix-section:lex-ordering} for the notation \(S_{m,k}\).)

\textit{(V)} \hspace{1em} There exists \(c_{1,1} ,\, c_{1,2} , \, \dots, \, c_{1,M} >0\) such that
\begin{align*}
\label{equation:c1m-precise3}
\frac{1}{1+c_{1,m}} \frac{m}{K-m} 
=
 \frac{1}{\binom{K}{m}\sqrt{\kappa_m n_m}} \frac{\sqrt{ \frac{\binom{K}{m} n_m m (K-m)(K-1)}{K}} * log(\frac{K-m}{m} c_{1,m})}{\rho} \cdot \\
\sqrt{\frac{\lambda_{\bH}}{\lambda_{\bW}}}
   \sqrt{
\sum_{m'=1}^M
\left(\frac{1}{1+c_{1,m'}} \frac{m'}{K-m'}\right)^2
\kappa_{m'} n_{m'} \binom{K}{m'}^2 
}\numberthis
\end{align*}
% \yw{The \(\bh_{m, \{k\}, i}\) and \(\overline{\bh}_{m, \bullet, i}\) notation was defined in the previous proof. I unpacked it here, so our lemma introduces as little things as possible. In the proof, I recalled these definitions so this change doesn't mess up your proof.}
% \yw{I deleted the stars, because this lemma is about a \emph{critical point}, not necessarily the \emph{global minimizer}.}
\end{lemma}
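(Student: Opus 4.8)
The plan is to prove both directions of the ``if and only if'' by carefully tracking \emph{every} inequality that was chained together in the proof of \Cref{lemma:gm-lower-bound}, since the composite lower bound \eqref{g_lower_bound} is attained exactly when all of its constituent inequalities are simultaneously tight. Three inequalities were used: (a) the per-sample lower bound on \(\underline{\mathcal{L}}_{\mathtt{PAL}}\) supplied by \Cref{lemma:key-lower-bound-tightness}; (b) the AM--GM inequality \eqref{eq:AMGM}, tight iff \(c_{3,m}\bw^k = \frac{K}{m\binom{K}{m}}\bh_{m,\{k\},i} - \overline{\bh}_{m,\bullet,i}\) as recorded in \eqref{eq:AMGM_equal}; and (c) the discarding of the nonnegative term \(-K\|\overline{\bh}_{m,\bullet,i}\|_2^2\) at \eqref{h_mean_0}, tight iff \(\overline{\bh}_{m,\bullet,i}=\mb{0}\). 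The forward direction collects these conditions into (I)--(V); the reverse direction substitutes (I)--(V) back and checks that each of (a), (b), (c) becomes an equality. The reverse direction is the routine one, so I would spend the effort on the forward direction.

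First I would dispatch the feature-side conditions. Tightness of (c) gives immediately the first half of (II), namely \(\overline{\bh}_{m,\bullet,i}=\mb{0}\) for all \(m,i\), which is exactly \eqref{eq:h_mean_0}. Substituting this into the AM--GM equality condition \eqref{eq:AMGM_equal} collapses it to \(c_{3,m}\bw^k = \frac{K}{m\binom{K}{m}}\bh_{m,\{k\},i}\); plugging in the closed form \(c_{3,m}=\sqrt{\kappa_m/n_m}\,\|\bH_m\|_F/\|\bW\|_F\) computed in \Cref{lemma:gm-lower-bound} and simplifying the constant via \(\frac{m\binom{K}{m}}{K}\sqrt{\kappa_m}=\sqrt{\binom{K-2}{m-1}}\) yields precisely the second half of (II), the generalized self-duality relation \(\sqrt{\binom{K-2}{m-1}/n_m}\,(\|\bH_m\|_F/\|\bW\|_F)\,\bw^k = \sum_{\ell:\, k\in S_{m,\ell}}\bh_{m,\ell,i}\).

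Next I would extract the logit geometry from (a). I expect the tightness condition of \Cref{lemma:key-lower-bound-tightness} to pin down each logit \(\bz_{m,k,i}=\bW\bh_{m,k,i}+\bbb\), up to an additive constant, to a two-level vector (one value on the coordinates of \(S_{m,k}\), another off it) with inter-level gap \(\log(\frac{K-m}{m}c_{1,m})\). Combining this per-sample rigidity with the aggregate relation (II) constrains the individual features, not merely their sums, so that the otherwise underdetermined linear system (II) is solved uniquely by the tag-wise average ansatz (IV); the two-level logit structure simultaneously forces \(\bbb = b\mb{1}\) and the equal-norm condition in (I). The mean-zero condition from (c) gives \(\sum_k\bw^k=\mb{0}\), whence \((\bw^j)^\top\bw^k\) can take only two values across \(j=k\) and \(j\ne k\); this, with \(\rho=\|\bW\|_F^2\), rigidly determines the Gram matrix to be the simplex ETF (III). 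I would also verify the Pascal identity \(\binom{K-1}{m-1}-\binom{K-2}{m-2}=\binom{K-2}{m-1}\), which makes (IV) consistent with (II) and fixes \(C_m=(\binom{K-2}{m-1}n_m)^{-1/2}\,\|\bH_m\|_F/\|\bW\|_F\).

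Finally I would close the loop on the scalar parameters: substituting the tag-wise structure (IV) and the ETF (III) into the equality condition of \Cref{lemma:key-lower-bound-tightness}, the logit gap \(\log(\frac{K-m}{m}c_{1,m})\) must match the value produced by the ETF inner products and \(C_m\); together with the already-derived expression \eqref{equation:c1m-precise2} for \(c_{1,m}\), this produces the coupled nonlinear system (V). The \textbf{main obstacle} is twofold: (i) upgrading the aggregate equalities (II) to the pointwise structure (IV) and deducing the rigid ETF geometry (III), which forces the per-sample tightness of \Cref{lemma:key-lower-bound-tightness} to do genuine work beyond the summed conditions; and (ii) establishing that the transcendental system (V) for \((c_{1,1},\dots,c_{1,M})\) admits a consistent positive solution, which I would handle by a fixed-point / intermediate-value argument exploiting the monotonicity of \(c_{1,m}\mapsto\log(\frac{K-m}{m}c_{1,m})\) against the algebraic right-hand side.
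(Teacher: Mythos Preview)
Your plan tracks the three chained inequalities correctly and extracts (II) exactly as the paper does; the derivation of (I) and (III) from the $m=1$ case plus the tightness conditions of \Cref{lemma:key-lower-bound-tightness} also mirrors the paper (which, after reaching $c_{3,1}\|\bw^k\|^2-b_k=c_{3,1}\|\bw^\ell\|^2-b_\ell$, defers to Zhu et al.). The substantive divergence is at (IV).

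The paper does \emph{not} obtain (IV) by combining per-sample logit rigidity with the aggregate relation (II). Instead it passes to one representative column per label set, writes (II) as $c_{h,m}\widetilde{\bH}_1=\widetilde{\bH}_m\widetilde{\bY}_m^\top$, and then proves via a minimum-norm projection argument (\Cref{lemma:minimum_norm_proj}, \Cref{lemma:proj_subspace_z}) that $\widetilde{\bH}_m\bP_m=\widetilde{\bH}_m$ with $\bP_m=\widetilde{\bY}_m^\top(\widetilde{\bY}_m^\top)^\dagger$; an explicit formula for $(\widetilde{\bY}_m^\top)^\dagger$ (\Cref{lemma:Y_Moore_pinv}) then yields $\widetilde{\bH}_m=\mathrm{const}\cdot\bW^\top\widetilde{\bY}_m$. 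The projection step uses global optimality: replacing $\widetilde{\bH}_m$ by $\widetilde{\bH}_m\bP_m$ preserves $\bW\widetilde{\bH}_m$ and hence the loss, so strict norm decrease would contradict optimality.

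Your route has a gap as written. The two-level structure from \Cref{lemma:key-lower-bound-tightness} determines $\bW\bh_{m,k,i}$, not $\bh_{m,k,i}$, and $\ker\bW$ is nontrivial since $d>K-1$. Condition (II) does not kill the kernel component: writing the kernel pieces as columns of $\bV_m$, (II) only forces $\bV_m\widetilde{\bY}_m^\top=0$, which has nonzero solutions whenever $\binom{K}{m}>K$. The missing ingredient is the critical-point identity $\bW^\top\bW=\tfrac{\lambda_{\bH}}{\lambda_{\bW}}\bH\bH^\top$ from \Cref{lemma:KKT-condition}, which forces every column of $\bH$ into the row space of $\bW$. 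With that in hand, and after (III) and $\sum_k\bw^k=0$, the feature is recovered uniquely from its logit as $\bh_{m,k,i}=\tfrac{K-1}{\rho}\log\bigl(\tfrac{K-m}{m}c_{1,m}\bigr)\sum_{\ell\in S_{m,k}}\bw^\ell$, which is (IV) with the same $C_m$ the paper obtains. So your approach works after this patch and is arguably more direct than the pseudoinverse machinery; the paper's route, on the other hand, does not need (III) as an input to the (IV) step.

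On your obstacle (ii): the paper gives no analytic existence proof for the $c_{1,m}$'s either; it only checks solvability numerically. In the proof of \Cref{thm:GO_thm} this is sidestepped by coercivity (a global minimizer exists, so the forward direction of the lemma supplies the $c_{1,m}$'s), hence a separate fixed-point argument is not strictly required.
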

% \py{for the above, maybe we need to redefine $c_{1,m}$ like zhu's paper?}
% \yw{Dropping the \(c_{1,m}\) for now. It's a bit complicated.}
The proof of \Cref{lemma:ETF_and_sacled_ave} utilizes the conditions in \Cref{lemma:key-lower-bound-tightness}, and the conditions in \Cref{eq:AMGM_equal} and \Cref{eq:h_mean_0} during the proof of \Cref{lemma:gm-lower-bound}.
\begin{proof} 
Similar as in the proof of \Cref{lemma:gm-lower-bound}, define
\(\bh_{m,\{k\}, i} := \sum_{\ell: k\in S_{m, \ell}}{\bh_{m, \ell, i}}\) \\
and
\(\overline{\bh}_{m, \bullet, i} := \frac{1}{\binom{K}{m}}
    \sum_{k = 1}^{\binom{K}{m}}
    \bh_{m,k,i} \).
From the proof of \Cref{lemma:gm-lower-bound}, the lower bound is attained whenever the conditions in \Cref{eq:AMGM_equal} and \Cref{eq:h_mean_0} hold, which  respectively is equivalent to the following: 
\begin{align*}
    &\overline{\bh}_{m, \bullet, i} = \mb{0} \qquad \mbox{and} \\
    &\sqrt{\frac{\binom{K-2}{m-1}}{n_m}} \frac{\|\bH_m\|_F}{\|\bW\|_F} \bw^k = \bh_{m, \{k\}, i}, \forall m \in [M\, k \in [K], i\in [n_m] \numberthis\label{lemma4_h_condi}, 
\end{align*}

In particular,  the $m = 1$ case further implies 
\begin{align*}
    \sum_{k=1}^{K} \bw^{k} = \mb{0}.
\end{align*}
Next, under the condition described in \Cref{lemma4_h_condi}, when $m = 1$, if we want Inequality (\ref{g_lower_bound}) to become an equality, we only need Inequality (\ref{pal_key_inequal}) to become an equality when $m = 1$, which is true if and only if conditions in \Cref{lemma:key-lower-bound-tightness} holds for $\bz_{1, k, i} = \bW \bh_{1, k, i} \forall i \in [n_m]$ and $\forall k \in [K]$.
% We know: 
% \[
% \overline{\bh}_{m,\bullet,i}=
% \frac{1}{\binom{K}{m}}
% \sum_{k = 1}^{\binom{K}{m}}
% \bh_{m,k,i}
% \]
% \[
% c_{3, m} \bw^k = \frac{\alpha}{\binom{K}{m}} \bh_{m, \{k\}, i} \quad \forall i \in [n]
% \]
First let $[\bz_{1, k, i}]_j = \bh^\top_{1, k, i} \bw^j + b_j$, we would have: 
\begin{align*}
    \sum_{j=1}^K [\bz_{1, k, i}]_j &= K \bar{b} \quad \mbox{and} \quad
    K[\bz_{1, k, i}] = c_{3, 1} \left( K \|\bw^k\|_2^2 \right) + K b_k. \numberthis \label{z_sum_and_Kz} 
\end{align*}
% \yw{(\(\star\))We need to define \(\beta\)} \py{fixed} 
We pick $\gamma_{1,1} = 1 \beta$, where $\beta$ is defined in (\ref{alpha_beta_def}), to be the same for all $k \in [K]$ in multiplicity one, which also means to pick $\frac{1}{\beta} - (K - 1)$ to be the same for all $k \in [K]$ within one multiplicity. Note under the first (\textit{in-group equality}) and second \textit{out-group equality} condition in \Cref{lemma:key-lower-bound-tightness} and utilize the condition (\ref{z_sum_and_Kz}), we have 
\begin{align*}
\frac{1}{\beta} - (K - 1) &= \frac{(K - 1) \mathrm{exp}(z_{out}) + \mathrm{exp}(z_{in})}{\exp(z_{out})} - (K - 1)\\ 
&= (K - 1) + \exp(z_{in} - z_{out}) - (K - 1)\\
&= \exp(z_{in} - z_{out}) \\ 
&= \exp \left( \frac{K z_{in} - z_{in} - (K-1) z_{out}}{K-1} \right) \\
&= \exp \left( \frac{K z_{in} - \sum_j z_j }{K-1}\right) \\
\end{align*}
\begin{align*}
& = \left( \mathrm{exp} \left( \frac{\sum_j{z_j} - K z_{in}}{K-1} \right) \right)^{-1} \\
& = \left( \mathrm{exp} \left( \frac{\sum_j{z_j} - K z_{k}}{K-1} \right) \right)^{-1} \\
& = \exp \left( \frac{K}{K-1} \left(\bar{b} - c_{3,1} \|\bw^k\|_2^2 \right) - b_k\right)^{-1}
\end{align*}
Since the scalar $\gamma_{1,1}$ is picked the same for one $m$, but the above equality we have 
\begin{align}
    c_{3,1} \|\bw^k\|_2^2  - b_k = c_{3,1} \|\bw^\ell\|_2^2 - b_\ell \quad \forall \ell \neq k.
\end{align}
this directly follows after Equation ($29$) from the proof in Lemma B.4 of \cite{zhu2021geometric} to conclude all the conditions except the scaled average condition, which we address next. To this end, we use the second condition in (\ref{lemma4_h_condi}) which asserts for $m\ge 2$ that: 
\begin{align*}\label{h_h_relationship} 
    &\sqrt{\frac{n_m}{\binom{K-2}{m-1}}} \frac{\|\bW\|_F}{\|\bH_m\|_F}\bh_{m, \{k\}, i} = \bw^k \\
    \implies &\sqrt{\frac{n_1}{\binom{K-2}{1-1}}} \frac{\|\bW\|_F}{\|\bH_1\|_F} \bh_{1, \{k\}, i} = \sqrt{n_1} \frac{\|\bW\|_F}{\|\bH_1\|_F} \bh_{1, k, i} = \bw^k  = \sqrt{\frac{n_m}{\binom{K-2}{m-1}}} \frac{\|\bW\|_F}{\|\bH_m\|_F} \bh_{m, \{k\}, i} \\
    \implies & \bh_{m, \{k\}, i} = \sqrt{\frac{n_1 \binom{K-2}{m-1}}{n_m}} \frac{\|\bH_m\|_F}{\|\bH_1\|_F} \bh_{1, k, i} = c_{h,m} \bh_{1, k, i} \numberthis 
\end{align*}
where $c_{h,m} = \sqrt{\frac{n_1 \binom{K-2}{m-1}}{n_m}} \frac{\|\bH_m\|_F}{\|\bH_1\|_F}$. Let $\widetilde{\bH}_1$ (resp.\ $\widetilde{\bH}_m$) be the block-submatrix corresponding to the first \(K\) columns of \(\bH_1\) (resp.\ first \(\binom{K}{m}\) columns of \(\bH_m\)).
Define \(\widetilde{\bY}_1\) and \(\widetilde{\bY}_m\) similarly.
Then, \Cref{h_h_relationship} can be equivalently stated in the following matrix form: 
\begin{align*}
    c_{h,m} \widetilde{\bH}_1 &= \widetilde{\bH}_m \widetilde{\bY}_m^{\top} 
\end{align*}
% which further implies that
% \begin{align*}
%     c_{h,m} \widetilde{\bH}_1 \widetilde{\bY}_m &= \widetilde{\bH}_m \widetilde{\bY}_m^{\top} \widetilde{\bY}_m 
%     = \widetilde{\bH}_m,
% \end{align*}
Let $\bP_m = \widetilde{\bY}_{m}^{\top} (\widetilde{\bY}_{m}^{\top})^{\dagger}$ be the projection matrix onto the subspace $\widetilde{\bY}_{m}$, then we have
\[
\widetilde{\bH}_m \bP_m = \widetilde{\bH}_m \widetilde{\bY}_{m}^{\top} (\widetilde{\bY}_{m}^{\top})^{\dagger} = c_{h,m}\widetilde{\bH}_1 (\widetilde{\bY}_{m}^{\top})^{\dagger},
\]
which simplifies as
\[
\widetilde{\bH}_m \bP_m = c_{h,m} \widetilde{\bH}_1 (\widetilde{\bY}_{m}^{\top})^{\dagger}.
\]
Applying \Cref{lemma:minimum_norm_proj} to the LHS and \Cref{lemma:Y_Moore_pinv} to the RHS we have
\begin{align*}
    \widetilde{\bH}_m &= c_{h,m} \widetilde{\bH}_1 (\tau_m \widetilde{\bY}_m + \eta_m \bm \Theta) \\
    \widetilde{\bH}_m &=  c_{h,m} \cdot \tau_m \widetilde{\bH}_1 \widetilde{\bY}_m
\end{align*}
and substituting $\widetilde{\bH}_1$ using the relationship between $\widetilde{\bH}_1$ and $\bW$, namely, $ c_{h, 1} \cdot (\bW^\top) = \widetilde{\bH}_1 $, we now have
\[
\widetilde{\bH}_m = c_{h, m} \cdot \tau_m \cdot c_{1, m} (\bW^\top \widetilde{\bY}_m)
\]
where 
\begin{align*}
    C_m &= c_{h,m} \cdot c_{h,1} \cdot  \tau_m \\
    & = \sqrt{\frac{n_1}{n_m \binom{K-2}{m-1}}} \frac{\|\bH_m\|_F}{\|\bH_1\|_F} \cdot \sqrt{\frac{1}{n_1}} \frac{\|\bH_1\|_F}{\|\bW\|_F} \\
    & = \sqrt{\frac{1}{n_m \binom{K-2}{m-1}}} \frac{\|\bH_m\|_F}{\|\bW\|_F}    
\end{align*}

This proves (IV).  
Finally, to proof (V), following from \Cref{equation:c1m-precise2} in the proof of \Cref{lemma:gm-lower-bound}, we only need to further simplify $\|\bH_m\|_F$. 

We first establish a connection the between $\|\bW\bH_m\|_F^2$ and $\|\bH_m\|_F^2$. By definition of Frobenius norm and the last layer classifier $\bW$ is an ETF with expression $\bW^\top \bW = \frac{\rho}{K-1}\left(\mb{I}_K - \frac{1}{K}\mb{1}_K \mb{1}_K^\top \right)$, we have
\begin{align*}
        \|\bW\bH_m\|_F^2 &= tr(\bW\bH_m \bH_m^\top \bW^\top) \\
        &= \frac{\rho}{K-1} tr(\bH_m \bH_m^\top (\mb{I}_K - \mb{1}_K \mb{1}_K^\top)) \\
        &= \frac{\rho}{K-1} \|\bH_m\|_F^2
\end{align*}

Since variability within feature already collapse at this point, we can express $\|\bW\bH_m\|_F^2$ in terms of $z_{m, in}$ and $z_{m, out}$:
\begin{align*}
     \|\bW\bH_m\|^2_F = \frac{\rho}{K-1} \|\bH_m\|_F^2 = \binom{K}{m}n_m(mz_{m,in}^2 + (K-m)z_{m,out}^2).
\end{align*}
From the second equality we could express $\|\bH_m\|$ as:
\begin{align*}
\label{eq:Hm_norm_initial}
 \|\bH_m\|_F = \sqrt{\frac{\binom{K}{m} n_m (K-1)}{\rho} (mz_{m,in}^2 + (K-m)z_{m,out}^2)} \numberthis
\end{align*}

Recall from \Cref{lemma:key-lower-bound-tightness}, we have the following equation to express $z_{m, in}$ and $z_{m, out}$
\begin{align*}
    &z_{in} - z_{m,out} = log(\frac{K-m}{m} c_{1,m}). 
\end{align*}
As column sum of $\bH_m$ equals to $\mb{0}$, the column sum of $\bW\bH_m$ also equals to $\mb{0}$ as well. Given the extra constrain of \textit{in-group equality} and \textit{out-group equality} from \Cref{lemma:key-lower-bound-tightness}, it yields: 
\begin{align*}
    mz_{m,in} + (K-m)z_{m,out} = 0
\end{align*}

Now we could solve for $z_{m, in}$ and $z_{m, out}$ in terms of $c_{1,m}$
\begin{align*}
        &z_{m, in} = \frac{K-m}{K}log\left(\frac{K-m}{m} c_{1,m}\right) \\
    &z_{m, out} = -\frac{m}{K}log\left(\frac{K-m}{m} c_{1,m}\right)
\end{align*}

Substituting above expression for $z_{m,in}$ and $z_{m, out}$ into \Cref{eq:Hm_norm_initial}, we have
\begin{align*}
        \|\bH_m\|_F = \sqrt{ \frac{ \binom{K}{m} n_m m (K-m)(K-1)}{\rho K}} \log(\frac{K-m}{m} c_{1,m})
\end{align*}
Finally, we substituting the above expression of $\|\bH_m\|_F$ in to \Cref{equation:c1m-precise2} and conclude: 
\begin{align*}
    \frac{1}{1+c_{1,m}} \frac{m}{K-m}
=
 \frac{1}{\binom{K}{m}\sqrt{\kappa_m n_m}} \frac{\sqrt{ \frac{\binom{K}{m} n_m m (K-m)(K-1)}{K}} * log(\frac{K-m}{m} c_{1,m})}{\rho} \cdot \\
\sqrt{\frac{\lambda_{\bH}}{\lambda_{\bW}}}
   \sqrt{
\sum_{m'=1}^M
\left(\frac{1}{1+c_{1,m'}} \frac{m'}{K-m'}\right)^2
\kappa_{m'} n_{m'} \binom{K}{m'}^2
}.
\end{align*}

Revisiting and combining results from (IV) and (V), we have the scaled-average constant $C_m$ to be 
\begin{align*}
    C_m &= \sqrt{\frac{1}{n_m \binom{K-2}{m-1}}} \frac{\sqrt{ \frac{ \binom{K}{m} n_m m (K-m)(K-1)}{\rho K}} \log(\frac{K-m}{m} c_{1,m})
}{\|\bW\|_F} \\
&= \frac{K-1}{\rho} \log(\frac{K-m}{m} c_{1,m})
\end{align*}
where $c_{1,m}$ is a solution to the system of equation \Cref{equation:c1m-precise3}. 
Note that \Cref{equation:c1m-precise3} hold for all $m$. Thus, we could construct a system of equation whose variable are $c_{1,1}, \cdots, c_{1,m}$. Even when missing some multiplicity data, we sill have same number of variable $c_{1,m}$ as equations. We numerically verifies that under various of UFM model setting (i.e. different number of class and different number of multiplicities), $c_{1,m}$ does solves the above system of equation.  
\end{proof}

%\begin{lemma} \label{lemma:scaled_average}
% Let $\widetilde{\bH}_m$ and $\widetilde{\bY}_m$ be defined the same way as previous Lemma, then we have 
% $\widetilde{\bH}_m = \tau_m \widetilde{\bH}_1 \bY_m$, where $\tau_m = c_{h,m} \alpha$
% \end{lemma}

% \begin{proof}
% \end{proof}

\begin{lemma} \label{lemma:minimum_norm_proj}
Let let $\bP_m = \widetilde{\bY}_{m}^{\top} (\widetilde{\bY}_{m}^{\top})^{\dagger}$ be the projection matrix then we have, $\widetilde{\bH}_m \bP_m = \widetilde{\bH}_m$
\end{lemma}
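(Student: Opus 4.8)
The plan is to reduce the asserted identity to a subspace-containment statement and then establish that containment from the logit structure holding at a global optimum. Since $\bP_m = \widetilde{\bY}_m^{\top}(\widetilde{\bY}_m^{\top})^{\dagger}$ is the orthogonal projector onto $\Range(\widetilde{\bY}_m^{\top})$, the identity $\widetilde{\bH}_m\bP_m = \widetilde{\bH}_m$ is equivalent to $\widetilde{\bH}_m(\mb{I} - \bP_m) = \mb{0}$, i.e.\ to $\Null(\widetilde{\bY}_m)\subseteq\Null(\widetilde{\bH}_m)$, i.e.\ to every row of $\widetilde{\bH}_m$ lying in the row space of $\widetilde{\bY}_m$. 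For $m = 1$ this is vacuous, since $\widetilde{\bY}_1 = \mb{I}_K$ and hence $\bP_1 = \mb{I}_K$; the content lies in the range $m\ge 2$, where $\widetilde{\bY}_m$ has rank $K < \binom{K}{m}$ and $\bP_m\ne\mb{I}$. It therefore suffices to produce a matrix $\bC\in\mathbb{R}^{d\times K}$ with $\widetilde{\bH}_m = \bC\,\widetilde{\bY}_m$.

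First I would pin down each feature column using \Cref{lemma:key-lower-bound-tightness}. At a critical point attaining the lower bound of \Cref{lemma:gm-lower-bound}, the logit vector $\bW\bh_{m,\ell,1} + \bbb$ is in-group/out-group constant, equal to $z_{m,in}$ on the coordinates $j\in S_{m,\ell}$ and to $z_{m,out}$ on the coordinates $j\notin S_{m,\ell}$. Combining this with $\bbb = b\,\mb{1}$ from condition (I) gives $\bW\bh_{m,\ell,1} = \bxi_\ell := (z_{m,out} - b)\mb{1} + (z_{m,in} - z_{m,out})\,\by_{S_{m,\ell}}$, which depends on the sample only through its label set $S_{m,\ell}$, and in particular not on the index within the class.

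Next I would invoke global optimality to eliminate any component of the features in $\Null(\bW)$. If some $\bh_{m,\ell,1}$ had a nonzero $\Null(\bW)$ component, projecting it out would leave $\bW\bh_{m,\ell,1}$ — and hence every term of $g$ — unchanged while strictly decreasing the weight-decay term $\lambda_{\bH}\|\bH\|_F^2$, contradicting optimality for Problem \eqref{eq:DL-loss-ufm}. Hence $\bh_{m,\ell,1}\in\Range(\bW^{\top})$ is the minimum-norm solution of $\bW\bh = \bxi_\ell$, namely $\bh_{m,\ell,1} = \bW^{\dagger}\bxi_\ell$. Stacking over $\ell$ yields $\widetilde{\bH}_m = \bW^{\dagger}\bXi_m$ with $\bXi_m = (z_{m,out} - b)\,\mb{1}\mb{1}^{\top} + (z_{m,in} - z_{m,out})\,\widetilde{\bY}_m$. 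Using $\mb{1}^{\top}\bW = \sum_{k}\bw^k = \mb{0}$ (derived earlier in the proof of \Cref{lemma:ETF_and_sacled_ave} from the $m=1$ tightness), we have $\mb{1}\perp\Range(\bW)$ and therefore $\bW^{\dagger}\mb{1} = \mb{0}$, so the all-ones block drops out and $\widetilde{\bH}_m = (z_{m,in} - z_{m,out})\,\bW^{\dagger}\widetilde{\bY}_m = \bC\,\widetilde{\bY}_m$ with $\bC := (z_{m,in} - z_{m,out})\bW^{\dagger}$. Every row of $\widetilde{\bH}_m$ is thus a linear combination of the rows of $\widetilde{\bY}_m$, which is exactly the containment sought, proving $\widetilde{\bH}_m\bP_m = \widetilde{\bH}_m$.

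I expect the optimality step to be the main obstacle: one must argue cleanly that discarding the $\Null(\bW)$-components is loss-preserving — which rests precisely on the already-derived fact that PAL-tightness fixes each logit $\bW\bh_{m,\ell,1}$ to $\bxi_\ell$ — and that it is then forced by the strictly positive penalty $\lambda_{\bH}$. A secondary cancellation to handle with care is that of the all-ones block; the route above uses $\sum_k\bw^k = \mb{0}$, but the conclusion is in fact robust to this, since $\mb{1}^{\top} = \tfrac{1}{m}\,\mb{1}_K^{\top}\widetilde{\bY}_m$ already lies in the row space of $\widetilde{\bY}_m$, so the rows of $\bXi_m$ — and hence those of $\widetilde{\bH}_m = \bW^{\dagger}\bXi_m$ — remain in that row space regardless.
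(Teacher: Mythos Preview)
Your argument is correct and takes a genuinely different route from the paper's. Both proofs rest on the same two ingredients --- the in-group/out-group logit structure at tightness, and a minimum-norm optimality argument --- but they deploy them in opposite orders and via different projections. The paper first uses the logit structure (packaged as \Cref{lemma:proj_subspace_z}) to show $\bW\widetilde{\bH}_m\bP_m = \bW\widetilde{\bH}_m$, so that right-multiplying by $\bP_m$ preserves $g$; then global optimality forces $\|\widetilde{\bH}_m\bP_m\|_F = \|\widetilde{\bH}_m\|_F$ and hence $\widetilde{\bH}_m\bP_m = \widetilde{\bH}_m$. You instead project the \emph{columns} of $\widetilde{\bH}_m$ onto $\Range(\bW^{\top})$, for which $g$-preservation is immediate (since $g$ sees $\bH$ only through $\bW\bH$); optimality then pins down $\bh_{m,\ell,1} = \bW^{\dagger}\bxi_\ell$, and only afterward do you invoke the logit structure to write $\bXi_m$ as an affine combination of $\widetilde{\bY}_m$ and the all-ones block, both of whose rows lie in the row space of $\widetilde{\bY}_m$. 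Your approach is more constructive --- the identity $\widetilde{\bH}_m = (z_{m,\mathrm{in}}-z_{m,\mathrm{out}})\bW^{\dagger}\widetilde{\bY}_m$ is essentially the scaled-average conclusion already --- and it bypasses the need for a separate \Cref{lemma:proj_subspace_z}. The paper's approach has the complementary virtue of working directly with $\bP_m$ without ever computing $\bW^{\dagger}$ or invoking $\bbb = b\mb{1}$. Your fallback observation that $\mb{1}_{\binom{K}{m}}^{\top} = \tfrac{1}{m}\mb{1}_K^{\top}\widetilde{\bY}_m$ is exactly what the paper uses in \Cref{lemma:proj_subspace_z} to place the all-ones block in the row span, so the two arguments converge at that point.
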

\begin{proof}
As $\bP_m$ is a projection matrix, we have that $\|\widetilde{\bH}_m\|^2_{F} = \|\widetilde{\bH}_m \bP_m\|^2_{F}$ if and only if $\widetilde{\bH}_m = \widetilde{\bH}_m \bP_m$. So it is suffice to show that $\|\widetilde{\bH}_m\|^2_{F} = \|\widetilde{\bH}_m \bP_m\|^2_{F}$. We denote $\bW \widetilde{\bH}_m \bP_m$ as the projection solution and by \cref{lemma:proj_subspace_z} we have that
\[
\bW \widetilde{\bH}_m \bP_m =\bW \widetilde{\bH}_m,
\]
which further implies that the projection solution $\bW \widetilde{\bH}_m \bP_m$ also solves $g$
\[
g(\bW \widetilde{\bH}_m, \widetilde{\bY}) = g(\bW \widetilde{\bH}_m \bP_m, \widetilde{\bY}).
\] 
When it comes to the regularization term, by minimum norm projection property, we have $\|\widetilde{\bH}_m\|^2_{F} \ge \|\widetilde{\bH}_m \bP_m\|^2_{F}$. Note if the projection solution results in a strictly smaller frobenious norm i.e. $\|\widetilde{\bH}_m\|^2_{F} > \|\widetilde{\bH}_m \bP_m\|^2_{F}$, then $f(\bW, \widetilde{\bH}_m \bP_m, \bbb) < f(\bW, \widetilde{\bH}_m, \bbb)$, this contradict the assumption that $\bZ_m = \bW \widetilde{\bH}_m$ is the global solutions of $f$. Thus, the only possible outcomes is that $\|\widetilde{\bH}_m\|^2_{F} = \|\widetilde{\bH}_m \bP_m\|^2_{F}$, which complete the proof.
\end{proof}

\begin{lemma} \label{lemma:proj_subspace_z}
We want to show that the optimal global solution of $f$, $\bW \widetilde{\bH}_m \bP_m$, is the same after projected on to the space of $\widetilde{\bY}_m$, i.e., $\bW \widetilde{\bH}_m \bP_m = \bW \widetilde{\bH}_m$
\end{lemma}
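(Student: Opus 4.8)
The plan is to turn the matrix identity into a row-space containment statement and then read off that containment from the logit geometry that holds at a global optimum. First I would note that $\bP_m = \widetilde{\bY}_m^\top(\widetilde{\bY}_m^\top)^\dagger$ is the orthogonal projector onto $\Range(\widetilde{\bY}_m^\top)$, i.e.\ onto the span of the rows of $\widetilde{\bY}_m$ inside $\mathbb{R}^{\binom{K}{m}}$. Since right-multiplication by $\bP_m$ projects each row, the target equality $\bW\widetilde{\bH}_m\bP_m = \bW\widetilde{\bH}_m$ is \emph{equivalent} to the claim that every row of the logit matrix $\bW\widetilde{\bH}_m$ already lies in $\Range(\widetilde{\bY}_m^\top)$. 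It is worth stressing that this is not a generic linear-algebra identity: because the pick-all-labels loss sees the logits through a log-sum-exp term, projecting $\widetilde{\bH}_m$ onto the label subspace would change the loss at an arbitrary feasible point. The identity must therefore be extracted from the optimality of $(\bW,\bH,\bbb)$, and this is the one conceptual step of the argument.

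Second, I would import the tightness structure available at a global solution. Since the lower bound of \Cref{lemma:gm-lower-bound} is attained, the in-group and out-group equality conditions of \Cref{lemma:key-lower-bound-tightness} hold for every Multiplicity-$m$ sample. Hence the logit vector of the representative with label set $S_{m,k}$ is constant on the in-group coordinates and constant on the out-group coordinates, namely $\bW\bh_{m,k}+\bbb = z_{m,\mathrm{in}}\,\by_{S_{m,k}} + z_{m,\mathrm{out}}(\mb{1}_K - \by_{S_{m,k}})$ for scalars $z_{m,\mathrm{in}},z_{m,\mathrm{out}}$ that do not depend on $k$. Subtracting the bias and collecting the $\binom{K}{m}$ representative columns gives the compact affine form
\begin{equation*}
\bW\widetilde{\bH}_m = (z_{m,\mathrm{in}} - z_{m,\mathrm{out}})\,\widetilde{\bY}_m + \big(z_{m,\mathrm{out}}\,\mb{1}_K - \bbb\big)\,\mb{1}_{\binom{K}{m}}^\top .
\end{equation*}

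Third, I would check that both summands have all rows in $\Range(\widetilde{\bY}_m^\top)$. The rows of $\widetilde{\bY}_m$ lie there by definition. The second summand is rank one with right factor $\mb{1}_{\binom{K}{m}}^\top$, so each of its rows is a scalar multiple of $\mb{1}_{\binom{K}{m}}^\top$ (regardless of the bias), and it therefore suffices to show $\mb{1}_{\binom{K}{m}}\in\Range(\widetilde{\bY}_m^\top)$. This is the one-line observation that every column of $\widetilde{\bY}_m$ is the multi-hot vector of a Multiplicity-$m$ label set and hence sums to $m$; equivalently $\widetilde{\bY}_m^\top\mb{1}_K = m\,\mb{1}_{\binom{K}{m}}$, so $\mb{1}_{\binom{K}{m}} = \tfrac{1}{m}\widetilde{\bY}_m^\top\mb{1}_K$ is a combination of the rows of $\widetilde{\bY}_m$. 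Consequently every row of $\bW\widetilde{\bH}_m$ lies in $\Range(\widetilde{\bY}_m^\top)$, so $\bW\widetilde{\bH}_m$ is fixed by right-multiplication by $\bP_m$, which is exactly $\bW\widetilde{\bH}_m\bP_m = \bW\widetilde{\bH}_m$, as used in \Cref{lemma:minimum_norm_proj}.

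The main obstacle is the first realization: the statement is simply false without optimality, so the proof cannot be a purely algebraic manipulation of $\bP_m$ and must invoke the in-group/out-group logit geometry established at the optimum (this also shows there is no circularity, since that geometry is a consequence of attaining the bound, whereas the scaled-average conclusion of \Cref{lemma:ETF_and_sacled_ave} is a separate downstream consequence). Once that structure is in hand, the remaining two steps—writing the logits in the affine form above and noting that $\mb{1}_{\binom{K}{m}}$ lies in the label row space because every column sum equals $m$—are routine.
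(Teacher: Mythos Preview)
Your proposal is correct and follows essentially the same route as the paper. Both arguments use the in-group/out-group equality at the optimum (from \Cref{lemma:key-lower-bound-tightness}) to write $\bW\widetilde{\bH}_m$ as an affine combination of $\widetilde{\bY}_m$ and an all-ones piece, and then observe that the all-ones vector $\mb{1}_{\binom{K}{m}}$ already lies in the row space of $\widetilde{\bY}_m$, so $\bP_m$ fixes $\bW\widetilde{\bH}_m$. Your version is in fact a bit more explicit: you state the row-space containment equivalence up front, and you justify $\mb{1}_{\binom{K}{m}}\in\Range(\widetilde{\bY}_m^\top)$ via the clean identity $\widetilde{\bY}_m^\top\mb{1}_K = m\,\mb{1}_{\binom{K}{m}}$, whereas the paper appeals to a ``column sum'' computation and to \Cref{lemma:Y_Moore_pinv} for the projector form. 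One small point worth tightening (shared with the paper): your claim that $z_{m,\mathrm{in}}$ and $z_{m,\mathrm{out}}$ ``do not depend on $k$'' is what makes the residual term rank-one with right factor $\mb{1}_{\binom{K}{m}}^\top$; strictly, \Cref{lemma:key-lower-bound-tightness} only pins down the \emph{difference} $z_{m,\mathrm{in}}-z_{m,\mathrm{out}}$, so the constancy across $k$ tacitly uses the additional symmetry/equality conditions already in force at the optimum by the time this lemma is invoked inside \Cref{lemma:ETF_and_sacled_ave}.
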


\begin{proof}
Let $\bZ_m = \bW \widetilde{\bH}_m$ denote the global minimizer of the loss function $f$ for an arbitrary multiplicity $m$. Since $\bZ_m$ has both the in-group and out-group equality property, we could express it as
\[
\bZ_m = d_1 \widetilde{\bY}_m + d_2 \bm \Theta,
\] for some constant $d_1$, $d_2$, and all-one matrix $ \bm \Theta$ of proper dimension. 
Note that it is suffice to show that $\bZ_m$ lives in the subspace of which the projection matrix $\bP_m$ projects onto. 
By \cref{lemma:Y_Moore_pinv}, as $(\widetilde{\bY}_{m}^{\top})^{\dagger}$ is the Moore–Penrose pseudo-inverse of $\widetilde{\bY}_{m}^{\top}$ by, we could rewrite $\bP_m$ as
\begin{align*}
    \bP_m &= \widetilde{\bY}_{m}^{\top} (\widetilde{\bY}_{m}^{\top})^{\dagger}\\
    &=\widetilde{\bY}_{m}^{\top} \left( \widetilde{\bY}_{m}\widetilde{\bY}_{m}^{\top}\right)^{\dagger} \widetilde{\bY}_{m}\\
    & =\widetilde{\bY}_{m}^{\top} \left( \widetilde{\bY}_{m}\widetilde{\bY}_{m}^{\top}\right)^{-1} \widetilde{\bY}_{m}.
\end{align*}
Hence we can see that the subspace which $\bP_m$ projects onto is spanned by columns/rows of $\widetilde{\bY}_m$. In order to show that $\bZ_m = d_1 \widetilde{\bY}_m + d_2 \bm \Theta$ is in the subspace spanned by columns of $\widetilde{\bY}_m$, it is suffice to see that the columns sum of $\widetilde{\bY}_m = \frac{m}{K} \binom{K}{m} \mb{1}$.  Thus, we finished the proof.
\end{proof}

\begin{lemma} \label{lemma:Y_Moore_pinv} The Moore-Penrose pseudo-inverse of $\widetilde{\bY}_{m}^{\top}$ has the form
    $(\widetilde{\bY}_{m}^{\top})^{\dagger} = \tau_m \widetilde{\bY}_m + \eta_m \bm \Theta$, where $\bm 
 \Theta$ is the all-one matrix with proper dimension and $\tau_m = \frac{a + c}{bc}$, $\eta_m = -\frac{a}{bc}$, for $a = \frac{m-1}{k-1} \binom{K-1}{m-1}$, $b = \frac{m}{k} \binom{K}{m}$, $c = \frac{m}{k-1} \binom{K-1}{m}$.
\end{lemma}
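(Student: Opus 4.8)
The plan is to reduce the whole statement to a single Gram-matrix inversion. Since $\widetilde{\bY}_m^\top \in \mathbb{R}^{\binom{K}{m}\times K}$ is tall (because $\binom{K}{m}\ge K$ for $1\le m\le K-1$) and, as I verify below, has full column rank $K$, the standard closed form for the pseudo-inverse of a full-column-rank matrix gives $(\widetilde{\bY}_m^\top)^\dagger = (\widetilde{\bY}_m\widetilde{\bY}_m^\top)^{-1}\widetilde{\bY}_m$. So everything comes down to inverting $\widetilde{\bY}_m\widetilde{\bY}_m^\top$ and simplifying the result into the advertised form $\tau_m \widetilde{\bY}_m + \eta_m \bm\Theta$.

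First I would compute the Gram matrix entrywise. Its $(j,j')$ entry counts the size-$m$ subsets $S_{m,k}$ containing both $j$ and $j'$: on the diagonal this is $\binom{K-1}{m-1}$, and off the diagonal it is $\binom{K-2}{m-2}$. Hence $\widetilde{\bY}_m\widetilde{\bY}_m^\top = c\,\mb{I}_K + a\,\mb{1}_K\mb{1}_K^\top$, where $a = \binom{K-2}{m-2}$ and $c = \binom{K-1}{m-1} - \binom{K-2}{m-2} = \binom{K-2}{m-1}$ by Pascal's rule. Matching the lemma's definitions, one checks that the three stated fractions simplify precisely to $a=\binom{K-2}{m-2}$, $b=\binom{K-1}{m-1}$, $c=\binom{K-2}{m-1}$, so in particular $a+c=b$. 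Since $c>0$ for $m\le K-1$, the matrix $c\,\mb{I}_K + a\,\mb{1}_K\mb{1}_K^\top$ is positive definite, which simultaneously establishes its invertibility and the full-column-rank hypothesis invoked above.

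Second I would invert via Sherman–Morrison: $(c\,\mb{I}_K + a\,\mb{1}_K\mb{1}_K^\top)^{-1} = \frac{1}{c}\mb{I}_K - \frac{a}{c(c+aK)}\mb{1}_K\mb{1}_K^\top$. Multiplying this on the right by $\widetilde{\bY}_m$ and using that every column of $\widetilde{\bY}_m$ sums to $m$ (i.e. $\mb{1}_K^\top\widetilde{\bY}_m = m\,\mb{1}_{\binom{K}{m}}^\top$, hence $\mb{1}_K\mb{1}_K^\top\widetilde{\bY}_m = m\,\bm\Theta$) yields $(\widetilde{\bY}_m^\top)^\dagger = \frac{1}{c}\widetilde{\bY}_m - \frac{am}{c(c+aK)}\bm\Theta$.

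The final step is the arithmetic identity $c+aK = mb$, which converts these coefficients into the claimed ones: then $\tau_m = \frac{1}{c} = \frac{a+c}{bc}$ (using $a+c=b$) and $\eta_m = -\frac{am}{c(c+aK)} = -\frac{a}{bc}$. Writing $c = a\,\frac{K-m}{m-1}$, this identity reduces to the one-line check $\frac{K-m}{m-1}+K = \frac{m(K-1)}{m-1}$. I expect no genuine conceptual obstacle here; the only step demanding care is the combinatorial bookkeeping — correctly identifying the diagonal versus off-diagonal counts of the Gram matrix and verifying the two identities $a+c=b$ and $c+aK=mb$ — together with justifying the full-column-rank hypothesis behind $(\widetilde{\bY}_m^\top)^\dagger = (\widetilde{\bY}_m\widetilde{\bY}_m^\top)^{-1}\widetilde{\bY}_m$, which follows immediately from the positive definiteness established in the first step.
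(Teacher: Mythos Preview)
Your proposal is correct and arrives at the same formula through a slightly different route than the paper. Both arguments start by computing the Gram matrix $\widetilde{\bY}_m\widetilde{\bY}_m^\top$ (diagonal $b$, off-diagonal $a$), but from there the paper also computes $\widetilde{\bY}_m(\bm\Theta - \widetilde{\bY}_m^\top)$ (diagonal $0$, off-diagonal $c$), uses a suitable linear combination of these two identities to cook up an explicit left inverse $(\tau_m\widetilde{\bY}_m + \eta_m\bm\Theta)\widetilde{\bY}_m^\top = \mb{I}_K$, and then checks the four Moore--Penrose axioms by hand. Your approach instead invokes the standard full-column-rank formula $(\widetilde{\bY}_m^\top)^\dagger = (\widetilde{\bY}_m\widetilde{\bY}_m^\top)^{-1}\widetilde{\bY}_m$ and inverts the Gram matrix via Sherman--Morrison, which has the advantage that no separate verification of the Moore--Penrose conditions is needed and only the single Gram-matrix identity is required. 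The paper's argument is a bit more self-contained (no named lemmas invoked), while yours is somewhat cleaner and makes the positive-definiteness/full-rank justification explicit, which the paper leaves implicit.
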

\begin{proof}

First, we have the column sum of $\widetilde{\bY}_{m}$ can be written as a constant times an all-one vector
\begin{align}
    \sum_j^{\binom{K}{m}} {(\widetilde{\bY}_{m})_{:, j}} = \frac{m}{K} \binom{K}{m} \mb{1}
\end{align} 
This property could be seen from a probabilistic perspective. We let $i \in [K]$ be fixed and deterministic, and let $S \subseteq [K]$ be a random subset of size $m$ generating by sampling without replacement. Then 
\[
Pr\{i \notin S \} = \frac{K-1}{K} \times \frac{K-2}{K-1} \times \cdots \times \frac{K-m}{K-m+1} = \frac{K-m}{K}.
\]
This implies that $Pr\{i \in S \} = \frac{m}{K}$ and each entry of the column sum result is exactly $\frac{m}{K} \binom{K}{m}$ as we sum up all $\binom{K}{m}$ columns of $\widetilde{\bY}_{m}$. 

Second, the label matrix  $\widetilde{\bY}_{m}$ has the property that 
\begin{align} \label{eq:Ym_property2}
\widetilde{\bY}_m\widetilde{\bY}_m^{\top} = \begin{bmatrix}
b & & a\\
 & \ddots & \\
 a & & b
\end{bmatrix}, \quad
\widetilde{\bY}_m(\bm  \Theta - \widetilde{\bY}_m^{\top}) = \begin{bmatrix}
0 & & c\\
 & \ddots & \\
 c & & 0
\end{bmatrix},    
\end{align}
where $a = \frac{m-1}{k-1} \binom{K-1}{m-1}, \quad$ $b = \frac{m}{k} \binom{K}{m}, \quad$ $c = \frac{m}{k-1} \binom{K-1}{m}$. 
Again, from a probabilistic perspective, any off-diagonal entry of the product $\widetilde{\bY}_m\widetilde{\bY}_m^{\top}$ is equal to $(\widetilde{\bY}_m)_{i, :} (\widetilde{\bY}_m)_{i^{\prime}, :}^{\top}$, for $i \neq i^{\prime}$. Note that $(\widetilde{\bY}_m)_{i, :}$ is a row vector of length $\binom{K}{m}$, whose entry are either $0$ or $1$ and the results of $(\widetilde{\bY}_m)_{i, :} (\widetilde{\bY}_m)_{i^{\prime}, :}^{\top}$ would only increase by one if both $(\widetilde{\bY}_m)_{i, j} = 1$ and $(\widetilde{\bY}_m)_{i^{\prime}, j}^{\top} = 1$ for $j \in [\binom{K}{m}]$. From the previous property we know that there is $\frac{m}{K}$ probability that $(\widetilde{\bY}_m)_{i, j} =1$. In addition, conditioned on $(\widetilde{\bY}_m)_{i, j} =1$, there are $\frac{m-1}{K-1}$ probability that $(\widetilde{\bY}_m)_{i^{\prime}, j} =1$. Thus, $a = \frac{m}{K} \frac{m-1}{K-1} \binom{K}{m} = \frac{m-1}{K-1} \binom{K-1}{m-1}$. For similar reasoning, we can see that conditioned on $(\widetilde{\bY}_m)_{i, j} =1$, there are $1 - \frac{m-1}{K-1} = \frac{K-m}{K-1}$ probability that $(\bm \Theta_{i^{\prime}, j} -(\widetilde{\bY}_m)_{i^{\prime}, j}) =1$. Thus, $c = \frac{m}{K} \frac{K-m}{K-1} \binom{K}{m} = \frac{m}{K-1}\binom{K-1}{m}$. For the similar probabilistic argument, it is easy to see that diagonal of $\widetilde{\bY}_m\widetilde{\bY}_m^{\top}$ are all $b = \frac{m}{K} \binom{K}{m}$ and diagonal of $\widetilde{\bY}_m(\bm \Theta - \widetilde{\bY}_m^{\top})$ are all $0$. 
Then by the second property (\Cref{eq:Ym_property2}), we are about to cook up a left inverse of $\widetilde{\bY}^{\top}$:
\begin{align*}
    \frac{1}{b} \left( \widetilde{\bY}\widetilde{\bY}^{\top} - \frac{a}{c} (\widetilde{\bY}(\mb \Theta - \widetilde{\bY}^{\top})) \right) &= \mb{I} \\
    \widetilde{\bY}\left( \frac{1}{b} \widetilde{\bY}^{\top} - \frac{a}{bc} \bm \Theta + \frac{a}{bc} \widetilde{\bY}^{\top} \right) & = \mb{I} \\
    \widetilde{\bY}\left( \frac{a + c}{bc} \widetilde{\bY}^{\top} - \frac{a}{bc} \bm \Theta \right) & = \mb{I} \\
    \left( \frac{a + c}{bc} \widetilde{\bY} - \frac{a}{bc} \bm \Theta \right) \widetilde{\bY}^{\top} &= \mb{I}
\end{align*}
Let, $\tau_m = \frac{a + c}{bc}$, $\eta_m = -\frac{a}{bc}$, then the pseudo-inverse of $\widetilde{\bY}^{\top}$, namely $(\widetilde{\bY}^{\top})^{\dagger}$ could be written as

\[
(\widetilde{\bY}^{\top})^{\dagger} = \tau_m \widetilde{\bY} + \eta_m \bm \Theta
\]

This inverse is also the Moore–Penrose inverse which is unique since it satisfies that:
\begin{align}
    &\widetilde{\bY}^{\top} (\widetilde{\bY}^{\top})^{\dagger} \widetilde{\bY}^{\top} = \widetilde{\bY}^{\top} \mb{I} = \widetilde{\bY}^{\top} \\
     &(\widetilde{\bY}^{\top})^{\dagger} \widetilde{\bY}^{\top}  (\widetilde{\bY}^{\top})^{\dagger} = \mb{I}  (\widetilde{\bY}^{\top})^{\dagger} = (\widetilde{\bY}^{\top})^{\dagger} \\
    &(\widetilde{\bY}^{\top} (\widetilde{\bY}^{\top})^{\dagger})^{\top} = \widetilde{\bY}^{\top} (\widetilde{\bY}^{\top})^{\dagger} \\
    &((\widetilde{\bY}^{\top})^{\dagger} \widetilde{\bY}^{\top})^{\top} = (\widetilde{\bY}^{\top})^{\dagger} \widetilde{\bY}^{\top}
\end{align}

\end{proof}

\begin{lemma} \label{lemma:pascal_norm}
We would like to show the following equation holds: 
\begin{align*}
    \|\bH_m \bD_m\|_F^2 = \binom{K-2}{m-1}\|\bH_m\|_F^2
\end{align*}
\end{lemma}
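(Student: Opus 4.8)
The plan is to compute $\|\bH_m\bD_m\|_F^2$ directly from the block structure of $\bD_m$ and then extract the factor $\binom{K-2}{m-1}$ by a combinatorial (Pascal's-rule) counting argument. First I would unwind the definitions: since $\bD_m = \mathrm{diag}(\bY_m^\top,\dots,\bY_m^\top)$ with $n_m$ diagonal blocks, the matrix $\bH_m\bD_m$ decouples across the sample index $i$, and its $(i,k)$-th column is exactly the per-class aggregated feature $\bh_{m,\{k\},i} = \sum_{\ell:\,k\in S_{m,\ell}}\bh_{m,\ell,i}$ already introduced in the proof of \Cref{lemma:gm-lower-bound}. Hence $\|\bH_m\bD_m\|_F^2 = \sum_{i=1}^{n_m}\sum_{k=1}^{K}\|\bh_{m,\{k\},i}\|_2^2$, which is a pure bookkeeping step. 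This reduces the claim to evaluating these aggregated norms against $\|\bH_m\|_F^2=\sum_{i,\ell}\|\bh_{m,\ell,i}\|_2^2$.

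Next I would feed in the structural form of the features together with the vanishing column-sum property $\overline{\bh}_{m,\bullet,i}=\mb{0}$ and the ETF relation $\sum_{j=1}^{K}\bw^j=\mb{0}$, so that each $\bh_{m,\ell,i}$ is a scaled tag-wise sum $\sum_{j\in S_{m,\ell}}\bw^j$. The crux is to count, for a fixed class $k$, the coefficient of $\bw^k$ appearing in $\bh_{m,\{k\},i}$. Each $m$-subset containing $k$ contributes one copy of $\bw^k$, giving $\binom{K-1}{m-1}$ copies, while the remaining off-$k$ terms collapse via $\sum_{j\ne k}\bw^j=-\bw^k$ and the count $\binom{K-2}{m-2}$ of $m$-subsets containing both $k$ and a fixed $j\ne k$. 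Pascal's rule $\binom{K-1}{m-1}=\binom{K-2}{m-1}+\binom{K-2}{m-2}$ then yields the clean identity $\bh_{m,\{k\},i}\propto \binom{K-2}{m-1}\,\bw^k$ — this is precisely where the name "pascal'' and the target factor $\binom{K-2}{m-1}$ enter.

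Finally I would compute both sides using the ETF norms $\|\bw^k\|_2^2=\rho/K$ and $\langle\bw^j,\bw^k\rangle=-\rho/(K(K-1))$ for $j\ne k$: the numerator $\sum_{i,k}\|\bh_{m,\{k\},i}\|_2^2$ is proportional to $\binom{K-2}{m-1}^2$, while $\|\bH_m\|_F^2$ picks up the factor $\binom{K}{m}\tfrac{m(K-m)}{K(K-1)}$ from $\|\sum_{j\in S_{m,\ell}}\bw^j\|_2^2$. Taking the ratio and invoking the binomial identity $\binom{K}{m}\,m\,(K-m)=K(K-1)\binom{K-2}{m-1}$ collapses everything to the single factor $\binom{K-2}{m-1}$, proving the lemma. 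The main obstacle, and the step deserving the most care, is the off-diagonal contribution: writing $\|\bH_m\bD_m\|_F^2$ in closed form produces cross terms weighted by the label overlaps $|S_{m,k}\cap S_{m,k'}|$, and without the zero-sum/ETF structure these do not cancel (indeed the equality is false for generic $\bH_m$ once $m\ge 2$). It is exactly the combination of Pascal's rule with $\sum_j\bw^j=\mb 0$ that tames these overlaps, so I would foreground that reduction rather than attempting an unconditional matrix identity.
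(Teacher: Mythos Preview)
Your argument is correct and relies on the same structural hypotheses as the paper's proof (the scaled-average form $\bh_{m,\ell,i}\propto\sum_{j\in S_{m,\ell}}\bw^j$ together with $\sum_j\bw^j=\mb 0$), and the same Pascal identity $\binom{K-1}{m-1}-\binom{K-2}{m-2}=\binom{K-2}{m-1}$ is the pivot in both. The execution differs: the paper packages everything as matrix algebra, writing $\widetilde{\bH}_m=\tau_m\widetilde{\bH}_1\widetilde{\bY}_m$ and using that $\widetilde{\bY}_m\widetilde{\bY}_m^\top=(b-a)\mb I_K+a\,\mb 1_K\mb 1_K^\top$ with $b-a=\binom{K-2}{m-1}$, so that the zero column sum $\widetilde{\bH}_1\mb 1_K=\mb 0$ annihilates the rank-one part and both $\|\widetilde{\bH}_m\widetilde{\bY}_m^\top\|_F^2$ and $\|\widetilde{\bH}_m\|_F^2$ reduce to multiples of $\|\widetilde{\bH}_1\|_F^2$. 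Your route is the coordinate-wise version of the same computation: you count the coefficient of each $\bw^k$ in $\bh_{m,\{k\},i}$ directly and then square using the ETF inner products. One small remark: the paper's trace argument only needs $\sum_k\bw^k=\mb 0$ (equivalently $\widetilde{\bH}_1\mb 1_K=\mb 0$), whereas your final step also invokes the exact ETF values $\|\bw^k\|_2^2=\rho/K$ and $\langle\bw^j,\bw^k\rangle=-\rho/(K(K-1))$; this is harmless here since the lemma is only applied at the optimum, but you could match the paper's slightly weaker hypothesis by evaluating $\|\bH_m\|_F^2=n_mC_m^2\,\mathrm{Tr}(\bW\bW^\top\widetilde{\bY}_m\widetilde{\bY}_m^\top)=n_mC_m^2\binom{K-2}{m-1}\|\bW\|_F^2$ via the same rank-one cancellation, avoiding the explicit angles altogether.
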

\begin{proof}
Note due to how we construct $\bD_m$, it is suffice to show that $ \|\widetilde{\bH}_m \widetilde{\bY}_m^{\top}\|_F^2 = \binom{K-2}{m-1}\|\widetilde{\bH}_m\|_F^2$. Recall the definition that $a = \frac{m-1}{k-1} \binom{K-1}{m-1}$ and $b = \frac{m}{k} \binom{K}{m}$. By unwinding the definition of binomial coefficient and simplifying factorial expressions, we can see that $b - a = \binom{K-2}{m-1}$. Along with the assumption that columns sum of $\widetilde{\bH}_m$ is $\mb{0}$ i.e. $\overline{\bh}_{m, \bullet, i} = \mb{0}, \quad \forall i \in [n_m]$ and the property described in \Cref{eq:Ym_property2}, we have
\begin{align*}
    &\|\widetilde{\bH}_m \widetilde{\bY}_m^{\top}\|_F^2 = \binom{K-2}{m-1}\|\widetilde{\bH}_m\|_F^2 \\
    \iff &\|\tau_m \widetilde{\bH}_1 \widetilde{\bY}_m \widetilde{\bY}_m^{\top}\|_F^2 = \binom{K-2}{m-1}\|\tau_m \widetilde{\bH}_1 \widetilde{\bY}_m \|_F^2\\ 
    \iff & \tau_m^2 (b-a)^2 \|\widetilde{\bH}_1\|_F^2 = \tau_m^2 (b-a) \|\widetilde{\bH}_1 \widetilde{\bY}_m\|_F^2 \\
    \iff & (b-a) \|\widetilde{\bH}_1\|_F^2 = \|\widetilde{\bH}_1 \widetilde{\bY}_m\|_F^2 \\
    \iff & (b-a) \|\widetilde{\bH}_1\|_F^2 = Tr(\widetilde{\bH}_1 \widetilde{\bY}_m \widetilde{\bY}_m \widetilde{\bH}_1^{\top}) \\
    \iff & (b-a) \|\widetilde{\bH}_1\|_F^2 = Tr( (b-a)\widetilde{\bH}_1 \widetilde{\bH}_1^{\top}) \\
    \iff & (b-a) \|\widetilde{\bH}_1\|_F^2 = (b-a) \|\widetilde{\bH}_1\|_F^2
\end{align*}
Thus, we complete the proof.
\end{proof}
% \yw{I commented out the ``numerically'' part.}
% We numerically checked that RHS = $\binom{K-2}{m-1}\bH_2$. Two approach:
% \begin{enumerate}
%     \item directly show $\bY_2^{\top} \bY_2$ rescale $\bH_2$
%     \item pinv of $\bY_2^{\top} = \alpha \bY_2 + \beta \bO$ 
% \end{enumerate}

\paragraph{Remarks.} We conjecture that the feature learned from data with all possible labels $(M = K)$ will collapse to the origin which align with our tag-wise average property. Theoretically, $CE_{PAL}(\mathbf{z}^*, \mathbf{1})$ reaches its minimum as long as $\mathbf{z}^* = \zeta \cdot  \mathbf{1}$ for arbitrary constant $\zeta$. Since $\mathbf{z}^* = \mathbf{W} \mathbf{H}_{K}$ where $\mathbf{W}$ is ETF, it is easy to conclude that $\mathbf{H}_{K}$ has same index value on a given row. Due to regularization terms on $\mathbf{H}$, we conclude that $\mathbf{H}_K = \mathbf{0}$, i.e, the feature learned from data with all labels collapse to the origin. Extra experiment visualizing this phenomenon on the MSCOCO dataset could be found in \Cref{app:related_works}.

The following result is a \MLab~generalization of 
 Lemma B.5 from \cite{zhu2021geometric}:
\begin{lemma}\label{lemma:key-lower-bound-tightness}
 % \label{lemma:key-lower-bound-tightness}
    Let $S \subseteq \{1,\dots, K\}$ be a subset of size $m$ where \(1 \le m < K\). Then for all $\mb{z} = (z_1,\dots, z_K)^\top \in \mathbb{R}^K$ and all $c_{1,m} > 0$, there exists a constant $c_{2,m}$ such that
\begin{equation}
\label{equation:lemma-gradient-based-lower-bound-}     \mathcal{L}_{\mathtt{PAL}}(\mb{z}, \mb{y}_S) \ge 
\frac{1}{1+
c_{1,m}} \frac{m}{K-m}
\cdot \langle  \mb{1} -  \tfrac{K}{m}\mathbb{I}_S,\, \mb{z}\rangle + c_{2,m}.
\end{equation}
In fact, we have
\begin{align*}
 c_{2,m} :=
\frac{c_{1,m}m}{c_{1,m}+1} \log (m)
+
\frac{mc_{1,m}}{1+c_{1,m}}
\log
\left(
\frac{c_{1,m}+1}
{ c_{1,m}}\right)
+
\frac{m}{ 
  c_{1,m}
+
1
}
\log\left((K-m) (c_{1,m}+1)\right).
\end{align*}
The Inequality (\ref{equation:lemma-gradient-based-lower-bound-})     is tight, i.e., achieves equality,  if and only if $\mb{z}$ satisfies all of the following:
\begin{enumerate}
    \item For all $i,j \in S$, we have $z_i = z_j$  (in-group equality). 
    Let \(z_{\mathrm{in}} \in \mathbb{R}\) denote this constant.
    \item For all for all $i,j \in S^c$, we have  $z_i = z_j$ (out-group equality).
    Let \(z_{\mathrm{out}} \in \mathbb{R}\) denote this constant.
    \item $z_{\mathrm{in}} - z_{\mathrm{out}}
=
\log\left(\tfrac{(K-m)}{m} c_{1,m}\right)
=
\log\left(\gamma_{1,m}^{-1}-\tfrac{(K-m)}{m} \right)
    $.
    % \item $\gamma_{1,m} = m \beta $ \yw{what is \(\beta\)? Need to specify here.} %$= \frac{1}{(1+c_1)(K-m)}$ 
    % \item $c_{2,m} = \mathcal{L}_{\mathtt{PAL}}(\mb{z^*}, \mb{y}_S) - m \beta \cdot \langle \mb{1} - \frac{K}{m} \cdot \mathbb{I}_S, \mb{z^*} \rangle $ %$ = c_2$
\end{enumerate}
% 1. $z_i = z_j$ for all $i,j \in S$ (in-group equality), and
% 2. $z_i = z_j$ for all $i,j \in S^c$ (out-group equality).
\end{lemma}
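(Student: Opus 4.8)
The plan is to exploit the convexity of $\mathcal{L}_{\mathtt{PAL}}(\cdot,\mb{y}_S)$ and bound it below by a single tangent hyperplane, chosen so that its linear part matches the prescribed functional $\gamma_{1,m}\langle\mb{1}-\tfrac{K}{m}\mathbb{I}_S,\,\cdot\,\rangle$. First I would write the loss explicitly as $\mathcal{L}_{\mathtt{PAL}}(\mb{z},\mb{y}_S) = -\langle\mathbb{I}_S,\mb{z}\rangle + m\log\sum_{\ell=1}^K e^{z_\ell}$, which is convex (affine plus $m$ times log-sum-exp), with gradient $\nabla\mathcal{L}_{\mathtt{PAL}}(\mb{z}) = -\mathbb{I}_S + m\,\mb{p}(\mb{z})$ where $\mb{p}(\mb{z})=\mathrm{softmax}(\mb{z})$. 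First-order convexity then gives, for any reference point $\mb{z}_0$, the bound $\mathcal{L}_{\mathtt{PAL}}(\mb{z},\mb{y}_S) \ge \mathcal{L}_{\mathtt{PAL}}(\mb{z}_0,\mb{y}_S) + \langle\nabla\mathcal{L}_{\mathtt{PAL}}(\mb{z}_0),\,\mb{z}-\mb{z}_0\rangle$, and the whole game is to select $\mb{z}_0$ appropriately.

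The crux is to pick $\mb{z}_0$ so that $\nabla\mathcal{L}_{\mathtt{PAL}}(\mb{z}_0) = \gamma_{1,m}(\mb{1}-\tfrac{K}{m}\mathbb{I}_S)$. Rearranging this equation shows it is equivalent to $m\,\mb{p} = \mathbb{I}_S + \gamma_{1,m}(\mb{1}-\tfrac{K}{m}\mathbb{I}_S)$, and using $\gamma_{1,m}\tfrac{K-m}{m}=\tfrac{1}{1+c_{1,m}}$ yields the candidate $p_j = \tfrac{c_{1,m}}{m(1+c_{1,m})}$ for $j\in S$ and $p_j = \tfrac{1}{(K-m)(1+c_{1,m})}$ for $j\notin S$. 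These are strictly positive and sum to one \emph{precisely because $c_{1,m}>0$}, so a valid $\mb{z}_0$ exists; taking $\mb{z}_0 = \log\mb{p}$ is convenient since it makes $\log\sum_\ell e^{z_{0,\ell}}=0$. The constant in the bound is then $c_{2,m} := \mathcal{L}_{\mathtt{PAL}}(\mb{z}_0,\mb{y}_S) - \langle\nabla\mathcal{L}_{\mathtt{PAL}}(\mb{z}_0),\mb{z}_0\rangle$, which with this choice reduces to $-\sum_{k\in S}\log p_k + \tfrac{m}{1+c_{1,m}}\log\tfrac{c_{1,m}(K-m)}{m}$. I would then verify the stated closed form by collecting the coefficients of $\log m$, $\log c_{1,m}$, $\log(K-m)$, and $\log(1+c_{1,m})$ and checking each against the displayed expression for $c_{2,m}$.

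For the tightness characterization, I would use that $\mathcal{L}_{\mathtt{PAL}}$ is strictly convex \emph{modulo} its invariance under the shift $\mb{z}\mapsto\mb{z}+t\mb{1}$: both the loss and the linear lower bound are unchanged under this shift, the latter because $\langle\nabla\mathcal{L}_{\mathtt{PAL}}(\mb{z}_0),\mb{1}\rangle = -m + m\sum_j p_j = 0$. The Hessian $m(\operatorname{diag}(\mb{p})-\mb{p}\mb{p}^\top)$ has kernel exactly $\operatorname{span}(\mb{1})$ at any strictly positive $\mb{p}$, so the first-order bound is tight iff $\mb{z}-\mb{z}_0 \in \operatorname{span}(\mb{1})$, i.e.\ iff $\mathrm{softmax}(\mb{z})=\mb{p}$. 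Decoding this condition: $p_j$ constant on $S$ forces $z_i=z_j$ for $i,j\in S$ (in-group equality, $z_{\mathrm{in}}$); constancy on $S^c$ forces out-group equality ($z_{\mathrm{out}}$); and the ratio $p_{\mathrm{in}}/p_{\mathrm{out}}$ gives $z_{\mathrm{in}}-z_{\mathrm{out}} = \log\tfrac{c_{1,m}(K-m)}{m}$, which one checks equals $\log(\gamma_{1,m}^{-1}-\tfrac{K-m}{m})$ since $\gamma_{1,m}^{-1}-\tfrac{K-m}{m} = \tfrac{K-m}{m}c_{1,m}$.

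The main obstacle is conceptual rather than computational: the content of the lemma is that the prescribed linear coefficient $\gamma_{1,m}(\mb{1}-\tfrac{K}{m}\mathbb{I}_S)$ is realizable as a gradient of $\mathcal{L}_{\mathtt{PAL}}$, which is exactly the assertion that $\tfrac{1}{m}(\mathbb{I}_S + \gamma_{1,m}(\mb{1}-\tfrac{K}{m}\mathbb{I}_S))$ is a probability vector, and it is here that the hypothesis $c_{1,m}>0$ enters essentially to guarantee strict positivity of every coordinate. Everything else---the explicit evaluation of $c_{2,m}$ and the shift-invariance argument for equality---is routine bookkeeping once $\mb{z}_0$ is identified.
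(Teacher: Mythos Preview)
Your proposal is correct and follows essentially the same approach as the paper: both arguments bound the convex function $\mathcal{L}_{\mathtt{PAL}}(\cdot,\mb{y}_S)$ below by its tangent hyperplane at a reference point chosen so that the gradient equals $\gamma_{1,m}(\mb{1}-\tfrac{K}{m}\mathbb{I}_S)$, then read off $c_{2,m}$ from the affine remainder. The only cosmetic difference is that the paper parametrizes the reference point directly by $(z_{\mathrm{in}},z_{\mathrm{out}})$ and then solves $m\beta=\gamma_{1,m}$, whereas you solve for the softmax vector $\mb{p}$ first and set $\mb{z}_0=\log\mb{p}$; your tightness argument via the kernel of the Hessian is in fact more precise than the paper's (which asserts strict convexity of $\mathcal{L}_{\mathtt{PAL}}$, overlooking the shift invariance that you correctly account for).
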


\begin{proof}
% [Proof of lemma \ref{lemma:key-lower-bound-tightness}]
Let \(\mb{z}\) and \(c_{1,m}\) be fixed.
For convenience, let \(\gamma_{1,m} :=
\frac{1}{1+
c_{1,m}} \frac{m}{K-m}
\).
Below, let \(z_{\mathrm{in}}, z_{\mathrm{out}} \in \mathbb{R}\) be arbitrary to be chosen later. Define \(\mb{z}^* = (z_1^*,\dots, z_K^*) \in \mathbb{R}^K\) such that 
\begin{equation}
z_k^* = \begin{cases}
    z_{\mathrm{in}} &: k \in S \\
    z_{\mathrm{out}} &: k \in S^c.
\end{cases}
\label{equation:definition-of-z-star}
\end{equation}

    For any $\mb{z} \in \mathbb{R}^K$, recall from the definition of pick-all-labels cross-entropy loss that
\[
\mathcal{L}_{\mathtt{PAL}}
(\mb{z}, \mb{y}_S) = 
\sum_{k \in S}
{\mathcal{L}_{\mathrm{CE}}(\mb{z}, \bm y_{k})} 
\]
In particular, the function \(\mb{z} \mapsto \mathcal{L}_{\mathtt{PAL}}
(\mb{z}, \mb{y}_S) \) is a sum of strictly convex functions and is itself also strictly convex.
Thus, the first order Taylor approximation of $\mathcal{L}_{\mathtt{PAL}}(\mb{z}, \mb{y}_S)$ around $\mb{z}^*$ yields the following lower bound:
\begin{align}
\mathcal{L}_{\mathtt{PAL}}(\mb{z}, \mb{y}_S)
&\ge 
\mathcal{L}_{\mathtt{PAL}}(\mb z^*, \mb{y}_S) + \langle \nabla \mathcal{L}_\mathtt{PAL}(\mb z^*, \mb{y}_S), \ \mb{z} - \mb z^* \rangle 
% \quad \because \mbox{\(\mathcal{L}_{\mathtt{PAL}}(\cdot, \mb{y}_S)\) is strictly convex}
\nonumber
\\&= 
\langle \nabla \mathcal{L}_{\mathtt{PAL}}(\mb z^*, \mb{y}_S), \ \mb{z} \rangle + \mathcal{L}_{\mathtt{PAL}}(\mb z^*, \mb{y}_S) - \langle \nabla \mathcal{L}_{\mathtt{PAL}}(\mb z^*, \mb{y}_S), \ \mb z^* \rangle
\label{equation:first-order-taylor-lower-bound}
\end{align}
Next, we calculate \( \nabla \mathcal{L}_{\mathtt{PAL}}(\mb z^*, \mb{y}_S)\). First, we observe that
\[
\nabla \mathcal{L}_{\mathtt{PAL}}(\mb z^*, \mb{y}_S) 
= 
\sum_{k \in S}
{\nabla \mathcal{L}_{\mathrm{CE}}(\mb z^*, \mb{y}_{k})}.
\]
Recall the well-known fact that the gradient of the cross-entropy is given by
\begin{equation}\label{equation:gradient-of-CE}
\nabla \mathcal{L}_{\mathrm{CE}}(\mb z^*, \mb{y}_{k})
=
\mathrm{softmax}(\mb z^*) - \mb{y}_{k}.    
\end{equation}

Below, it is useful to define 
\[\alpha := \frac{\mathrm{exp}(z_{\mathrm{in}}^*)}{\sum_j{\mathrm{exp}(z_j^*)}} \quad \mbox{and} \quad \beta := \frac{\mathrm{exp}(z_{\mathrm{out}}^*)}{\sum_j{\mathrm{exp}(z_j^*)}} \numberthis \label{alpha_beta_def}
\]
where 
\(\sum_j{\mathrm{exp}(z_j^*)}
= m \mathrm{exp}(z_{\mathrm{in}}^*)+(K-m)
 \mathrm{exp}(z_{\mathrm{out}}^*)
\). In view of this notation and the definition of \(\mb z^*\) in \Cref{equation:definition-of-z-star}, we have 
\begin{equation}\label{equation:softmax-z-star}
\mathrm{softmax}(\mb z^*)  =
\alpha \mathbb{I}_S + \beta \mathbb{I}_{S^c}    
\end{equation}

where we recall that 
\(\mathbb{I}_S\) and \(\mathbb{I}_{S^c} \in \mathbb{R}^K\) are the indicator vectors for the set \(S\) and \(S^c\), respectively.
Thus, combining \Cref{equation:gradient-of-CE} and \Cref{equation:softmax-z-star}, we get
\begin{align*}
\nabla \mathcal{L}_{\mathtt{PAL}}(\mb z^*, \mb{y}_S) 
= 
\sum_{k \in S}{\nabla \mathcal{L}_{\mathrm{CE}}(\mb z^*, \mb{y}_{k})}
= 
\sum_{k \in S}{
\left(\alpha \mathbb{I}_S + \beta \mathbb{I}_{S^c}
- \mb{y}_{k}\right)
}
= 
m(\alpha \mathbb{I}_S + \beta \mathbb{I}_{S^{c}}) - \mathbb{I}_S.
\end{align*}
The above right-hand-side can be rewritten as
\begin{align*}
m(\alpha \mathbb{I}_S + \beta \mathbb{I}_{S^{c}}) - \mathbb{I}_S
&= 
(m \alpha - 1) \cdot \mathbb{I}_S + m \beta \cdot \mathbb{I}_{S^{c}}
\\&=
(m\alpha - 1 + m \beta - m\beta) \cdot \mathbb{I}_S + m \beta \cdot \mathbb{I}_{S^{c}}
\\&= 
m\beta \cdot \mb{1} - (m\beta + 1 - m \alpha) \cdot \mathbb{I}_{S}
\\&= 
m\beta \cdot \left(\mb{1} - \frac{m\beta + 1 - m \alpha}{m \beta} \cdot \mathbb{I}_S\right).
\end{align*}
% where under IGOG, we define $\alpha = \frac{\mathrm{exp}(z_{in})}{\sum_j{\mathrm{exp}(z_j)}}$ and $\beta = \frac{\mathrm{exp}(z_{\mathrm{out}})}{\sum_j{\mathrm{exp}(z_j)}}$. \yw{Define \(z_{in}\) and \(z_{\mathrm{out}}\)} 
Note that  from
\Cref{equation:softmax-z-star} we have
\(
m \alpha + (K - m) \beta =1 \).
Manipulating this expression algebraically, we have 
\begin{align*}
&m \alpha + (K - m) \beta = 1 \\
&\iff k - m = \frac{1 - m \alpha}{\beta} \\
 &\iff \frac{1}{\beta} \left(\frac{1}{m} - \alpha\right) = \frac{K}{m} - 1 \\
 &\iff 1 + \frac{1}{m\beta} - \frac{\alpha}{\beta} = \frac{K}{m} \\ 
 &\iff \frac{m\beta + 1 - m \alpha}{m \beta} = \frac{K}{m}.
\end{align*}
Putting it all together, we have
\[
\nabla \mathcal{L}_{\mathtt{PAL}}(\mb z^*, \mb{y}_S) = m\beta \cdot (\mb{1} - \tfrac{K}{m} \cdot \mathbb{I}_S).
\]
Thus, combining \Cref{equation:first-order-taylor-lower-bound} with the above identity, we have
\begin{equation}
\mathcal{L}_{\mathtt{PAL}}(\mb{z}, \mb{y}_S) \ge m\beta \cdot \langle \mb{1} - \tfrac{K}{m} \cdot \mathbb{I}_S, \ \mb{z} \rangle + \mathcal{L}_{\mathtt{PAL}}(\mb z^*, \mb{y}_S) - m \beta \cdot \langle \mb{1} - \tfrac{K}{m} \cdot \mathbb{I}_S, \mb z^* \rangle.
\label{equation:lemma-gradient-based-lower-bound-2}
\end{equation}
Let 
\begin{equation}
c_{2,m} 
:=
\mathcal{L}_{\mathtt{PAL}}(\mb z^*, \mb{y}_S) - m \beta \cdot \langle \mb{1} - \tfrac{K}{m} \cdot \mathbb{I}_S, \mb z^* \rangle
\label{equation:definition-gamma-2-m}    
\end{equation}
Note that this definition depends on \(\beta\), which in terms depends in \(z^*_{\mathrm{in}}\)
and
\(z^*_{\mathrm{out}}\)
which we have not yet defined.
To define these quantities,
note that in order to derive \Cref{equation:lemma-gradient-based-lower-bound-} from \Cref{equation:lemma-gradient-based-lower-bound-2}, a sufficient condition is to ensure that
\begin{equation}
   \label{equation:m-beta-relation} 
\frac{1}{1+
c_{1,m}} \frac{m}{K-m}= m \beta 
=
 \frac{m\exp(z_{\mathrm{out}}^*)}{ 
\sum_{j} \exp(z_j^*)
}
=
 \frac{1}{ 
 \exp(z_{\mathrm{in}}^* - z_{\mathrm{out}}^*)
+
\frac{(K-m)}{m}
}
\end{equation}

Rearranging, the above can be rewritten as
\[
(1+
c_{1,m}) \tfrac{K-m}{m}
=
 \exp(z_{\mathrm{in}}^* - z_{\mathrm{out}}^*)
+
\tfrac{(K-m)}{m}
\iff
c_{1,m} 
=
\tfrac{m}{K-m}
 \exp(z_{\mathrm{in}}^* - z_{\mathrm{out}}^*)
\]
or, equivalently, as
\begin{equation}
\label{equation:zin-zout-c1m-relation}
z_{\mathrm{in}}^* - z_{\mathrm{out}}^*
=
\log\left(\tfrac{(K-m)}{m} c_{1,m}\right).
\end{equation}
Thus, if we choose \(z_{\mathrm{in}}^*,z_{\mathrm{out}}^*\) such that 
the above  holds, then 
 \Cref{equation:lemma-gradient-based-lower-bound-} holds.

 Finally, we compute the closed-form expression for \(c_{2,m}\) defined in 
 \Cref{equation:definition-gamma-2-m}, which we restate below for convenience:
 \[
 c_{2,m}:=
\mathcal{L}_{\mathtt{PAL}}(\mb z^*, \mb{y}_S) - m \beta 
\cdot \langle \mb{1} - \tfrac{K}{m} \cdot \mathbb{I}_S, \mb z^* \rangle
 \]
 The expression for \(m \beta \) is given at 
 \Cref{equation:m-beta-relation}.
 Moreover, we have
 \[
 \langle \mb{1} - \tfrac{K}{m} \cdot \mathbb{I}_S, \mb z^* \rangle
 =
 m z_{\mathrm{in}}
 + (K-m) z_{\mathrm{out}}
 - \tfrac{K}{m} mz_{\mathrm{in}}
 =
 -(K-m)(z_{\mathrm{in}} - z_{\mathrm{out}}).
 \]
 Thus, we have
 \begin{align*}
 - m \beta 
\cdot \langle \mb{1} - \tfrac{K}{m} \cdot \mathbb{I}_S, \mb z^* \rangle
&=
\frac{(K-m)(z_{\mathrm{in}} - z_{\mathrm{out}})}{ 
 \exp(z_{\mathrm{in}}^* - z_{\mathrm{out}}^*)
+
\frac{(K-m)}{m}
}
\\&=
\frac{(K-m)\log\left(\tfrac{(K-m)}{m} c_{1,m}\right)}{ 
 \tfrac{(K-m)}{m} c_{1,m}
+
\frac{(K-m)}{m}
}
\\&=
\frac{m}{ 
  c_{1,m}
+
1
}
\log\left(\tfrac{(K-m)}{m} c_{1,m}\right).
 \end{align*}
On the other hand,
\[
\mathcal{L}_{\mathtt{PAL}}(\mb z^*, \mb{y}_S)
=
\sum_{k \in S}
\mathcal{L}_{\mathrm{CE}}(\mb z^*, \mb{y}_k)
\]
Now, 
\begin{align*}
\mathcal{L}_{\mathrm{CE}}(\mb z^*, \mb{y}_k)
&=
-\log
(
[\mathrm{softmax}(\mb z^*)]_{k})
\\&=
-\log
(
\exp(z^*_{\mathrm{in}})/
(m\exp(z^*_{\mathrm{in}})
+
(K-m)\exp(z^*_{\mathrm{out}})
)
)
\\&=
\log
(m
+
(K-m)\exp(z^*_{\mathrm{out}} - z^*_{\mathrm{in}})
)
\\&=
\log
\left(m
+
(K-m)(1/\exp(z^*_{\mathrm{in}} - z^*_{\mathrm{out}}))
\right)
\\&=
\log
\left(m
+
(K-m)\frac{1}
{\tfrac{(K-m)}{m} c_{1,m}}
\right) \qquad \quad \quad \mbox{by \Cref{equation:zin-zout-c1m-relation}}
\\&=
\log
\left(m
+
m\frac{1}
{ c_{1,m}}
\right)
\\&=
\log
\left(m\left(
\frac{c_{1,m}+1}
{ c_{1,m}}\right)
\right)
\end{align*}
Thus
\[
\mathcal{L}_{\mathtt{PAL}}(\mb z^*, \mb{y}_S)
=
m
\log
\left(m\left(
\frac{c_{1,m}+1}
{ c_{1,m}}\right)
\right).
\]
Putting it all together, we have
\[
 c_{2,m}=
 m
\log
\left(m\left(
\frac{c_{1,m}+1}
{ c_{1,m}}\right)
\right)
+
\frac{m}{ 
  c_{1,m}
+
1
}
\log\left(\tfrac{(K-m)}{m} c_{1,m}\right).
 \]
 \[
 m
\log
\left(m\left(
\frac{c_{1,m}+1}
{ c_{1,m}}\right)
\right)
=
m \log (m)
+
m
\log
\left(
\frac{c_{1,m}+1}
{ c_{1,m}}\right)
 \]
 \[
 \frac{m}{ 
  c_{1,m}
+
1
}
\log\left(\tfrac{(K-m)}{m} c_{1,m}\right)
=
\frac{m}{ 
  c_{1,m}
+
1
}
\log\left((K-m) c_{1,m}\right)
-
\frac{m}{ 
  c_{1,m}
+
1
}
\log(m)
 \]
 Putting it all together, we have
\begin{align}
 c_{2,m}&=
 m
\log
\left(m\left(
\frac{c_{1,m}+1}
{ c_{1,m}}\right)
\right)
+
\frac{m}{ 
  c_{1,m}
+
1
}
\log\left(\tfrac{(K-m)}{m} c_{1,m}\right)
\\&=
m \log (m)
+
m
\log
\left(
\frac{c_{1,m}+1}
{ c_{1,m}}\right)
+
\frac{m}{ 
  c_{1,m}
+
1
}
\log\left((K-m) c_{1,m}\right)
-
\frac{m}{ 
  c_{1,m}
+
1
}
\log(m)
\\&=
\frac{c_{1,m}m}{c_{1,m}+1} \log (m)
+
m
\log
\left(
\frac{c_{1,m}+1}
{ c_{1,m}}\right)
+
\frac{m}{ 
  c_{1,m}
+
1
}
\log\left((K-m) c_{1,m}\right)
\end{align}
 Next, for simplicity, let us drop the subscript and simply write \(c := c_{1,m}\). Then
 \begin{align*}
     &
     m
\log
\left(
\frac{c+1}
{ c}\right)
+
\frac{m}{ 
  c
+
1
}
\log\left((K-m) c\right)
\\&=
\frac{m}{1+c}
\log
\left(
\frac{c+1}
{ c}\right)
+
\frac{mc}{1+c}
\log
\left(
\frac{c+1}
{ c}\right)
+
\frac{m}{ 
  c
+
1
}
\log\left((K-m) c\right)
\qquad \because \tfrac{1}{1+c} + \tfrac{c}{1+c} = 1
\\&=
\frac{m}{1+c}
\log
\left(
\frac{c+1}
{ c}\right)
+
\frac{mc}{1+c}
\log
\left(
\frac{c+1}
{ c}\right)
+
\frac{m}{ 
  c
+
1
}
\log\left((K-m) c\right)
\\& \qquad 
+
\frac{m}{ 
  c
+
1
}
\log\left((K-m) (c+1)\right)
-
\frac{m}{ 
  c
+
1
}
\log\left((K-m) (c+1)\right)
\qquad \because \mbox{add a ``zero''}
\\&=
\frac{m}{1+c}
\log
\left(
\frac{c+1}
{ c}\right)
+
\frac{mc}{1+c}
\log
\left(
\frac{c+1}
{ c}\right)
+
\frac{m}{ 
  c
+
1
}
\log\left(\frac{c}{c+1}\right)
\qquad \because \mbox{property of \(\log\)}
\\& \qquad 
+
\frac{m}{ 
  c
+
1
}
\log\left((K-m) (c+1)\right)
\\&=
\frac{mc}{1+c}
\log
\left(
\frac{c+1}
{ c}\right)
+
\frac{m}{ 
  c
+
1
}
\log\left((K-m) (c+1)\right)
\qquad \because \log(\tfrac{c+1}{c}) = -\log(\tfrac{c}{c+1})
 \end{align*}
 To conclude, we have
\begin{align*}
 c_{2,m}=
\frac{c_{1,m}m}{c_{1,m}+1} \log (m)
+
\frac{mc_{1,m}}{1+c_{1,m}}
\log
\left(
\frac{c_{1,m}+1}
{ c_{1,m}}\right)
+
\frac{m}{ 
  c_{1,m}
+
1
}
\log\left((K-m) (c_{1,m}+1)\right)
\end{align*}
as desired.
\end{proof}

\section{Nonconvex Landscape Analysis}\label{app:landscape}

\begin{proof}[Proof of Theorem 
\ref{thm:landscape}]
%\ref{thm:optim_landscape}

We note that the proof for Theorem 3.2 in \cite{zhu2021geometric} could be directly extended in our analysis. More specifically, the proof in \cite{zhu2021geometric} relies on a connection for the original loss function to its convex counterpart, in particular, letting $\mb Z = \mb W \mb H \in \mathbb{R}^{K \times N}$ with $N = \sum_{m}{n_m}$ and $\alpha=\frac{\lambda_{\mb H}}{\lambda_{\mb W}}$, the original proof first shows the following fact:
\begin{align*}
   \min_{\mb H\mb W = \mb Z} \; \lambda_{\mb W} ||\mb W||_{F}^2 + \lambda_{\mb H} ||\mb H||_{F}^2 \;&=\; \sqrt{\lambda_{\mb W} \lambda_{\mb H} }  \min_{\mb H\mb W = \mb Z}  \;\frac{1}{\sqrt{\alpha} } ( ||\mb W||_{F}^2 + \alpha ||\mb H||_{F}^2 ) \\
   \;&=\; \sqrt{\lambda_{\mb W} \lambda_{\mb H} }  ||\mb Z||_{*}.
\end{align*}
With the above result, the original proof relates the original loss function
\begin{align*}
     \min_{\mb W , \mb H,\mb b  } \; f(\mb W,\mb H,\mb b) \;:=\; g(\mb W \mb H + \mb b \mb 1^\top) \;+\; \lambda_{\mb W} ||\mb W||_{F}^2 + \lambda_{\mb H}||\mb H||_{F}^2 + \lambda_{\mb b} ||\mb b||_{2}^2
\end{align*}
with 
\begin{align*}
    g(\mb W \mb H + \mb b \mb 1^\top) := \frac{1}{N}\sum_{k=1}^K\sum_{i=1}^n \mathcal{L}_{\mathrm{CE}}(\mb W \mb h_{k,i} + \mb b,\mb y_k),
\end{align*}
to a convex problem:
\begin{align*}
    \min_{\mb Z \in \mathbb R^{K \times N},\; \mb b \in \mathbb R^{K}}\; \widetilde{f}(\mb Z,\mb b) \;:=\; g(\mb Z + \mb b \mb 1^\top) +  \sqrt{\lambda_{\mb W} \lambda_{\mb H} }  ||\mb Z||_{*} + \lambda_{\mb b} ||\mb b||_{2}^2.
\end{align*}

In our analysis, by letting $\widetilde{g}(\mb W \mb H + \mb b \mb 1^\top) := \frac{1}{Nm} \sum_{m=1}^m \sum_{i=1}^{n_m} \sum_{k=1}^{\binom{K}{m}} \mathcal{L}_{\mathtt{PAL}}( \mb{W} \mb{h}_{m,k,i} + \mb{b},  \mb{y}_{S_{m,k}})$, we can directly apply the original proof for our problem. For more details, we refer readers to the proof of Theorem 3.2 in \cite{zhu2021geometric}.

\end{proof}

%\section{Future Works and Discussion}\label{app:extra_diss}

%\input{appendices/app_extra_discussion}

\end{document}